\documentclass[letterpaper,journal]{IEEEtran}
\usepackage{amsmath,amsfonts}
\usepackage{algorithmic}
\usepackage{algorithm}
\usepackage{array}
\usepackage[caption=false,font=normalsize,labelfont=sf,textfont=sf]{subfig}
\usepackage{textcomp}
\usepackage{stfloats}
\usepackage{url}
\usepackage{verbatim}
\usepackage{graphicx}
\usepackage{cite}
\hyphenation{op-tical net-works semi-conduc-tor IEEE-Xplore}
 

\abovedisplayskip=12pt plus 3pt minus 9pt
\abovedisplayshortskip=0pt plus 3pt
\belowdisplayskip=12pt plus 3pt minus 9pt
\belowdisplayshortskip=7pt plus 3pt minus 4pt


%
\usepackage{newfloat}
\usepackage{listings}
\usepackage{enumitem}

\usepackage{float} 



\newcommand{\edit}[1]{#1}
\makeatletter

\newcommand{\Rmnum}[1]{\expandafter\@slowromancap\romannumeral #1@}
\makeatother

\usepackage{amsmath,amsfonts}
\usepackage{cancel}
\usepackage{graphicx}
\usepackage{textcomp}
\usepackage{dsfont}
\usepackage{xcolor}
\usepackage{bm}
\usepackage{amsmath}

\usepackage{etoolbox}
\usepackage{ifthen}
\newboolean{showcomments}
\setboolean{showcomments}{true}



\usepackage{verbatim}
\usepackage{booktabs}
\usepackage[mathscr]{euscript}
\usepackage{verbatim}
\usepackage{amssymb,amsmath,color,graphicx, amsbsy, bm}
\captionsetup{font=footnotesize}
\usepackage{epsfig}
\usepackage{amsthm}

\usepackage{xcolor}


\usepackage{algorithmic}       
\usepackage{amsmath}
\usepackage{graphicx}
\usepackage{booktabs}
\usepackage[flushleft]{threeparttable}
\usepackage{multirow}
\usepackage{amsthm}
\usepackage{graphicx,booktabs,multirow}
\usepackage{stmaryrd}
\usepackage{multirow,url}
\usepackage{algorithm}
\usepackage{color}

\usepackage{mathtools}


 

\newtheorem*{kquestion}{Key Question}
\newtheorem{theorem}{Theorem}

\newtheorem{proposition}{Proposition}
\newtheorem{lemma}{Lemma}

\newtheorem{definition}{Definition}
\newtheorem{assumption}{Assumption}





\def\K{\mathcal{K}}
\def\I{\mathcal{I}}
\def\N{\mathcal{N}}
\def\M{\mathcal{M}}

\def\H{\mathcal{H}}

\def\s{\boldsymbol{s}}

\def\As{\mathcal{A}}

\def\task{k}


\newcommand\norm[1]{\lVert#1\rVert}

\def\so{\boldsymbol{s}^{\textsc{O}}}

\def\ba{\boldsymbol{a}^{\textsc{h}}}
\def\ao{\boldsymbol{a}^{\textsc{O}}}

\def\pio{{\pi}^{\textsc{O}}}

\def\ba{\boldsymbol{a}}
\def\bs{\boldsymbol{s}}
\def\bc{\boldsymbol{c}}
\def\bN{\boldsymbol{N}}
\def\bF{\boldsymbol{F}}
\def\bG{\boldsymbol{G}}
\def\bW{\boldsymbol{W}}
\def\bD{\boldsymbol{D}}
\def\bQ{\boldsymbol{Q}}

\def\Ao{\mathcal{A}^{\textsc{O}}}
\def\Do{\mathcal{D}^{\textsc{O}}}

\def\bD{\boldsymbol{D}}
\def\bV{\boldsymbol{V}}

\def\bgamma{\boldsymbol{\gamma}}

\def\q{\boldsymbol{q}}
\def\pii{\boldsymbol{\pi}}

\newcommand{\rev}[1]{#1}
\newcommand{\revb}[1]{#1}
\newcommand{\revbb}[1]{#1}
\newcommand{\revr}[1]{#1}

\newcommand{\com}[1]{}
\newcommand{\comg}[1]{}
\newcommand{\response}[1]{}
\newcommand{\deleting}[1]{}

\newif\ifreport\reporttrue

\ifreport
\else
\includeonly{}
\fi

\begin{document}

\title{Asynchronous Fractional Multi-Agent Deep Reinforcement Learning for \\ Age-Minimal  Mobile Edge Computing}

\author{
    Lyudong Jin,
    Ming Tang,  ~\IEEEmembership{Member,~IEEE},
    Jiayu Pan,  ~\IEEEmembership{Member,~IEEE},
    Meng Zhang,  ~\IEEEmembership{Member,~IEEE},
    Hao Wang,  ~\IEEEmembership{Member,~IEEE}
    \thanks{
    This work is supported in part by the National Natural Science Foundation of China under grant Nos. 62572433, and in part by the National Natural Science Foundation of China under grant No. 62402435,
by Ningbo Yongjiang Talent Programme 2023A-398-G, and by Natural Science Foundation of Ningbo 2024J205. (Corresponding author: Meng Zhang.)
    
    Lyudong Jin and Meng Zhang are with the Zhejiang University—University of Illinois at Urbana–Champaign Institute, Zhejiang University, Haining 314400, China (e-mail: 3180101183@zju.edu.cn; mengzhang@intl.zju.edu.cn). 
    
      Ming Tang is with the Department of Computer Science and Engineering, Southern University of Science and Technology, Shenzhen 518000, China (e-mail: tangm3@sustech.edu.cn).

      J. Pan is with the School of Software Technology, Zhejiang University,
Hangzhou, China (e-mail: jiayupan26@zju.edu.cn).
    
    Hao Wang is with the Department of Data Science
    and AI, Faculty of Information Technology, Monash University, Melbourne, Victoria 3800, Australia (e-mail: hao.wang2@monash.edu).}
    \thanks{Part of this paper was presented in AAAI-24 \cite{jin2024fractional}.}
}



\maketitle

\begin{abstract}
\edit{In the realm of emerging real-time networked applications such as cyber-physical systems (CPS), the \textit{Age of Information (AoI)} has emerged as a pivotal metric for evaluating timeliness.}
\rev{To meet the high computational demands, such as those in smart manufacturing within CPS, mobile edge computing (MEC) presents a promising solution for optimizing computing and reducing AoI.}
\rev{In this work, we study the timeliness of compute-intensive updates and explore the joint optimization of task-updating and offloading policies to minimize AoI. }
We consider edge load dynamics and formulate a task scheduling problem to minimize the expected time-average AoI.
\rev{Solving this problem is challenging due to the fractional objective introduced by AoI and asynchronous decision-making in the semi-Markov game (SMG).
 }
To this end, we propose a fractional reinforcement learning (RL) framework. We begin by introducing a fractional single-agent RL framework and establish its linear convergence rate. \revb{Building on this, we develop a fractional multi-agent RL framework, extend Dinkelbach's method, and demonstrate its equivalence to the inexact Newton's method.} \revbb{Furthermore, we provide the conditions under which the framework achieves local linear convergence to the Nash equilibrium (NE). }
To tackle the challenge of  asynchronous decision-making in the SMG, we further design an asynchronous 
 model-free fractional \rev{multi-agent RL} algorithm, where each mobile device can determine the task updating and offloading decisions  without knowing the real-time system dynamics and decisions of other devices.
\revb{Compared with the best existing baseline, the proposed algorithm reduces the average AoI by up to $42.0\%$.
} 
\end{abstract}

\begin{IEEEkeywords}
Age of Information, Mobile Edge Computing,  Nash Equilibrium, Reinforcement Learning, Multi-Agent Reinforcement Learning
\end{IEEEkeywords}

\section{Introduction}

\subsection{Background and Motivations}
Real-time embedded systems and mobile devices, including smartphones, IoT devices, and wireless sensors, have rapidly proliferated. This growth is driven by cyber-physical systems (CPS) applications  including collaborative autonomous driving for autonomous logistics (e.g., \cite{katare2023survey}), smart energy systems (e.g., \cite{xu2023power}), and smart manufacturing (e.g., \cite{jin2023cloud}). \revbb{These applications generate massive data and require intensive computational processing, which in turn creates a pressing need for low-latency, reliable, and private task execution.}  Mobile edge computing (MEC), also known as multi-access edge computing \cite{porambage2018survey}, addresses these needs by offloading computational tasks from end devices to nearby edge servers  \cite{mao2017survey}. \revb{Thus, MEC reduces latency compared to cloud computing and provides larger computational capacity than that of mobile computing.}

\revbb{Timely information updates are crucial for the emerging  CPS applications with high computational demands.} These updates depend on real-time sensory data and computational results, such as human interaction and safety data, environmental data, and positioning and motion data.
For instance, critical applications (e.g., \cite{zhu2018vehicular,xu2020vehicular}), such as collaborative robots in smart manufacturing that operate alongside humans, require split-second decision-making to ensure safety and efficiency, relying on computationally intensive tasks like object detection, path planning, and task execution \cite{el2019cobot}.
However, local computing often struggles to meet these stringent timeliness requirements, potentially compromising system performance and safety. This need for data freshness has driven the development of a new metric, \textit{Age of Information (AoI)} \cite{kaul2012real}, which quantifies the time elapsed since the most recently delivered data or computational results were produced, providing an accurate measure of information timeliness.

\IEEEpubidadjcol

\rev{While numerous prior studies on MEC have focused on minimizing delay (e.g., \cite{wang2021delay,Tang2020TMC}), they often overlooked AoI, which is crucial for real-time applications.  
Here we highlight the huge difference between delay and AoI. Task delay measures the duration between task generation and task output reception. \revb{With less frequent updates (i.e.,
when tasks are generated at a lower frequency), task delays are naturally smaller due to reduced queuing delays from empty queues.}   In contrast, AoI considers both task delay and the output freshness. 
To minimize AoI for compute-intensive tasks, the update frequency must be balanced to reduce individual task delays while ensuring the freshness of the most recent task output.  This difference between delay and AoI reveals a counterintuitive phenomenon in age-minimal scheduling: mobile devices may need to wait before generating new tasks after receiving a task output.  }
\edit{A key bottleneck in CPS is the requirement for timely operation. Although low-latency communication is necessary, research has shown that it alone cannot guarantee timely system performance (see \cite{AoIsurvey,Talak2021TIT}). 
}

\revbb{Hence, AoI is a more suitable metric than latency for timely updates in CPS systems.}

In this paper, we aim to answer the key question as follows.
\begin{kquestion}
\revr{How should mobile devices optimize their updating and offloading policies in real-time MEC systems to minimize  AoI?}
\end{kquestion}

%

\subsection{Challenges}

Optimizing AoI in MEC systems with time-varying channels requires a sophisticated  scheduling policy for mobile devices, involving two key decisions: updating and offloading.  The \textit{updating} decision determines the  interval between task completion and the initiation of the next task generation, while the \textit{offloading} decision decides whether to process the task locally or offload it to a specific edge server. Given the success of multi-agent reinforcement learning (MARL)  in MEC systems \cite{hao2023exploration}, we aim to design a  fractional MARL framework with theoretical analysis and propose an asynchronous fractional multi-agent deep reinforcement learning (MADRL)   to optimize the task scheduling policy. \edit{The main challenges are as follows.}

    \textit{How to deal with the fractional nature of the AoI objective in RL?}  Many prior works on MEC have explored offloading  (e.g., \cite{Tang2020TMC,ma2022green,zhao2022multi,he2022age,chen2022info}),  and some have considered \rev{AoI (e.g.,} \cite{xu2022aoi,he2022age,chen2022info}). \edit{However,  they did not consider the  fractional AoI objective.
    The fractional RL framework focuses on optimizing a fraction-of-expectation objective, making it challenging to directly evaluate the impact of actions on the objective, unlike the reward-based approach in the standard RL framework.
Real-time MEC systems require a fractional RL framework to address the fractional AoI objective.} 

    \revb{\textit{How can we adapt the fractional framework to a multi-agent scenario?}} \revb{Many MEC systems involve a decision-making process among multiple mobile devices.}  Effectively addressing this multi-agent paradigm in MARL  is important for task scheduling in multi-agent systems.  \edit{However, the fractional framework for a single agent cannot be directly adapted to the multi-agent scenario due to the increased complexity and inter-dependencies among agents.} 
    Thus, the MARL framework faces additional challenges in designing the fractional framework for the AoI objective. 
    
    \textit{How to address the asynchronous decision problem in real-time MEC systems?}  Traditional MADRL algorithms, such as multi-agent deep deterministic policy gradient (MADDPG) \cite{lowe2017multi}, assume that agents make decisions synchronously. \rev{However, in multi-agent MEC systems, each agent (mobile device) has different task update and processing durations,
 leading to asynchronous decision-making among agents.
    \revbb{Traditional MADRL algorithms including QMIX \cite{rashid2020monotonic} and MAPPO \cite{yu2022surprising} would be inefficient for  asynchronous control  in the MEC scheduling problem, motivating new approaches that can explicitly handle asynchronous agent interactions. } }

\subsection{Solution Approaches and Contributions}


\revbb{ In this paper, we address AoI minimization in MEC through a unified framework for joint task updating and offloading. The novelty lies in three connected aspects: a new asynchronous SMG formulation, a fractional multi-agent learning framework with convergence analysis, and an asynchronous algorithmic design. This setting cannot be addressed by directly applying delay-oriented MEC methods or standard synchronous MARL algorithms.}

Our main contributions are summarized as follows:

\begin{itemize}
    \item \revbb{ We formulate an age-minimal MEC problem with joint task updating and offloading, where multiple mobile devices interact through an asynchronous SMG. This formulation captures both the fractional nature of the AoI objective and the asynchronous decision process induced by heterogeneous task updating and completion times. To the best of our knowledge, this is the first work to develop multi-agent asynchronous policies for joint updating and offloading optimization in age-minimal MEC.}

     \item \revbb{We develop a fractional RL/MARL framework for the AoI ratio objective. In the single-agent case, we combine RL with Dinkelbach's method and prove linear convergence. In the multi-agent case, we extend the framework to Markov games, establish the existence of a Nash equilibrium, interpret the outer-loop update as an inexact Newton step, and derive sufficient conditions for local linear convergence. This provides a principled alternative to directly applying existing MARL methods to MEC and offers a new framework for fractional AoI optimization in multi-agent systems.}

   \item \revbb{We propose an asynchronous fractional MADRL algorithm under the CTDE paradigm. The algorithm integrates asynchronous trajectory collection with GRU-based history aggregation to handle asynchronous interactions and hybrid  actions, while enabling decentralized execution using only local information. The asynchronous design is therefore directly motivated by the MEC scheduling dynamics considered in this paper.}

     \item \revbb{Extensive experiments validate the proposed framework and show that the proposed algorithm significantly outperforms representative baselines, reducing the average AoI by up to $42.0\%$.}
\end{itemize}

\section{Literature Review}\label{sec:review}

\textbf{Mobile Edge Computing}:
\rev{Existing  MEC research has explored various areas, including resource allocation  \cite{10.1145/3597023}, service placement \cite{taka2022service}, proactive caching \cite{liu2022distributed}, and task offloading \cite{wang2022decentralized}.
Many studies have  proposed RL-based approaches to optimize  task delays in a centralized manner (e.g., \cite{huang2020deep,tuli2022dynamic}) or in a decentralized manner (e.g., \cite{Tang2020TMC,liu2022deep}).}  
\rev{ \emph{Despite the success  in reducing the task delay, these approaches are NOT easily applicable to age-minimal MEC due to the fractional objective and asynchronous decision-making.} }

\textbf{Multi-Agent RL in MEC:} Multi-Agent RL has been widely adopted in multi-agent approaches in a decentralized manner in MEC systems. Li \textit{et al.} proposed a decentralized edge server grouping algorithm and achieved NE by proving it to be an exact potential game \cite{li2021user}. Chen \textit{et al.} addressed the task offloading in the information freshness-aware air-ground integrated multi-access edge computing by letting each non-cooperative mobile user behave independently with local conjectures utilizing double deep Q-network. Feng \textit{et al.} utilized a gating threshold to \rev{intelligently} choose between local and global observations and limit the information transmission for approximating  Nash equilibrium in an anti-jamming Markov game \cite{FENG2023330}. \rev{However, these MARL approaches have not considered the fractional objective when reaching NE.}

\textbf{Age of Information:} \revb{ Since its introduction by Kaul \textit{et al.}  \cite{kaul2012real}, AoI has attracted increasing attention in the study of CPS (e.g., \cite{yuan2023aoi,fu2023aoi}) in recent years. However, the majority of works have focused on the optimization of AoI in queueing systems and wireless networks, assuming the availability of  complete and known statistical information (see \cite{AoISurvey2021,AoIMEC1}).}

  \revb{Additionally, recent works have characterized the optimal sampling policies in single-source MEC systems \cite{zhu2024optimizing} and further extended the analysis to joint sampling and scheduling optimization in multi-source systems \cite{zhu2025multi}.
  However, the above studies analyzed either single-device or single-server scenarios and are difficult to adapt to the complex, time-varying dynamics of multi-user MEC systems where channel states and edge loads fluctuate unpredictably.  }
A few studies have investigated RL algorithm designs to minimize AoI in various application scenarios, including wireless networks \cite{ceran2021reinforcement},  Internet-of-Things  \cite{AoIDeep2},  autonomous driving \cite{xu2022aoi}, vehicular networks  \cite{AoIDeep1}, and UAV-aided networks  \cite{UAVDeep2}. 
They mainly focused on \rev{optimizing} resource allocation and trajectory design.
Some works  
considered AoI as the performance metric for task offloading  in MEC and proposed RL-based approaches. 
For instance, Chen \textit{et al.} in \cite{chen2022info} considered AoI to capture the freshness of computation outcomes and proposed a multi-agent RL algorithm. 
\emph{However, these works focused solely on designing task offloading policies without jointly optimizing updating policies.}
\textit{\revb{Note that none of these approaches have considered fractional RL and are difficult to directly address the problem  in this paper.}}

\textbf{RL with Fractional Objectives:} \revb{There is limited research on RL with fractional objectives.}  Ren \textit{et al.} introduced fractional MDP \cite{1431043}. Tanaka \textit{et al.} \cite{doi:10.1080/02522667.2015.1105525} further studied partially observed MDPs with fractional costs. However, these studies have not been extended to an RL framework. \rev{Suttle \textit{et al.} \cite{pmlr-v139-suttle21a} proposed a two-timescale RL algorithm for fractional cost optimization. However, this method requires fixed reference states in Q-learning updates, which cannot be directly adapted to an asynchronous multi-agent environment.}

\textbf{Asynchronous Decision-Making MARL:} \revb{Research on asynchronous decision-making in MARL is also limited. Xiao \textit{et al.} \cite{xiao2022asynchronous} established an independent actor with individual centralized critic framework collecting historical information of each agent. But their approach disregards historical information from other agents and demands frequent policy updates via on-policy training.} \revb{Chen et al. \cite{chen2021VarlenMARL} proposed VarlenMARL in which each agent  has a variable time step to gather the padding data of the most up-to-date step of other agents. However, this approach becomes inefficient with a large number of agents and may neglect information from agents that make decisions frequently.} \edit{Wang and Sun \cite{wang2021reducing} proposed a credit assignment framework with graph-based event critic for bus route control. \revb{However, this method only considers the effect of nearest upstream and downstream events of a single line and is not suitable for more complicated patterns including the effects from events of multiple lines.} MAC-IAICC \cite{xiao2022asynchronous} formulates asynchronous actions as macro-actions and rigorously models the task within the MacDec-POMDP framework.    Liang \textit{et al.}  addressed these issues with  ASM-PPO\cite{liang2022asm} and ASM-HPPO\cite{liang2023asynchronous}, which collected agent trajectories independently and trained using available agents at each time slot. However, these methods could still suffer from the disruption of the synchronization of agents' tasks, as we will study later.  }



\section{System Model}\label{sec:model}

In this section, we \rev{present the system model for MEC.} \edit{We first \rev{introduce} the device and edge server model.}
We then define the AoI in our MEC setting and formulate the task scheduling problem within an SMG framework.

In this MEC system model, we consider mobile devices $\M=\{1,2,\cdots, M\}$ and edge servers $\N=\{1,2,\cdots, N\}$, as shown in Fig.~\ref{fig:model}. \rev{We denote the set of tasks as $\K=\{1, \cdots, K\}$.}
\begin{figure}
    \centering
    \includegraphics[width=8cm]{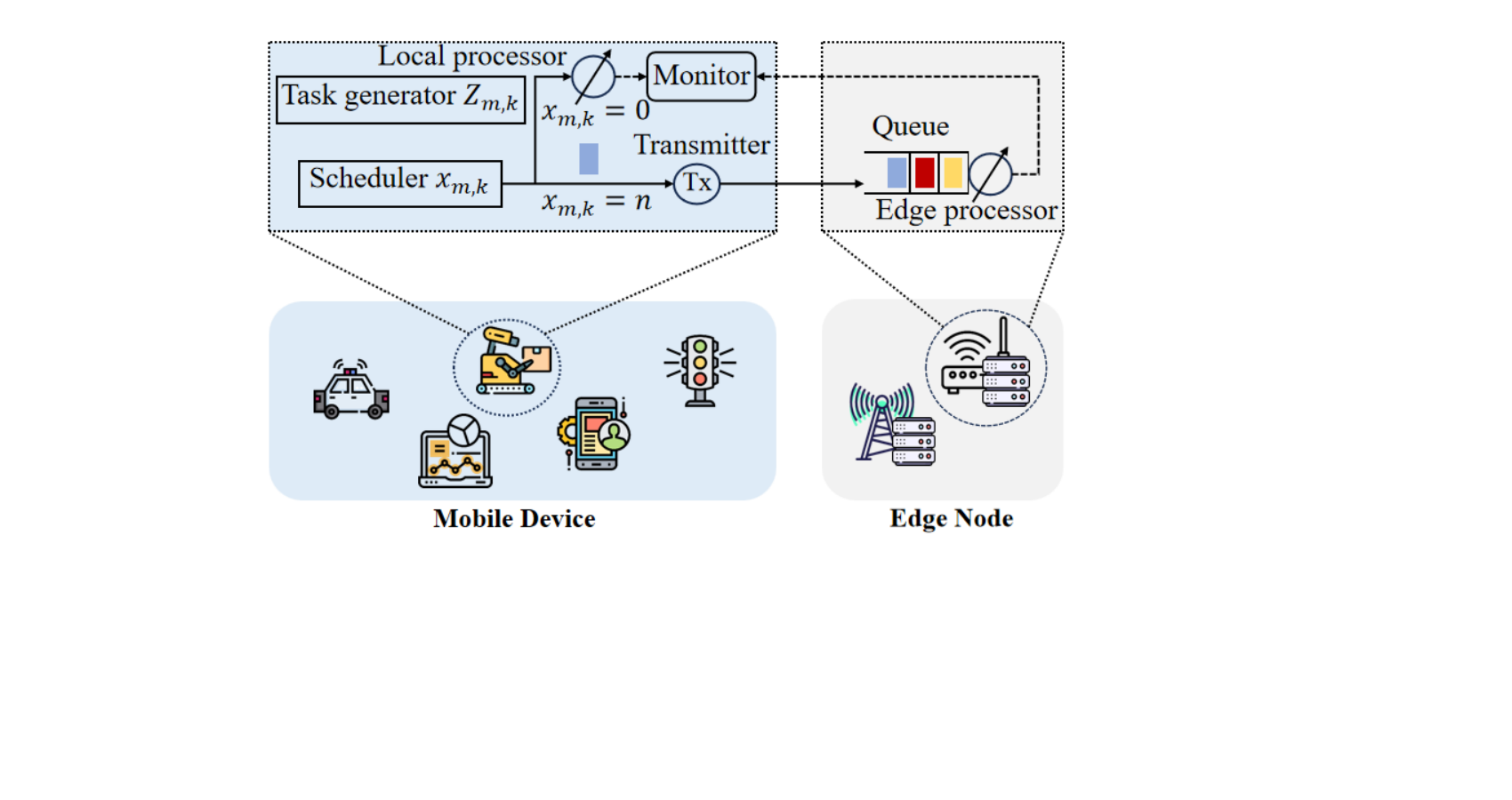}
    \caption{An illustration of an MEC system with a mobile device $m\in\M$ and an edge server $n\in\N$ where the tasks offloaded by different mobile devices are represented using different colors.
    }
    \vspace{-10pt}
    \label{fig:model}
\end{figure}
\subsection{Device Model}
\edit{As shown in Fig. \ref{fig:model}, \rev{each} mobile device \rev{$m$} is associated with a mobile user of CPS applications (e.g., \cite{katare2023survey,xu2023power,jin2023cloud,porambage2018survey,mao2017survey,zhu2018vehicular,xu2020vehicular,el2019cobot}), 
who seeks to offload their computationally intensive tasks to edge servers to receive timely updates.}
Each mobile device
includes a task generator, a scheduler,  \rev{a monitor, and a local processor}. \rev{The generator produces computational tasks. The scheduler chooses to process them locally or offload to an edge server}. After the edge server or local processor processes the task, the result is sent to the monitor and the generator then decides when to initiate a new task\deleting{, which we refer to as an \textit{update}}. 

\textbf{Generator}: \edit{We adopt a \textit{generate-at-will} model, as considered in \cite{sun2017tit,arafa2019timely,sun2019sampling,arafa2019age,Zhang21JSAC}, where each mobile device's task generator decides when to generate a new task. For example, tasks are generated alongside sensory data, with sensors specifically designed to sample physical phenomena on demand.}\footnote{\edit{For the alternative model to the generate-at-will paradigm, where tasks arrive randomly, extending the proposed fractional RL framework is conceptually straightforward, as this can be achieved by simply removing the waiting action.}} Specifically, \rev{when task $k-1$ of mobile device $m\in\M$  is completed at $ t'_{m,k-1}$, the task generator observes the latency $Y_{m,k-1}$ of task $k-1$ and queue information of edge servers  $\q(t'_{m,k-1})$. 

The generator then decides on $Z_{m,k}\geq 0$, i.e., the waiting time before generating the \emph{next} task $k$. We assume that edge servers share their load levels in response to mobile device queries. Since a generator produces a new task after the previous task is finished, the queue length is less than or equal to $M$, which incurs minimal signaling overhead. \revbb{ Let $a^{\textsc{U}}_{m,k}$ denote the updating action for task $k$, i.e., $a_{m,k}^{\textsc{U}}=Z_{m,k}$} and $\mathcal{A}^{\textsc{U}}=\mathbb{R}_+^{M}$ be the updating action space, where $\mathbb{R}_{\geq 0}$ denotes the space of nonnegative real numbers.}
\rev{The generation time of  task $k$ is calculated as $t_{m,k} = t'_{m,k-1} + Z_{m,k}$.}
  Notably, an optimal waiting strategy for updating can outperform a zero-wait policy, indicating that the \rev{waiting} time $Z_{m,k}$ is not necessarily zero and should be optimized for efficiency  \cite{sun2017tit}. \revb{ Specifically, our framework optimizes $Z_{m,k}$ to find the optimal balance between the waiting time $Z_{m,k}$ and potential queuing delays at the server, which is  crucial in multi-agent environments where server availability is stochastic and transient. }

\textbf{Scheduler}: 
When the generator produces task $k$ at time $t_{m,k}$, the scheduler makes the offloading decision $x_{m,k}\in\{0\}\cup\N $ with the queue information $\q(t_{m,k})$ of edge servers\footnote{Under the system model in Section \ref{sec:model}, observing the queue lengths of edge servers is sufficient for mobile devices to learn their offloading policies. Under a more complicated system (e.g., with device mobility), additional state information may be necessary. However, our proposed  RL-based framework remains applicable with an extended state vector.}.       \revbb{At execution time, each mobile device makes updating and offloading decisions using only locally available information, including its task and latency state and the queried edge-server queue information. No global state, other agents’ real-time actions, or centralized history broadcasts are assumed to be available online.
}
 Let $a_{m,k}^{\textsc{O}}$ denote the action of task $k$ from mobile device $m$. \rev{That is}, $a_{m,k}^{\textsc{O}} = x_{m,k}, k\in\K, m\in\M$. Let $\mathcal{A}^{\textsc{O}} \in(\{0\}\cup\N)^M$ and $\pio_m$ denote the offloading action space and the task offloading policy of mobile device $m\in\M$, respectively.
For local processing on mobile device $m\in\M$,  $\tau^{\text{local}}_{m,k}$ (in seconds) represents the processing time  of task $k$. 
The value of $\tau^{\text{local}}_{m,k}$ depends on the task size and the real-time computational capacity of the mobile device (e.g., whether the device is busy in processing tasks of other applications).
For  offloading task $k$ to edge server $n\in\N$, 
$\tau^{\text{tran}}_{n,m,k}$ (in seconds) denotes the \rev{transmission latency of mobile device $m$}. 
The value of $\tau^{\text{tran}}_{n,m,k}$ depends on the time-varying wireless channels.
\revb{We assume that   $\tau^{\text{local}}_{m,k}$ is a random variable that follows an exponential distribution, as in \cite{tang2021age,liu2022deep,zhu2022online},  to capture the inherent  variability caused by stochastic computational complexity and resource contention in practical MEC environments.}

\subsection{Edge Server Model}

When a mobile device $m\in\mathcal{M}$  offloads a task to edge server $n\in\N$, we consider the time-varying channels between  mobile devices and edge nodes located at different positions.  Considering Rayleigh fading, the instantaneous channel power gain between mobile user $m$ and the edge server $n$ at global time $t$ is given by $h_{t,m,n} = g_{t,m,n}d_{t,m,n}^{-\alpha}$, where $d_{t,m,n}$ is the distance from mobile user $m$ to the edge node $n$, and $\alpha$ is the path loss exponent, and $g_{t,m,n}$ is the Rayleigh fading coefficient. Simultaneous transmissions from multiple devices to the same edge node can cause interference.    According to the Shannon principle, the channel capacity for mobile device $m$ with $N$ orthogonal sub-channels of edge nodes can be derived as:
\begin{align}
    \revb{R_{t,m,n} = W \log_2 \left( 1 + \frac{P_m h_{t,m,n}}{\eta_0 + \sum_{i \in \mathcal{M} \setminus \{m\} : c_i = c_m} \beta_{t,i} P_i h_{t,i,n}} \right),}
\end{align}
\revb{where $W$ is the channel bandwidth, $P_m$ denotes the transmit power of mobile user $m$, and $\eta_0$ represents the background noise power. The variable $\beta_{t,i} \in \{0,1\}$ indicates the transmission status of user $i$ at time $t$ ($\beta_{t,i}=1$ for active transmission and $0$ otherwise). Note that the interference term implies that only mobile devices transmitting concurrently on the same sub-channel $c_m$ contribute to the interference.} Then, the transmitting time can be calculated as $\tau_{n,m,k}^{\text{tran}}=l^kd^k/R_{t,m,n}$.   

\rev{When the mobile device $m$ successfully offloads the task to edge server $n$,} the task is stored in a queue waiting for processing, as shown in Fig.~\ref{fig:model}. We assume the queue operates on a \rev{first-come-first-served (FCFS)} basis \cite{AoISurvey2021}. \revb{FCFS prevents strategic preemption to ensure game stability, while our generate-at-will setting inherently eliminates self-blocking.} \rev{We denote $\q(t) = \{q_n(t)\}_{n\in\N}$ as the queue lengths of all edge servers. Edge servers update their queue length information  $\q(t) $ in two cases: when a task is completed or when a waiting period ends. Because a generator produces a new task only after the previous task has been processed, the queue length is at most $M$. Thus, the information to be sent can be encoded in  $O (\log_2 M)$ bits with minimal signaling overhead.} 
\rev{We denote $w^{\text{edge}}_{n,m,k}$ (in seconds) as the duration that task $k$ of mobile device $m\in\M$ waits at the queue of edge server $n$.} Let $\tau^{\text{edge}}_{n,m,k}$ (in seconds) denote the latency of edge server $n$ for processing task $k$ of mobile device $m$. 
We assume that $\tau^{\text{edge}}_{n,m,k}$ is a random variable  following an exponential distribution, i.e., $\tau^{\text{edge}}_{n,m,k}\sim Exp(\frac{C^{\text{edge}}_{n}}{l^k d^k})$. Here, $C^{\text{edge}}_{n}$ is the processing capacity (in GHz) of edge $n$. \revb{Parameters $l^k$ and $d^k$ denote the task size and task density of task $k$, respectively. } 
In addition, the value of $w^{\text{edge}}_{n,m,k}$ depends on the processing times of the tasks  ahead in the queue, which are possibly offloaded by other  mobile devices. \rev{Additionally, this offloading information cannot be directly observed by mobile device $m$.}  \revb{Therefore, the waiting time cannot be modeled analytically, motivating our use of a  model-free MARL framework to capture such dynamics. }

\subsection{Age of Information}

For mobile device $m$, the AoI at global clock $t$ \cite{AoISurvey2021} is defined by 
\begin{align}
    \Delta_m(t)= t- T_{m}(t), ~~\forall m\in\mathcal{M}, t\geq 0,
\end{align}
where $T_{m}(t)\triangleq \mathop{\rm{\max}}_{k} (t_{m,k}| t'_{m,k}\leq t)$ stands for the time stamp of the most recently completed task.

The overall duration to complete task $k$ is denoted by $Y_{m,k}\triangleq t'_{m,k}-t_{m,k}$.
Therefore, 
\begin{align}
  \hspace{-0.35cm}  Y_{m,k}=\begin{cases}
     \tau^{\text{local}}_{m,k},&x_{m,k}=0,\\
     \tau^{\text{tran}}_{n,m,k} + w^{\text{edge}}_{n,m,k} + \tau^{\text{edge}}_{n,m,k},&x_{m,k}=n\in\mathcal{N}.
    \end{cases}
\end{align}
\rev{We consider a deadline $\bar{Y}$ (in seconds).} Tasks not completed  within $\bar{Y}$ seconds will be dropped \cite{Tang2020TMC,li2020age}. Meanwhile, the AoI keeps increasing until the next task is completed.

We define the trapezoid area associated with time interval $[t_{m,k},t_{m,k+1}]$: 
\begin{align}
    &A(Y_{m,k},Z_{m,k+1},Y_{m,k+1})\nonumber\\
    \triangleq& \frac{1}{2}(Y_{m,k}+Z_{m,k+1}+Y_{m,k+1})^2-\frac{1}{2}(Y_{m,k+1})^2, \label{trapezoid}
\end{align}
where $Z_{m,k+1}$ denotes the updating interval
before generating the next task $k+1$. \revb{The time-average AoI is defined as $\lim_{T \to \infty} \frac{1}{T} \int_0^T \Delta_m(t) dt$. By applying the Renewal Reward Theorem, this time-average equals the ratio of the expected area per cycle (given by \eqref{trapezoid}) to the expected cycle duration. Thus, the objective for mobile device $m$ is characterized as:}
\begin{align}
    {\Delta}^{(ave)}_m\triangleq&\liminf_{K\rightarrow \infty}\frac{\sum_{k=0}^K  \mathbb{E}[A(Y_{m,k},Z_{m,k+1},Y_{m,k+1})]}{\sum_{k=0}^K \mathbb{E}[Y_{m,k}+Z_{m,k+1}]},\label{aoi}
\end{align}
where $\mathbb{E}[\cdot]$ is the expectation with respect to decisions made according to certain policies, which will be elaborated upon in the subsequent sections.
\subsection{Game Formulation}
\rev{In MEC systems, mobile devices make updating and offloading decisions asynchronously due to variable transition times. These transition times fluctuate based on scheduling strategies and edge workloads. \rev{Given that the current MARL framework is designed for synchronous decision-making processes, it is not suitable for application in this context.} To address this challenge, we model the real-time MEC scheduling problem as an SMG. An SMG extends Markov decision processes to multi-agent systems with variable state transition times. Thus, this framework can address asynchronous decision-making in MEC systems.}

\edit{The game is defined as $(\M, \mathcal{S},\mathcal{A},P_{\mathcal{S}}, P_F, \mu_0)$, given by}
\begin{itemize}
    \item $\M$: the set of  mobile devices;
   \edit{\item $\mathcal{S}$: the state space denoted by $\mathbb{N}^{M}\times\mathbb{N}^{NM}\times\mathbb{R}^M$.}  Specifically,
    a state $\bs\in\mathcal{S}$ is expressed as $\bs\triangleq((I_m)_{m\in\M},(\q(t_m))_{m\in\M}, (Y_{m})_{m\in\M})$, where $(I_m)_{m\in\M}$ denotes the decision indicators, specifying whether  each mobile device $m\in\M$ needs to take offloading action $a^{\textsc{O}}_m$, updating action $a^{\textsc{U}}_m$ or make no decisions;
    \item  $\mathcal{A}$: the action space defined as $\mathcal{A}^{\textsc{U}}\times\mathcal{A}^{\textsc{O}}$. Specifically, an action  $\ba\in\mathcal{A}$ consists of its task updating action $\ba^{\textsc{U}}$ and task offloading action $\ba^{\textsc{O}}$.  We define the set of admissible state-action pairs as $\H=\{(\bs, \ba)|\bs\in\mathcal{S},\ba\in\mathcal{A}\}$. 
    \item  $P_{\mathcal{S}}:\H\rightarrow \mathbb{P}(\mathcal{S})$ is the transition function of the game, where $\mathbb{P}(\mathcal{S})$ is the collection  of probability distributions over \rev{space  $\mathcal{S}$}.
    \item $P_F: \H\times\mathcal{S}\rightarrow \mathbb{P}(\mathbb{R}^{+})$: the stochastic kernel determining the transition time distribution;
    \item $\mu_0$: the  distribution of the initial state $\bs_0$.
\end{itemize}

\edit{The key difference between an SMG and a conventional Markov game lies in the timing of state transitions. In a traditional Markov game, state transitions occur at fixed, discrete intervals, with agents acting synchronously at each step. In contrast, an SMG features asynchronous transitions, where the time between states is a non-negative random variable drawn from a distribution dependent on the current state and actions. Thus, the decision-making of each agent is asynchronous, with each agent acting independently based on its own timeline.}




\rev{We define the policy by  $\pii=\{\pii_m\}_{m\in\M}$, where each policy} $\pii_m=(\pii_m^{\textsc{U}},\pii_m^{\textsc{O}})$  includes both the updating and offloading policies of single mobile device $m$. 
We define the expected long-term discounted AoI of mobile device $m$ as 
\begin{align}
\mathbb{E}_{\pii}[\Delta_m] 
\triangleq \mathop{\mathbb{E}}\limits_{\bs_0\sim\mu_0}\left\{ \mathop{\frac{ \sum\limits_{k=0}^{\infty}\delta^{k} \mathbb{E}_{\pii}[A(Y_{m,k},Z_{m,k+1},Y_{m,k+1})]}{\sum\limits_{k=0}^{\infty}\delta^{k} \mathbb{E}_{\pii}[\revr{Y_{m,k}}+Z_{m,k+1}]}}\right\},
\label{discount-aoi}
\end{align}
 \rev{where $\delta\in(0,1)$ is a discount factor. 
 We have the objective with the initial state $\bs_0$ under initial distribution $\mu_0$. For simplicity, we omit this notation, which will be understood by default unless otherwise stated in the following.
 } 

 \edit{While standard RL is typically designed to maximize or minimize expected discounted cumulative rewards, the fractional RL framework focuses on optimizing a fraction-of-expectation objective function, as in \eqref{discount-aoi}.
This means that it is challenging to directly evaluate the impact of a specific action on the objective value, as is done with rewards in the conventional RL setting. This challenge primarily motivates our fractional RL framework, which also necessitates new analyses for convergence and algorithm design.}
 
 Note that the discounted AoI form facilitates the application of AoI optimization in  RL and MARL  frameworks that utilize   discounted cost functions. \revb{By assigning less weight to costs incurred in the distant future, the objective in \eqref{discount-aoi} encourages the agent to prioritize immediate information freshness in non-stationary environments, where long-term planning faces high uncertainty.} Additionally, as  $\delta$ approaches 1, the discounted AoI  \eqref{discount-aoi}  approximates the undiscounted AoI function \eqref{aoi}. \rev{The expectation $\mathbb{E}_{\pii}[\cdot]$ is taken with respect to the policy $\pii$ and the time-varying system parameters, e.g.,  channel states, processing times and the edge loads.} We define the Nash equilibrium $\pii^{\star}=\{\pii_m^{\star}\}_{m\in\M}$ for our task scheduling problem  as follows.

\begin{definition}[Nash Equilibrium] \label{NE}
     In the stochastic game $\bG$, a Nash equilibrium is a tuple of  \rev{policies} $\left(\pii_1^{\star},\dots,\pii_M^{\star}\right)$ such that for all  $m\in\M$ we have,
\begin{equation}
\mathbb{E}_{\pii^{\star}_m, \pii^{\star}_{-m}}[\Delta_m] \leq \mathbb{E}_{\pii_m, \pii^{\star}_{-m}}[\Delta_m].
\end{equation}
\end{definition}

  For each mobile device $m\in\M$, given the fixed stationary policies $\pii_{-m}^{\star}$,  the best response  in our real-time MEC scheduling problem is defined as
 \begin{align}
      \pii^{{\star}}_m = \mathop{\arg \mathop{\rm{minimize}}}\limits_{\pii_m} ~  \mathbb{E}_{\pii_m, \pii^{\star}_{-m}}[\Delta_m].\label{problem}
 \end{align}

\deleting{Thus, we model our real-time MEC scheduling problem as asynchronous partial observable Markov game (A-POMG). It can be defined as: $G = (\M, \mathcal{S}, \mathcal{A},\mathcal{V}, \mathcal{P}, \mathcal{C}, \mathbb{O}, \mathcal{O})$,  where $\M\triangleq \{1, 2, \cdots, M\}$ denotes the set of agents (mobile devices in our system). $\mathcal{S}$  represents a set of composite system state, which include the queue state of edge servers and current AoI at time $T$, which can be defined as
\begin{align}
    \mathcal{S} \triangleq \{\bs=(\bs_1, \cdots, \bs_M)|\bs_m=[\q(t), \Delta_m(t)]\}.
\end{align}
We denote the action space including the actions of updating and offloading of mobile devices as:
\begin{align}
    \mathcal{A}\triangleq \{\ba=(\ba_1, \cdots, \ba_M)|\ba_m=(a_m^{\textsc{U}}, a_m^{\textsc{O}})\}.
\end{align}
As mobile devices in $\M$ make decisions asynchronously, at time $T$, there is a set of available mobile devices $\M'\subseteq\M$ which needs to make decisions. $\M'$ is obtained from function  $\mathcal{V}:\mathcal{S} \rightarrow \M' $ given the global state. Correspondingly,  the transition function $\mathcal{P}:\mathcal{S} \times \mathcal{A}_{\M'} \rightarrow \mathcal{S} $ considers the actions from the available agents $\M'$. The long-term cost functions are defined as:
\begin{align}
    \mathcal{C}\triangleq \{\bc=(c_1, \cdots, c_M)|c_m= {\Delta}^{(ave)}_m\}.
\end{align}
$\mathbb{O}\triangleq \{O_1, O_2, \cdots, O_M\}$ represent the set of observations that obtained from the function $\mathcal{O}:\mathcal{S} \rightarrow O_{\M'}$. Specifically, the observation of mobile device $m$ is the denoted as $O_m=[\q(t), \Delta_m(t)], m\in\M'$. Each mobile device can observe current queue state of edge servers and its own current AoI. That is, the mobile device has the partial observability but not the global state which includes other mobile devices' information. }

\revbb{Equation \eqref{discount-aoi} specifies a per-device objective. Accordingly, the MEC scheduling problem is formulated as a decentralized game with coupled AoI objectives, rather than a centralized network-wide AoI minimization problem. Since each device acts on local information and does not observe other devices’ real-time actions online, Nash equilibrium is adopted to characterize a stable policy profile.} \rev{Solving \eqref{problem} is challenging for three reasons. First, the fractional objective introduces a major challenge in obtaining the optimal policy  for conventional RL  algorithms due to its non-linear nature. Second, the multi-agent environment introduces complex dynamics from interactions between edge servers and mobile devices.  Third, the continuous-time nature, stochastic state transitions, and asynchronous decision-making in the SMG pose significant challenges in directly applying conventional game-theoretic methodologies.}

In the following sections, we first analyze a simplified version of the single-agent fractional RL framework. This framework establishes a robust theoretical foundation and serves as a benchmark for subsequent experiments. \revb{We then extend our theoretical analysis of our fractional framework to multi-agent scenarios, which is closer to SMG and practical dynamics.} Finally, we propose the fractional MADRL algorithm in an asynchronous setting for the SMG, which matches the practical dynamics of real-time MEC systems. 

\section{ Fractional Single-Agent RL Framework}\label{sec:solution}
\revb{For foundational  theoretical analysis, we first consider a fractional single-agent RL framework. We design a fractional MDP and propose a fractional Q-learning algorithm.} We provide  analysis of the linear convergence for this algorithm.  We formulate the problem as follows.
\subsection{Single-Agent Problem Formulation}

 \edit{ Recall that agent $m$ selects its updating action and scheduling action, satisfying $(a_{m,k}^O, a_{m,k}^U) \in \mathcal{A}_m \triangleq \mathbb{R}^+ \times (\{0\} \cup \mathcal{N})$.} The goal of single-agent RL is to learn a stationary policy $\pi_m$ that maps the state space $\mathcal{S}$ to agent $m$'s action space $\mathcal{A}_m$ by solving the following single-agent problem:
 \begin{align}
      \pii_m^{*} = \mathop{\arg \mathop{\rm{minimize}}}\limits_{\pii_m} ~  \mathbb{E}_{\pii_m}[\Delta_m],\label{single-agent}
 \end{align}
 \rev{where we focus on the policy of  agent $m$ alone, compared with \eqref{problem}.} 

 We study the general fractional MDP framework and drop the index $m$ in the rest of this section. 
 We then introduce a fractional RL framework to address Problem \eqref{single-agent}. \rev{Specifically, we first} reformulate Problem \eqref{single-agent}.  We then introduce a fractional RL framework and propose a fractional Q-learning algorithm with provable convergence guarantees.

%

\subsection{Dinkelbach's Reformulation}
\revr{To solve Problem \eqref{single-agent}, we consider the Dinkelbach's reformulation.}
Specifically, we define a reformulated AoI in a discounted fashion:
\begin{align}
\mathbb{E}_{\pii}[\Delta', \gamma] 
\triangleq &\sum\limits_{k=0}^{\infty}\delta^{k} \mathbb{E}_{\pii} [   A(Y_k,Z_{k+1},Y_{k+1})\nonumber \\
&- \gamma(Y_k+Z_{k+1})], \forall \gamma\geq0.\label{dinkel-aoi}
\end{align}

 Let $\gamma^*$ be the optimal value of  Problem \eqref{single-agent}. By leveraging Dinkelbach's method \cite{dinkelbach67ms}, we  reformulate the problem as follows.
 \begin{lemma}[Asymptotic Equivalence\cite{dinkelbach67ms}]\label{L1}
 Problem \eqref{single-agent} is equivalent to the following reformulated problem:
 \begin{align}
  \pii^*  = \arg\mathop{\rm{minimize}}_{\pii}~ \mathbb{E}_{\pii}[\Delta',\gamma^*],\label{eq:average-p}
 \end{align}
 where $\pii^*$ is the optimal solution to Problem \eqref{single-agent}.
 \end{lemma}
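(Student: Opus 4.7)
The plan is to apply the classical Dinkelbach transformation for ratio minimization. I would first introduce shorthand for the two series appearing in the original objective:
\begin{align}
N(\pii) &\triangleq \sum_{k=0}^{\infty}\delta^{k}\,\mathbb{E}_{\pii}[A(Y_{k},Z_{k+1},Y_{k+1})],\\
D(\pii) &\triangleq \sum_{k=0}^{\infty}\delta^{k}\,\mathbb{E}_{\pii}[Y_{k}+Z_{k+1}],
\end{align}
so that Problem \eqref{single-agent} reads $\min_{\pii} N(\pii)/D(\pii)$ while the reformulated objective in \eqref{dinkel-aoi} is $\mathbb{E}_{\pii}[\Delta',\gamma]=N(\pii)-\gamma D(\pii)$. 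A preliminary step would verify that $D(\pii)>0$ strictly for every admissible policy, since $Y_0>0$ almost surely under the exponential processing-time model, and that both series are finite thanks to the deadline cap $\bar Y$ on $Y_k$ combined with the geometric weight $\delta^k$.

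With these in place, the argument proceeds in two directions. For the forward direction, let $\pii^*$ minimize the ratio with optimal value $\gamma^*=N(\pii^*)/D(\pii^*)$. For every admissible $\pii$, the inequality $N(\pii)/D(\pii)\geq \gamma^*$ is equivalent, via $D(\pii)>0$, to $N(\pii)-\gamma^* D(\pii)\geq 0$, with equality at $\pii^*$; hence $\pii^*$ solves \eqref{eq:average-p} and the attained minimum value is exactly $0$. Conversely, for any minimizer $\pii^{\dagger}$ of \eqref{eq:average-p}, one has $N(\pii^{\dagger})-\gamma^* D(\pii^{\dagger})\leq N(\pii^*)-\gamma^* D(\pii^*)=0$, i.e.\ $N(\pii^{\dagger})/D(\pii^{\dagger})\leq \gamma^*$; combined with the optimality of $\gamma^*$ for the ratio, the inequality must be an equality, so $\pii^{\dagger}$ also solves \eqref{single-agent}. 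This yields the two-way equivalence of minimizers.

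The main obstacle I anticipate is not the algebra but reconciling the outer expectation over $\bs_0\sim\mu_0$ that \eqref{discount-aoi} places \emph{outside} the ratio with the per-policy ratio form on which Dinkelbach's identity operates, since in general $\mathbb{E}_{\bs_0}[N/D]\neq \mathbb{E}_{\bs_0}[N]/\mathbb{E}_{\bs_0}[D]$. I would resolve this by following the notational convention adopted immediately after \eqref{discount-aoi}, in which the outer expectation is absorbed and the objective is read as a single ratio of two aggregated expectations; under this reading, the algebraic identity above applies directly. If instead one insisted on a strict interpretation, I would first prove a per-initial-state version of the equivalence, noting that the same $\gamma^*$ works state-wise because Dinkelbach's identity holds pointwise, and then average over $\mu_0$; I would sketch this as a remark rather than grind through it.
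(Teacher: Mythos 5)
Your proof is correct and follows essentially the same route as the paper, which simply invokes Dinkelbach's classical identity and observes that $\mathbb{E}_{\pii}[\Delta',\gamma^*]\geq 0$ for every $\pii$ with equality at $\pii^*$. Your version is merely more complete: it spells out both directions of the equivalence and flags the $\mathbb{E}_{\bs_0\sim\mu_0}$-outside-the-ratio subtlety, which the paper resolves implicitly via the same notational convention you adopt.
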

 
\rev{Since $\mathbb{E}_{\pii}[\Delta',\gamma^*]\geq 0$ for any $\pi$ and $\mathbb{E}_{\pii^*}[\Delta',\gamma^*]= 0$, $\pii^*$ is also optimal for the Dinkelbach reformulation. This implies that the reformulation equivalence is also established for our stationary policy space.}

\subsection{Fractional MDP}
\revb{We then extend the standard MDP to a fractional form, laying the groundwork for a more complex and practical setting in a Markov game.}

A fractional MDP is defined as $(\mathcal{S},\mathcal{A},c_N,c_D,Pr,\delta,\mu_0)$, where $\mathcal{S}$ and $\mathcal{A}$ are the finite sets of states and actions, respectively; $Pr$ is the state transition distribution; $c_N$ and $c_D$ are the fractional costs,
 $\delta$ is a discount factor, and $\mu_0=\{\mu_0(\bs)\}_{\bs\in\mathcal{S}}$ denotes the initial  global state distribution. We use $\mathcal{Z}$ to denote the joint state-action space, i.e., $\mathcal{Z}\triangleq\mathcal{S}\times \mathcal{A}$.
We define the instant fractional costs at task $k$  as
\begin{align}
&c_{N}(\bs_k, \ba_k) =A(Y_{k},Z_{k+1},Y_{k+1}), \\
&c_{D}(\bs_k, \ba_k) = Y_{k}+Z_{k+1}.
\end{align}
From Lemma \ref{L1},
Problem \eqref{single-agent} has the equivalent Dinkelbach's reformulation:
\begin{align}
{\pii}^*=\arg\mathop{\text{minimize}}_{{\pii}}\lim_{K\rightarrow \infty} \mathbb{E}_{\pii}\left[\sum_{k=0}^K\delta^{k} ( c_N- \gamma^* c_D)\right], \label{Din}
\end{align}
where we can see from Lemma \ref{L1} that  $\gamma^*$ satisfies
\begin{align}
    \gamma^*=\mathop{\text{minimize}}_{{\pii}}\lim_{K\rightarrow \infty}\frac{\mathbb{E}_{\pii}\left[\sum_{k=0}^K\delta^{k} c_N\right]}{\mathbb{E}_{\pii}\left[\sum_{k=0}^K\delta^{k} c_D\right]}.\label{eq-FRL-2}
\end{align}

Note that Problem \eqref{Din} is a classical MDP problem, \revr{including} an immediate cost  $c_N(\bs,\ba)-\gamma^*c_D(\bs,\ba)$. Thus, we can apply a traditional RL algorithm to solve such a reformulated problem, such as Q-Learning or its variants (e.g., SQL in \cite{SQL}).

However, the optimal quotient coefficient $\gamma^*$ and the transition distribution $P$ are not known \textit{a priori}. Therefore, we design an algorithm that combines fractional programming  and reinforcement learning to 
solve Problem \eqref{Din} for a given $\gamma$ and seek the value of $\gamma^*$. To achieve this, we start by introducing the following definitions.

    Given a quotient coefficient $\gamma$, the optimal Q-function is
    \begin{align}
        Q^*_\gamma(\bs,\ba)\triangleq \mathop{\rm{minimize}}_{\pii}\: Q^{\pii}_\gamma(\bs,\ba),~\forall (\bs,\ba)\in\mathcal{Z}, \label{optimalQ}
    \end{align}
where $Q^{\pii}_\gamma(\bs,\ba)$ is the action-state function that satisfies the following Bellman's equation: for all $(\bs,\ba)\in\mathcal{Z}$,
\begin{align}
    Q^{\pii}_\gamma(\bs,\ba)\triangleq c_N(\bs,\ba)-\gamma c_D(\bs,\ba)+\delta\mathbb{E}_{Pr}\left[Q^{\pii}_\gamma(\bs',\ba')\right],
\end{align}
where $\mathbb{E}_{Pr}$ is a concise notation for $ \mathbb{E}_{\bs'\sim Pr(\cdot|\bs,\ba)}$. In addition,  we can further decompose the optimal Q-function in \eqref{optimalQ} into the following two parts: $ Q^*_\gamma(\bs,\ba)=N_\gamma(\bs,\ba)-\gamma D_\gamma(\bs,\ba)$
and, for all $(\bs,\ba)\in\mathcal{Z}$, 
\begin{align}
    N_\gamma(\bs,\ba)&=c_N(\bs,\ba)+ \delta\mathbb{E}_{Pr}\left[N_\gamma(\bs',\ba')\right],\\ D_\gamma(\bs,\ba)&=c_D(\bs,\ba)+ \delta\mathbb{E}_{Pr}\left[ D_\gamma(\bs',\ba')\right].
\end{align}





    
\begin{algorithm}[t]
	\caption{Fractional Q-Learning (FQL)}\label{Algo-FQL}
	\begin{algorithmic}[1]
		\FOR{episode $i=0, 1,\cdots, E$}
		\STATE Initialize $\bs_0$;
		\FOR{task $k=0, 1,\cdots, K$}
		\STATE Observe the next state $\bs_{k+1}$;
		\STATE Observe a set of costs $\{c_{N,k},c_{D,k}\}$;
		\STATE Store $(\bs_k,\ba_k,c_{N,k},c_{D,k},\bs_{k+1})$ in replay buffer;
		\ENDFOR

		\STATE $\gamma_{i+1}=\frac{N_{\gamma_i}(\bs,\ba_i)}{D_{\gamma_i}(\bs,\ba_i)},$ where $\ba_i=\arg\min_{\ba} Q_{\gamma_i}(\bs,\ba)$.
		\ENDFOR
	\end{algorithmic}
\end{algorithm}

\subsection{Fractional Q-Learning Algorithm}
In this subsection, we present a Fractional Q-Learning (FQL) algorithm (see Algorithm \ref{Algo-FQL}). \revb{The algorithm consists of an inner loop with $E$ episodes and an outer loop.} The key idea is to approximate the Q-function \( Q_{\bgamma}^* \) using \( Q_i \) in the inner loop, while iterating over the sequence \( \{\gamma_i\} \) in the outer loop. 

A notable innovation in Algorithm \ref{Algo-FQL} is the design of the stopping condition, which ensures that the uniform approximation errors of \( Q_i \) shrink progressively. This allows us to adapt the convergence proof from \cite{dinkelbach67ms} to our inner loop, while proving the linear convergence rate of \( \{\gamma_i\} \) in the outer loop. Importantly, this design does not increase the time complexity of the inner loop.

We describe the details of the inner loop and the outer loop procedures of Algorithm \ref{Algo-FQL} in the following:
\begin{itemize}
    \item \textit{Inner loop}: For each episode $i$, given a quotient coefficient $\gamma_i$, we perform a Q-learning algorithm, such as Speedy Q-learning in \cite{SQL}, to approximate the function $Q_{\bgamma}^*(\bs,\ba)$ by $Q_i(\bs,\ba)$. Let $\s_0$ denote the initial state of any  episode, and $\ba_i\triangleq \arg\min_{\ba} Q_{i}(\bs_0,\ba)$ for all $i\in[E]\triangleq\{1,...,E\}$. We consider a \textit{stopping condition} 
    \begin{align}
         \epsilon_i< -\alpha Q_i(\bs_0,\ba_i),~\forall i\in[E], \label{SC}
    \end{align}
 \rev{where $\alpha>0$ is the error scaling factor. This stopping condition ensures that the algorithm terminates in} each episode $i$ with a bounded \textit{uniform approximation error}:
      $\norm{Q^*_{\bgamma}-Q_i}\leq \epsilon_i,~\forall i\in[E].$
    Operator $\norm{\cdot}$ is the supremum norm, which satisfies $ \lVert g \rVert\triangleq \max_{(\bs,\ba)\in\mathcal{Z}}|g(\bs,\ba)|$. 
   \rev{ Specifically,     we obtain $ Q_i(\bs_0,\ba_i)$, $N_i(\bs_0,\ba_i)$, and $D_i(\bs_0,\ba_i)$, which satisfy:
    \begin{align}
      Q_i(\bs_0,\ba_i)=N_i(\bs_0,\ba_i)-\gamma_iD_i(\bs_0,\ba_i).  \label{Eq-Qup}
    \end{align}}

    \item \textit{ Outer loop:} We update the quotient coefficient:
    \begin{align}
        \gamma_{i+1}=\mathop{\mathbb{E}}\limits_{\bs_0 \sim \mu_0}\left[\frac{N_i(\bs_0,\ba_i)}{D_i(\bs_0,\ba_i)}\right], \quad\forall i\in[E], \label{Update-O}
    \end{align}
   which will be shown to converge to the optimal value $\gamma^*$. 
\end{itemize}

\subsection{Convergence Analysis}

We present the time complexity analysis of inner loop and the convergence results of our proposed FQL algorithm (Algorithm \ref{Algo-FQL}) as follows.

\subsubsection{Time Complexity of the inner loop}
Although $\{Q_i(\bs_0,\ba_i)\}$ converges to $0$ and hence the stopping condition $\epsilon_i<-\alpha Q_i$ becomes more restrictive as $i$ increases, \revbb{the number of inner-loop steps $T_i$ in Algorithm~\ref{Algo-FQL} is finite and  non-increasing in $i$}. See \cite[Appendix A]{jin2024asynchronousfractionalmultiagentdeep} in detail.
\subsubsection{Convergence of FQL algorithm}
\revb{We then analyze the convergence of FQL algorithm:}
\begin{theorem}[Linear Convergence of Fractional Q-Learning]\label{T1}
\rev{If  the uniform approximation error} $\norm{Q^*_{\gamma_i}-Q_i}\leq \epsilon_i$ holds with $\epsilon_i< -\alpha Q_i(\bs_0,\ba_i)$ for  some $\alpha\in(0,1)$ and for all $i\in[E]$, then the sequence $\{\gamma_i\}$ generated by Algorithm \ref{Algo-FQL}  satisfies
\begin{equation}
    \frac{\gamma_{i+1}-\gamma^*}{\gamma_i-\gamma^*}\in(0,1),~\forall i\in[E] \text{ and }
    \lim_{i\rightarrow \infty} \frac{\gamma_{i+1}-\gamma^*}{\gamma_i-\gamma^*}=\alpha.
\end{equation}
That is, $\{\gamma_i\}$ converges to $\gamma^*$ linearly.
\end{theorem}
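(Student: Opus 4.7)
The plan is to interpret Algorithm~\ref{Algo-FQL} as an inexact Newton iteration applied to the Dinkelbach parametric root-finding equation
\[
F(\gamma) := \mathop{\text{minimize}}_{\pii}\lim_{K\to\infty}\mathbb{E}_{\pii}\!\left[\sum_{k=0}^{K}\delta^{k}\bigl(c_{N}-\gamma c_{D}\bigr)\right] = 0,
\]
and to track how the uniform error $\epsilon_{i}$ on $Q_{i}$ propagates to the scalar error $e_{i}:=\gamma_{i}-\gamma^{*}$. I would first record three structural properties of $F$ that I will reuse throughout: (i) $F$ is the pointwise minimum of affine functions of $\gamma$, hence \emph{concave}; (ii) since $c_{D}>0$, $F$ is strictly decreasing with $F'(\gamma)=-D_{\gamma}(\bs_{0},\ba_{\gamma}^{*})$ by the envelope theorem; and (iii) by Lemma~\ref{L1}, $F(\gamma^{*})=0$ uniquely, with $F(\gamma_{i})=Q^{*}_{\gamma_{i}}(\bs_{0},\ba_{\gamma_{i}}^{*})$.

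Next I would turn the uniform bound $\lVert Q^{*}_{\gamma_{i}}-Q_{i}\rVert\leq\epsilon_{i}$ into a pointwise bound on the greedy entry by the standard suboptimality-of-greedy argument:
\[
F(\gamma_{i})-\epsilon_{i}\;\leq\;Q_{i}(\bs_{0},\ba_{i})\;\leq\;F(\gamma_{i})+\epsilon_{i},
\]
obtained from $Q_{i}(\bs_{0},\ba_{i})\leq Q_{i}(\bs_{0},\ba_{\gamma_{i}}^{*})\leq F(\gamma_{i})+\epsilon_{i}$ and $Q_{i}(\bs_{0},\ba_{i})\geq Q^{*}_{\gamma_{i}}(\bs_{0},\ba_{i})-\epsilon_{i}\geq F(\gamma_{i})-\epsilon_{i}$. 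Combining the decomposition $Q_{i}=N_{i}-\gamma_{i}D_{i}$ with the update \eqref{Update-O} gives the clean identity
\[
\gamma_{i+1}-\gamma_{i}=\frac{Q_{i}(\bs_{0},\ba_{i})}{D_{i}(\bs_{0},\ba_{i})},
\qquad\text{i.e.,}\qquad e_{i+1}=e_{i}+\frac{Q_{i}(\bs_{0},\ba_{i})}{D_{i}(\bs_{0},\ba_{i})}.
\]

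With this identity in hand I would prove the two-sided bound $e_{i+1}/e_{i}\in(0,1)$ by induction on $i$, starting with a $\gamma_{0}$ large enough that $\gamma_{0}>\gamma^{*}$. The upper bound $e_{i+1}<e_{i}$ is immediate: the stopping rule $\epsilon_{i}<-\alpha\, Q_{i}(\bs_{0},\ba_{i})$ forces $Q_{i}(\bs_{0},\ba_{i})<0$, and $D_{i}>0$, so the increment is strictly negative. The lower bound $e_{i+1}>0$ is the substantive step: invoking the concavity supporting-line inequality at $\gamma_{i}$,
\[
0 \;=\; F(\gamma^{*}) \;\leq\; F(\gamma_{i}) + F'(\gamma_{i})(\gamma^{*}-\gamma_{i}) \;=\; F(\gamma_{i}) + D_{\gamma_{i}}^{*}\, e_{i},
\]
so $F(\gamma_{i})\geq -D_{\gamma_{i}}^{*}e_{i}$. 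Substituting the lower half of the $\epsilon_{i}$-sandwich, $Q_{i}\geq F(\gamma_{i})-\epsilon_{i}>F(\gamma_{i})+\alpha Q_{i}$ (using $\epsilon_{i}<-\alpha Q_{i}$), gives $(1-\alpha)Q_{i}>F(\gamma_{i})\geq -D_{\gamma_{i}}^{*}e_{i}$, whence $Q_{i}/D_{i}>-e_{i}$ provided $D_{i}$ matches $D_{\gamma_{i}}^{*}$ up to the tolerance implied by $\epsilon_{i}$ and $\alpha<1$; this keeps $\gamma_{i+1}>\gamma^{*}$ and advances the induction.

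Finally, for the exact rate $\lim_{i\to\infty} e_{i+1}/e_{i}=\alpha$, I would pass to the limit using smoothness of $F$ at $\gamma^{*}$: writing $F(\gamma_{i})=-D^{*}e_{i}+o(e_{i})$ with $D^{*}:=-F'(\gamma^{*})$ and $D_{i}=D^{*}+o(1)$, and letting $\eta_{i}:=Q_{i}(\bs_{0},\ba_{i})-F(\gamma_{i})$ saturate the stopping tolerance (so that $|\eta_{i}|\to -\alpha Q_{i}\sim\alpha D^{*}e_{i}$), the identity $e_{i+1}=e_{i}+(F(\gamma_{i})+\eta_{i})/D_{i}$ simplifies to $e_{i+1}=\eta_{i}/D^{*}+o(e_{i})$, giving $e_{i+1}/e_{i}\to\alpha$. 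The exact-Dinkelbach case $\alpha=0$ recovers the classical superlinear rate.

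\emph{Main obstacle.} The delicate step is the lower bound in the induction: the concavity inequality controls $F(\gamma_{i})$ in terms of the \emph{true} sensitivity $D_{\gamma_{i}}^{*}$, whereas $\gamma_{i+1}$ is driven by the \emph{learned} $D_{i}$. Bridging this gap rigorously requires that the joint approximation error of $(N_{i},D_{i})$ is governed by the same $\epsilon_{i}$ (e.g., via the decomposition $\epsilon_{i}\geq|(N_{i}-N_{\gamma_{i}}^{*})(\bs_{0},\ba_{i})-\gamma_{i}(D_{i}-D_{\gamma_{i}}^{*})(\bs_{0},\ba_{i})|$ combined with the Bellman contractions for $N_{\gamma_{i}}$ and $D_{\gamma_{i}}$), so that $D_{i}/D_{\gamma_{i}}^{*}\to 1$ at a rate compatible with the $\alpha$-contraction, and the monotone descent of $\gamma_{i}$ above $\gamma^{*}$ is preserved for every~$i\in[E]$.
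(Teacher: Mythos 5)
Your framing—reading the outer loop as an inexact Newton step on the concave Dinkelbach function $F(\gamma)=\min_{\pii}(N^{\pii}-\gamma D^{\pii})$ and propagating the sup-norm error $\epsilon_i$ into the scalar error $e_i=\gamma_i-\gamma^*$—is genuinely different from the paper's argument, which never invokes concavity or the envelope theorem. The paper instead works directly with the ratio update: it uses the suboptimality chain to bound $\frac{N(\gamma^*)}{D(\gamma^*)}-\frac{N_i}{D_i}$, the monotonicity identity $(\gamma_i-\gamma^*)(D(\gamma^*)-D_i)\le 0$, and arrives at the contraction $\gamma_{i+1}-\gamma^*< (\gamma_i-\gamma^*)\bigl(1+\alpha-\tfrac{D(\gamma^*)}{D_i}\bigr)$, which forces the restriction $\alpha<D_{\min}/D_{\max}$ (with $D_{\min},D_{\max}$ the extreme values of $c_N/c_D$) to place the factor in $(0,1)$. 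Your route is cleaner in spirit, and your identity $e_{i+1}=e_i+Q_i(\bs_0,\ba_i)/D_i(\bs_0,\ba_i)$ and the sandwich $|Q_i(\bs_0,\ba_i)-F(\gamma_i)|\le\epsilon_i$ are both correct; the upper bound $e_{i+1}<e_i$ is also fine.

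The gap is in the lower bound, and it is not merely the $D_i$-versus-$D^*_{\gamma_i}$ mismatch you flag in your "main obstacle" paragraph. Trace your own chain with \emph{perfect} value estimation, $D_i=D^*_{\gamma_i}$: from $(1-\alpha)Q_i>F(\gamma_i)\ge -D^*_{\gamma_i}e_i$ you get $Q_i>-D^*_{\gamma_i}e_i/(1-\alpha)$, hence $Q_i/D_i>-e_i/(1-\alpha)$ and therefore only $e_{i+1}>e_i\bigl(1-\tfrac{1}{1-\alpha}\bigr)=-\tfrac{\alpha}{1-\alpha}e_i<0$. The stopping condition enters this inequality with the wrong sign: dividing $F(\gamma_i)$ by $1-\alpha<1$ makes the lower bound on $Q_i$ \emph{more} negative, so the concavity supporting line plus the $\alpha$-tolerance can never certify $e_{i+1}>0$, no matter how well $D_i$ approximates $D^*_{\gamma_i}$. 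To close the induction you would need either a one-sided control on the sign of the residual $\eta_i=Q_i-F(\gamma_i)$, or the paper's mechanism: positivity of $\gamma_{i+1}=N_i/D_i-\gamma^*$ is argued from $N_i/D_i$ being (approximately) an achievable ratio of an actual policy, hence bounded below by $\gamma^*=\min_{\pii}N^{\pii}/D^{\pii}$ up to the $\epsilon_i/D_i$ correction, together with the extra requirement $\alpha<D_{\min}/D_{\max}$ that your proof never derives. The same sign issue resurfaces in your limit computation, where $e_{i+1}=\eta_i/D^*+o(e_i)$ has the sign of $\eta_i$, which the hypotheses do not control; and since the hypothesis is only the inequality $\epsilon_i<-\alpha Q_i$ rather than saturation, your asymptotics give $\limsup_i|e_{i+1}/e_i|\le\alpha$ rather than the stated limit equal to $\alpha$.
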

\revb{While the convergence proof in 
\cite{dinkelbach67ms} requires obtaining the \textit{exact} solution in each episode, Theorem \ref{T1} generalizes this result to the case where we only obtain an \textit{approximated (inexact)} solution in each episode. }
In addition to the proof techniques in \cite{dinkelbach67ms} and \cite{SQL},  our proof techniques include induction and
exploiting the convexity of $Q_i(\bs,\ba)$.
We present a proof sketch of Theorem \ref{T1} in \cite[Appendix B]{jin2024asynchronousfractionalmultiagentdeep}.


The significance of Theorem \ref{T1} is two-fold. First, Theorem \ref{T1} shows that Algorithm \ref{Algo-FQL} achieves a linear convergence rate, even though it only attains an approximation of \revr{$Q_\gamma^*(\bs,\ba)$\deleting{in each episode $i\in[E]$}}. Second,  \eqref{SC} is a well-behaved stopping condition.

\section{Fractional Multi-Agent RL Framework}\label{sec:ma_solution}

\rev{In this section, we propose a fractional MARL framework with inexact Newton method, extending our fractional framework from the single-agent scenario  to the multi-agent one. We first introduce a Markov game for our task scheduling problem. We then develop a fractional framework including fractional sub-games to address the fractional objective challenge. Finally, we propose the fractional Nash Q-learning algorithm and analyze the convergence of this algorithm to Nash equilibrium.  }

\subsection{Markov Game Formulation}\label{subsec:prob}
\edit{We study our problem in a multi-agent scenario and introduce the  task scheduling Markov game. 
The task scheduling Markov game is defined as: $G = (M, \mathcal{S}, \mathcal{A},\{c_{N,m}\}_{m=1}^{M}, \{c_{D,m}\}_{m=1}^{M},\mathcal{P}, \delta, \mu_0)$ where $M$ is the number of agents, $\mathcal{S}$ and $\mathcal{A} $ represent the state space and action space, respectively, $\mathcal{P}(\bs'|\bs,\ba)$ denotes the state transition probability, $\delta\in(0,1)$ is a discount factor, and $\mu_0$ denotes the initial state distribution.}




We define the instant fractional costs at task $k$ of mobile device $m$ as
\begin{align}
&c_{N,m}(\bs_k, \ba_k) =A(Y_{m,k},Z_{m,k+1},Y_{m,k+1}), \\
&c_{D,m}(\bs_k, \ba_k) = Y_{m,k}+Z_{m,k+1}.
\end{align}
\rev{Following the definition of discounted AoI in \eqref{discount-aoi}, we define the fractional long-term discounted cost  for }mobile device $m$  as
\begin{align}
    V_m(\bs, \pii) = \frac{\mathbb{E}_{\pii}\left[\sum\limits_{k=0}^{\infty} \delta^{k}c_{N,m}(\bs_k, \ba_k)  \middle| \bs_0=\bs\right]}{\mathbb{E}_{\pii}\left[ \sum\limits_{k=0}^{\infty} \delta^{k}c_{D,m}(\bs_k, \ba_k)\middle| \bs_0=\bs\right]},\label{cost_function}
\end{align}
\revb{where  $\pii = (\pii_m, \pii_{-m})$ represents the policies of mobile device $m$  and other agents that determine actions, and $\mathbb{E}_{\pii}$ is the expectation regarding transition dynamics given stationary control policy $\pii$.}  Note that \eqref{cost_function} approximates the un-discounted function when $\delta$ approaches 1 \cite{adelman_relaxations_2008}. 

\rev{Specifically, an NE is a tuple of policies $\pii^{\star} = (\pii_m^{\star}, \pii_{-m}^{\star})$ of game $\bG$. For each mobile  $m\in\M$, given the fixed stationary policies $\pii_{-m}^{\star}$,  the fractional objective can be expressed  as:}
\begin{align} \label{cost_problem}
\pii_m^{\star} =\mathop{\arg \mathop{\rm{minimize}}}\limits_{\pii_{m}} ~V_m(\bs, \pii_m, \pii_{-m}^{\star}), \forall \bs\in\mathcal{S}.
\end{align}

Lemma \ref{exist} guarantees the existence of an NE in  game $\bG$.
\begin{lemma}
     For the multi-player stochastic game with expected long-term discounted costs, there always exists a Nash equilibrium in stationary policies, where each agent follows a stationary policy whose action distribution does not change over time \cite{fink1964equilibrium}. 
    \label{exist}
\end{lemma}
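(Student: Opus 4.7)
The plan is to establish existence via Kakutani's fixed point theorem applied to the joint best-response correspondence over the space of stationary policies, adapting the classical argument of Fink to account for the fractional (ratio-of-expectations) nature of the objective in \eqref{cost_function}. First I would fix the strategy space: for each mobile device $m$, the stationary policy space $\Pi_m = \prod_{\bs\in\mathcal{S}}\Delta(\mathcal{A}_m)$ is a product of probability simplexes, hence non-empty, compact, and convex in a Euclidean space; so is the joint space $\Pi=\prod_m \Pi_m$. Then I would verify that $V_m(\bs,\pii)$ is continuous in $\pii$. The numerator $\mathbb{E}_{\pii}[\sum_k \delta^{k} c_{N,m}]$ and the denominator $\mathbb{E}_{\pii}[\sum_k \delta^{k} c_{D,m}]$ are each continuous functions of $\pii$ by standard MDP arguments (expected discounted sums depend continuously on state-action probabilities for $\delta\in(0,1)$), and the denominator is uniformly bounded away from zero because $Y_{m,k}+Z_{m,k+1}>0$ almost surely with strictly positive expectation. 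Continuity of the ratio follows.

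Next I would analyze the per-agent best-response correspondence $\mathrm{BR}_m(\pii_{-m}) = \arg\min_{\pii_m\in\Pi_m} V_m(\bs,\pii_m,\pii_{-m})$. Non-emptiness and closedness follow from Weierstrass together with the continuity established above. The subtle step is convex-valuedness of $\mathrm{BR}_m(\pii_{-m})$, which is the main obstacle because a ratio of linear functionals of the occupation measure is in general only quasi-linear, not linear, in $\pii_m$. Here I would invoke the Dinkelbach reformulation of Lemma \ref{L1}: with $\pii_{-m}$ fixed, agent $m$ effectively faces a single-agent fractional MDP whose optimal stationary policies coincide with the minimizers of $\mathbb{E}_{\pii_m,\pii_{-m}}[\sum_k \delta^{k}(c_{N,m}-\gamma_m^{\star} c_{D,m})]$, where $\gamma_m^{\star}$ is the induced optimal quotient. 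This auxiliary objective is a conventional discounted cost with immediate cost $c_{N,m}-\gamma_m^{\star} c_{D,m}$, and its set of stationary minimizers is the product over states of simplexes supported on the Q-greedy actions, which is convex.

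Upper hemicontinuity of $\mathrm{BR}_m$ then follows from Berge's maximum theorem, using joint continuity of $V_m$ in $(\pii_m,\pii_{-m})$ and compactness of $\Pi_m$. The product correspondence $\mathrm{BR}(\pii)=\prod_m \mathrm{BR}_m(\pii_{-m})$ inherits non-emptiness, convex-valuedness, and closed graph. I would then apply Kakutani's fixed point theorem on the compact convex set $\Pi$ to obtain $\pii^{\star}\in \mathrm{BR}(\pii^{\star})$, which by construction satisfies the equilibrium inequality of Definition \ref{NE}. I anticipate that the only real technical obstacle is convex-valuedness of the best response, since the fractional objective is not linear in the own policy; the Dinkelbach reduction above is the cleanest way around it, and the remaining continuity and fixed-point arguments are routine adaptations of Fink's framework.
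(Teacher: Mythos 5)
Your proposal is correct in outline, but it takes a genuinely different route from the paper: the paper offers no proof of Lemma~\ref{exist} at all, and simply invokes Fink's 1964 existence theorem for discounted stochastic games. That citation covers the additive sub-game $G_{\bgamma}$ directly, but Fink's setting assumes an additive discounted cost, whereas the value function \eqref{cost_function} of the original game $\bG$ is a ratio of expectations; so the citation alone does not immediately justify existence for $\bG$. Your Kakutani-based argument --- compact convex policy space, continuity of $V_m$ via a denominator bounded away from zero, and, crucially, convex-valuedness of the best response obtained by passing through the Dinkelbach linearization at the induced optimal quotient $\gamma_m^{\star}(\pii_{-m})$ --- is exactly the bridge needed to extend Fink's conclusion to the fractional objective. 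In that sense your proof does more than the paper's, and the identification of convex-valuedness as the only non-routine step is the right diagnosis.

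Two points deserve care before the argument is airtight. First, \eqref{cost_problem} demands that the best response minimize $V_m(\bs,\cdot)$ simultaneously for \emph{every} $\bs\in\mathcal{S}$, but the Dinkelbach quotient of a fractional MDP is in general state-dependent, so a single stationary policy greedy for all the state-indexed linearizations need not exist; your argument is clean only if the equilibrium is defined with respect to the fixed initial distribution $\mu_0$, as in Definition~\ref{NE}, and you should say so explicitly. Second, when optimality is required only from $\mu_0$, the full argmin set may strictly contain the set of policies greedy with respect to the optimal Q-function (a policy may act arbitrarily at unreachable states), and that larger set is not obviously convex; the standard repair is to run Kakutani on the smaller correspondence of everywhere-greedy policies, which is convex-valued, has a closed graph because the optimal Q-function varies continuously with $\pii_{-m}$, and whose fixed points are still Nash equilibria. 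With those two adjustments your proof is complete.
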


\subsection{Fractional Sub-Game}
\rev{To deal with the fractional objective \eqref{cost_problem}  for NE, we reformulate our game as a sequence of fractional sub-games based on Dinkelbach's method\cite{dinkelbach67ms} in  this subsection.}

First, we define sub-game $G_{\bgamma}$ with fractional coefficients $\bgamma$ for fractional sub-games as follows.
\begin{definition}[Fractional Sub-Game]
The fractional sub-game is defined as $G_{\bgamma} = (M, \mathcal{S}, \mathcal{A}, \{c_{N,m}\}_{m=1}^{M}, \{c_{D,m}\}_{m=1}^{M}, \allowbreak \mathcal{P}, \delta, \bgamma) $, where the new components are
$\bgamma = \{\gamma_{m}\}_{m\in\mathcal{M}}$, representing the set of Dinkelbach variables. Each $\gamma_m$ is a Dinkelbach variable associated with mobile device $m$.
\end{definition}




\edit{The fractional value functions of sub-game are denoted as $\bV_{\bgamma}(\bs) :=[V_{\gamma_{m}}(\bs) ]_{m\in\M}$. The value function for each mobile device $m\in\mathcal{M}$ is given by
\begin{align}
   & V_{\gamma_{m}}(\bs, \pii) = (1-\delta)\cdot\nonumber\\
&\mathbb{E}_{\pii}\left\{\sum_{k=0}^{\infty}\delta^{k}[c_{N_{m}}(\bs_k, \ba_k) - \gamma_{m}c_{D_{m}}(\bs_k, \ba_k)]\middle|\bs_0=\bs\right\},
\end{align}}
where $\bs_k, \ba_k$ are the global state and action at task $k$.

In sub-game $G_{\bgamma}$, each mobile device $m$ aims to find its best response $\pii_{m}^{\star}$.  With  policies of other agents $\pii_{-m}^{\star}$ fixed, the best response of mobile device $m$ minimizes its  cost value $V_{\gamma_{m}}$ for any given global system state $\bs \in \mathcal{S}$. The objective is formulated as:
\begin{align} \label{sub_cost_problem}
\pii_{m}^{\star} =\mathop{\arg \mathop{\rm{minimize}}}\limits_{\pii_m} ~  V_{\gamma_{m}}(\bs, \pii_m, \pii_{-m}^{\star}),\quad \forall \bs \in \mathcal{S}.
\end{align}
Lemma \ref{exist} ensures the existence of an NE in sub-game $G_{\bgamma}$. For brevity, we denote the optimal state-value function as $V_{\gamma_{m}}^{\star}(\bs)=V_{\gamma_{m}}(\bs, \pii_{m}^{\star}, \pii_{-m}^{\star}), \forall \bs \in \mathcal{S}$, where $(\pii_{m}^{\star}, \pii_{-m}^{\star})$ is the NE of sub-game $G_{\bgamma}$. 

Thus, we propose a fractional Nash Q-Learning algorithm based on the iterative fractional sub-games and show its  convergence in the rest of this section.

\subsection{Fractional Nash Q-Learning Algorithm}

\rev{We introduce the Fractional Nash Q-Learning (FNQL) algorithm in Algorithm \ref{FNQL}. This algorithm includes two loops: \textit{1)} an inner loop that obtains NE of sub-game $G_{\bgamma}$, and \textit{2)}  an outer loop that iterates $\bgamma$ until convergence. The key idea is to iterate sub-game $G_{\bgamma}$  and  attain the NE of Markov game $\bG$ when $\bgamma$ converges.}
\subsubsection{Inner Loop} \label{inner_loop}
\rev{Following \cite{casgrain2022deep}, we first define the Nash operator $\mathop{\mathbb{N}}$ as follows:} 
\edit{\begin{definition}[Nash Operator]
    Consider a collection of $M$ functions, $\boldsymbol{f}(\ba) = [f_m(\ba_m,\ba_{-m})]_{m\in\M}$, which admit a Nash equilibrium $\pii^{\star}$. We introduce the Nash operator $\mathop{\mathbb{N}}\limits_{\ba\in\mathcal{A}}$, which maps the collections of functions into their corresponding Nash equilibrium values, i.e., $\boldsymbol{f}(\ba^{\star})= \mathop{\mathbb{N}}\limits_{\ba\in\mathcal{A}} \boldsymbol{f}(\ba)$,  satisfying
     \begin{align}
        f_m(\ba_m^*,\ba_{-m}^*) \leq f_m(\ba_m,\ba_{-m}^*), \quad \forall \ba_m, \forall m\in\M.
     \end{align}
\end{definition}}

For sub-game $G_{\bgamma}$, the optimization of best response \eqref{sub_cost_problem} \rev{utilizes an instant cost}, given by $c_{N_{m}}(\bs, \ba) - \gamma_{m}c_{D_{m}}(\bs, \ba),\forall m\in\M$. \rev{To solve this optimization, we apply the traditional MARL algorithm Nash Q-learning\cite{hu2003nash}.} 

\edit{
\textit{Nash-Q Functions.} We define the Nash Q-functions with $\bgamma$ as:

\begin{align}
\bQ_{\bgamma}(\bs,\ba)
\triangleq{}&
\bc_{N}(\bs,\ba)
-\bgamma\odot\bc_{D}(\bs,\ba)
\nonumber\\
&+\delta\,
\mathbb{E}_{\bs'}
\left[
    \mathop{\mathbb{N}}\limits_{\ba'\in\mathcal{A}}
    \bQ_{\bgamma}(\bs',\ba')
\right].
\label{NashQ}
\end{align}
where $ \odot$ stands for the element-wise product of two vectors. We denote  $\bc_{N}(\bs, \ba)=[c_{N_{m}}(\bs, \ba)]_{m\in\M}$,  $\bc_{D}(\bs, \ba)=[c_{D_{m}}(\bs, \ba)]_{m\in\M}$. We use $\mathbb{E}_{\mathcal{P}}$ as a shorthand notation for $ \mathbb{E}_{\bs' \sim \mathcal{P}(\cdot | \bs, \ba)}$. }
In the inner loop, we update the Q-function of mobile device $m\in\M$ at iteration $k$ by
\begin{align}\label{NQ-function}
\bQ_{\bgamma}^{k+1}(\bs, \ba)\! = & (1\!-\!\lambda)\bQ_{\bgamma}^{k}(\bs, \ba) + \lambda\left\{\bc_{N}(\bs, \ba) - \bgamma \odot \bc_{D}(\bs, \ba)\nonumber\right.\\
&\left. + \delta \mathbb{E}_{\bs'}\left[\mathop{\mathbb{N}}\limits_{\ba'\in\mathcal{A}}\bQ^k_{\bgamma}(\bs', \ba')\right]\right\},
\end{align}
where $\lambda$ is the learning rate. \edit{We observe that the Nash Q-function in \eqref{NashQ} is the fixed point of the iteration in \eqref{NQ-function}.}


\edit{\textit{Nash Equilibrium Policies.} Let $\pii^\star_{\boldsymbol{\gamma}}(\bs)=[\pi_m^\star(\bs)]_{m\in\mathcal{M}}$ be the NE policies of the sub-game $G_{\boldsymbol{\gamma}}$. We have that
\begin{align}
    \pi_{\boldsymbol{\gamma},m}^\star(\bs)=\arg\min_{\ba_m}Q_{\gamma,m}(\bs, \ba_m, \ba_{-m}^\star),\quad\forall m\in\mathcal{M}.
\end{align}}




\edit{\textit{Value functions.} We define  $\overline{\bN}^{\star}_{\bgamma}(\bs)=\left[\overline{N}_{\gamma_m}^{\star}(\bs)\right]_{m\in\M}$ and $\overline{\bD}^{\star}_{\bgamma}(\bs)=\left[\overline{D}^{\star}_{\gamma_m}(\bs)\right]_{m\in\M}$ as the numerator and denominator  value functions
as follows
\begin{align}\label{Nash_ND}  
     \overline{\bN}^{\star}_{\bgamma}(\bs)\triangleq
   \bc_{\bN}(\bs, \pii^*_{\boldsymbol{\gamma}}(\bs)) +\delta\mathbb{E}_{\bs' \sim \mathcal{P}(\cdot | \bs, \pii^*_{\boldsymbol{\gamma}}(\bs))}\left[\overline{\bN}^{\star}_{\bgamma}(\bs')\right], \\
\overline{\bD}^{\star}_{\bgamma}(\bs)\triangleq \bc_{\bD}(\bs, \pii^*_{\boldsymbol{\gamma}}(\bs)) +\delta\mathbb{E}_{\bs' \sim \mathcal{P}(\cdot | \bs, \pii^*_{\boldsymbol{\gamma}}(\bs))}\left[\overline{\bD}^{\star}_{\bgamma}(\bs')\right].
\end{align}
It follows from \eqref{NashQ} that $\mathop{\mathbb{N}}\limits_{\ba\in\mathcal{A}}\bQ_{\bgamma}(\bs, \ba)=\overline{\bN}^{\star}_{\bgamma}(\bs) - \bgamma \odot \overline{\bD}^{\star}_{\bgamma}(\bs)$.} We approximate the Nash operator $\mathop{\mathbb{N}}\limits_{\ba\in\mathcal{A}}$ as detailed in \cite[Appendix C]{jin2024asynchronousfractionalmultiagentdeep}.

\begin{figure}[t]
		\begin{algorithm}[H]
			\caption{Fractional Nash Q-Learning  (FNQL)}\label{FNQL}
			\begin{algorithmic}[1]
                \STATE Initialize a tolerance coefficient $\epsilon$;\
                \STATE Initialize $\bs_0$;\
				\REPEAT 				
				\FOR{$k=0, 1,\cdots, K$}
                \STATE Choose action $\ba$ based on current $\bQ^k$;\
                \STATE Observe $\bc_N,\bc_D$, and $\bs'$;\
                \STATE Update $\bQ^k_{\bgamma}$ by \eqref{NQ-function}; \
%
				\ENDFOR
				\STATE  Update $\gamma_{m}$ by \eqref{Update-Nash} for all $ m\in\M$;
				\UNTIL  $\norm{\mathbb{E}_{\boldsymbol{s}_0\sim \mu_0}\left[\overline{\bN}^{\star}_{\bgamma}(\bs_0) - \bgamma\odot\overline{\bD}^{\star}_{\bgamma}(\bs_0)\right]} \leq  \epsilon$.
			\end{algorithmic}
		\end{algorithm}
\end{figure}


 For each sub-game $G_{\bgamma}$, \rev{we perform the fractional Nash Q-Learning algorithm  to obtain} the Nash Q-functions  $\bQ_{\bgamma}(\bs)$. \rev{Each episode terminates when the Nash equilibrium of sub-game $G_{\boldsymbol{\gamma}}$ is attained.}

   \subsubsection{Outer loop} 

Given $\bgamma \in \Gamma$, we utilize neural networks (detailed in Section~\ref{sec:algorithm})  to approximate the Nash equilibrium strategy $\boldsymbol{\pi}^{\star}_{\bgamma,m}$ for each player $m \in \mathcal{M}$ in sub-game $G_{\bgamma}$. Let $\boldsymbol{\theta}_m(\bgamma) \in \Theta_m$ denote the parameters of the neural network trained for player $m$ given $\bgamma$. We define the mapping $\boldsymbol{\theta}: \Gamma \to \Theta$, where $\Theta = \times_{m \in \mathcal{M}} \Theta_m$. The outputs of these neural networks, parameterized by $\boldsymbol{\theta}(\bgamma)$, collectively approximate the NE strategy profile $\boldsymbol{\pi}^{\star}_{\bgamma}$ for the sub-game $G_{\bgamma}$.
Based on the strategies parameterized by $\boldsymbol{\theta}$, we define the following expected value functions for agent $m$:
\begin{align}
    N_m(\boldsymbol{\theta},\gamma_m)&= \mathbb{E}_{\boldsymbol{s}_0\sim \mu_0}\left[\overline{N}^{\star}_{\gamma_m}(\bs_0)\right],\\
    D_m(\boldsymbol{\theta},\gamma_m)&= \mathbb{E}_{\boldsymbol{s}_0\sim \mu_0}\left[\overline{D}^{\star}_{\gamma_m}(\bs_0)\right],\\
F_m(\boldsymbol{\theta},\gamma_m)&=\mathbb{E}_{\boldsymbol{s}_0\sim \mu_0}\left[\overline{N}^{\star}_{\gamma_m}(\bs_0) - \gamma_m \overline{D}^{\star}_{\gamma_m}(\bs_0)\right].
\end{align}


We denote the objective functions $\bF(\boldsymbol{\theta},\bgamma)=[F_m(\boldsymbol{\theta},\gamma_m)]_{m\in\M}$. For simplicity, let $\boldsymbol{\theta^*} = \boldsymbol{\theta}(\bgamma^*)$, which represents the parameters for an NE strategy of the original Markov game $\bG$. We establish the existence of an equivalent condition of reaching NE between the Markov game $\bG$ and the fractional sub-game $G_{\bgamma}$ with the following lemma:
\begin{lemma} \label{G-NE}
    There exists $\bgamma^* \succ \boldsymbol{0}$, such that  the NE strategy $\pii^*$ of  game $\bG$ is also an NE strategy of  sub-game $G_{\bgamma^*}$. For  $\bgamma^*$, the corresponding  $\bF(\boldsymbol{\theta}^*,\bgamma^*)$ satisfies:
    \begin{align}
    \bF(\boldsymbol{\theta}^*,\bgamma^*)=\boldsymbol{0}.
    \end{align} 
     \label{l_eqv}
\end{lemma}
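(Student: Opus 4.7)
The plan is to lift the single-agent Dinkelbach equivalence of Lemma~\ref{L1} to the multi-agent Markov game by applying it to each agent's best-response subproblem. By Lemma~\ref{exist}, the stochastic game $\bG$ admits a stationary Nash equilibrium $\pii^{\star}$, and I would begin by defining, for each $m\in\M$,
\begin{equation*}
\gamma_m^{\star} \;=\; \frac{\mathbb{E}_{\bs_0\sim\mu_0}\!\left[\mathbb{E}_{\pii^{\star}}\!\left[\sum_{k=0}^{\infty}\delta^{k}c_{N,m}(\bs_k,\ba_k)\bigm| \bs_0\right]\right]}{\mathbb{E}_{\bs_0\sim\mu_0}\!\left[\mathbb{E}_{\pii^{\star}}\!\left[\sum_{k=0}^{\infty}\delta^{k}c_{D,m}(\bs_k,\ba_k)\bigm| \bs_0\right]\right]},
\end{equation*}
i.e., the $\mu_0$-averaged fractional AoI attained by agent $m$ at the equilibrium. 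Because both $c_{N,m}$ (a trapezoidal AoI area) and $c_{D,m}$ (an inter-generation interval) are strictly positive, the denominator is strictly positive and $\bgamma^{\star}\succ\boldsymbol{0}$.

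Next, I would show that $\pii^{\star}$ is also a Nash equilibrium of the sub-game $G_{\bgamma^{\star}}$. For any fixed agent $m$ with the other players frozen at $\pii_{-m}^{\star}$, agent $m$'s best-response problem in $\bG$ reduces to a single-agent fractional MDP whose transition kernel is induced by $\pii_{-m}^{\star}$. Applying Lemma~\ref{L1} to this induced MDP with parameter $\gamma_m^{\star}$ shows that the fractional objective and the Dinkelbach-linearized objective $\mathbb{E}_{\pii_m,\pii_{-m}^{\star}}\!\left[\sum_{k}\delta^{k}(c_{N,m}-\gamma_m^{\star}c_{D,m})\right]$ share the same minimizer. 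Since $\pii_m^{\star}$ is the best response in $\bG$ by the NE assumption, it is likewise a best response in $G_{\bgamma^{\star}}$, and therefore $\pii^{\star}$ is an NE of $G_{\bgamma^{\star}}$.

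Because $\boldsymbol{\theta}^{\star}=\boldsymbol{\theta}(\bgamma^{\star})$ parameterizes the NE of $G_{\bgamma^{\star}}$, the sub-game value functions $\overline{\bN}^{\star}_{\bgamma^{\star}}$ and $\overline{\bD}^{\star}_{\bgamma^{\star}}$ defined in \eqref{Nash_ND} coincide with the numerator and denominator values evaluated under $\pii^{\star}$. Plugging these into the definitions of $N_m(\boldsymbol{\theta}^{\star},\gamma_m^{\star})$ and $D_m(\boldsymbol{\theta}^{\star},\gamma_m^{\star})$ and using the ratio that defines $\gamma_m^{\star}$ yields $F_m(\boldsymbol{\theta}^{\star},\gamma_m^{\star}) = N_m - \gamma_m^{\star} D_m = 0$ for every $m\in\M$, i.e., $\bF(\boldsymbol{\theta}^{\star},\bgamma^{\star})=\boldsymbol{0}$.

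The main obstacle I anticipate is the mismatch between the per-state best-response condition \eqref{sub_cost_problem} and the scalar parameter $\gamma_m^{\star}$ built from a $\mu_0$-averaged ratio: a literal state-by-state application of Dinkelbach would produce a state-dependent parameter $\gamma_m^{\star}(\bs)$. Resolving this either requires exploiting that the Dinkelbach linearization preserves agent $m$'s argmin at every $\bs$ (so a single scalar suffices), or invoking a Brouwer-type fixed-point argument on the map $\bgamma \mapsto \bN(\boldsymbol{\theta}(\bgamma),\bgamma)/\bD(\boldsymbol{\theta}(\bgamma),\bgamma)$ to guarantee the consistency of the chosen $\bgamma^{\star}$; this is the most delicate step of the proof.
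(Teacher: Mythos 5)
Your first, "direct" route---fixing the Nash equilibrium $\pii^{\star}$ of $\bG$, defining $\gamma_m^{\star}$ as the $\mu_0$-averaged ratio it attains, and then invoking Lemma~\ref{L1} agent-by-agent on the induced single-agent MDP---has a genuine gap, and it is exactly the one you flag at the end. The sub-game NE condition \eqref{sub_cost_problem} and the original NE condition \eqref{cost_problem} are both \emph{per-state} conditions ("for all $\bs\in\mathcal{S}$"), whereas Dinkelbach's equivalence linearizes the fractional objective only at the value $\gamma = V_m(\bs,\pii^{\star})$ attained \emph{from that particular initial state}. A single scalar $\gamma_m^{\star}$ obtained by averaging over $\mu_0$ will in general differ from $V_m(\bs,\pii^{\star})$ at individual states, so there is no guarantee that $\pii_m^{\star}$ minimizes $\mathbb{E}[\sum_k\delta^k(c_{N,m}-\gamma_m^{\star}c_{D,m})\mid \bs_0=\bs]$ at every $\bs$; the "lift Lemma~\ref{L1} to each best-response subproblem" step therefore does not go through as stated. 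Your honest acknowledgment of this is correct, but the proposal leaves the delicate step unresolved rather than resolving it.

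The paper takes the route you only gesture at in your fallback: it does \emph{not} start from the NE of $\bG$ at all, but instead constructs $\bgamma^{*}$ as a Brouwer fixed point of the map $\mathbb{T}:\Gamma\to\Gamma$, $\mathbb{T}\bgamma = \big[\mathbb{E}_{\bs_0\sim\mu_0}[\overline{N}^{\star}_{\gamma_m}(\bs_0)/\overline{D}^{\star}_{\gamma_m}(\bs_0)]\big]_{m\in\M}$, on the compact convex set $\Gamma=[0,\overline{\gamma}]^{M}$ with $\overline{\gamma}=\max_{\bs,\ba}\{c_{N,m}/c_{D,m}\}$. The substantive work---which your proposal omits entirely---is establishing that $\mathbb{T}$ is continuous: the paper shows the sub-game Bellman operator is a contraction, invokes Berge's maximum theorem to get continuity of the optimal value in $\gamma_m$, and deduces continuity of $\overline{N}^{\star}_{\gamma_m}$ and $\overline{D}^{\star}_{\gamma_m}$, with positivity of the denominator guaranteeing $\mathbb{T}$ is well defined. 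The identity $\bF(\boldsymbol{\theta}^{*},\bgamma^{*})=\boldsymbol{0}$ then falls out of the fixed-point equation by definition, rather than from your averaged-ratio computation. If you want to complete your proof, you should carry out this continuity-plus-Brouwer argument explicitly; as written, your main argument rests on a per-state/averaged mismatch that you correctly diagnose but do not repair.
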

Lemma \ref{l_eqv} is proven using the property of \eqref{NashQ} and
the Brouwer fixed point theorem. The detailed proof can be found in \cite[Appendix D]{jin2024asynchronousfractionalmultiagentdeep}. 

The outer loop implements the iterations of fractional sub-game  based on generalizing the Dinkelbach's method \cite{dinkelbach67ms}. At each iteration $i$,  after achieving the NE of sub-game $G_{\bgamma_i}$, we update  coefficients $\bgamma$ in line 9 of Algorithm \ref{FNQL} as
    \begin{align} 
\gamma_{m,i+1}=\frac{  N_m(\boldsymbol{\theta},\gamma_{m,i})}{  D_m(\boldsymbol{\theta},\gamma_{m,i})},~~\forall m\in\M.
\label{Update-Nash}
    \end{align}
We repeat this process until $\bgamma$ converges, which happens when $\mathbb{E}_{\boldsymbol{s}_0\sim \mu_0}\left[\overline{\bN}^{\star}_{\bgamma}(\bs_0) - \bgamma \overline{\bD}^{\star}_{\bgamma}(\bs_0)\right]$ is sufficiently close to $\boldsymbol{0}$. \revb{We proceed to show that the updates in \eqref{Update-Nash} constitute an inexact Newton's method and quantify the conditions that lead to a linear convergence rate for the outer loop.}

\subsection{Convergence Analysis of Outer Loop}
\revbb{We analyze the outer-loop convergence by connecting the update rule
\eqref{Update-Nash} to Newton's method. The outer loop seeks the root of
the composite mapping
$\bgamma \mapsto \bF(\boldsymbol{\theta}(\bgamma), \bgamma)=\boldsymbol{0}$,
where $\boldsymbol{\theta}(\bgamma)$ denotes the parameters of the NE strategy
for the sub-game $G_{\bgamma}$. We therefore consider the Jacobian of this
composite mapping with respect to $\bgamma$. Specifically, its $(m,n)$-th entry
is given by:}
\begin{equation} \begin{aligned}
   & [J_{\bF}(\boldsymbol{\theta},\bgamma)]_{mn}\\
    =&\sum_{k \neq m} \nabla_{\boldsymbol{\theta}_k} F_m(\boldsymbol{\theta}_m,\gamma_m) \frac{\partial \boldsymbol{\theta}_k(\boldsymbol{\theta})}{\partial \gamma_n} - \delta_{mn} D_m(\boldsymbol{\theta}_m,\gamma_m),
    \label{eq:true_jacobian}
    \end{aligned}
\end{equation} 
where $\delta_{mn}$ is defined as
$$
\delta_{mn} =
\begin{cases}
1, & \text{if } m = n, \\
0, & \text{if } m \neq n.
\end{cases}
$$
We start by analyzing a simplified scenario: the single-agent case without inter-agent dependencies. In this case, the Jacobian simplifies to a diagonal form:
\begin{equation} \begin{aligned}
\left[J_{\bF}'(\boldsymbol{\theta},\bgamma)\right]_{mm}= - D_m(\boldsymbol{\theta}_m,\gamma_m).
    \label{eq:approx_jacobian}
    \end{aligned}
\end{equation} 
Provided the required derivatives  are non-zero, the  update rule \eqref{Update-Nash} for agent $m$ in the exact Newton step for a single-agent case is
 \begin{align}
      \gamma_{m,i+1} =~&\gamma_{m,i} -\left[J_{\bF}'(\boldsymbol{\theta},\bgamma_i)\right]^{-1}_{mm}F_m(\boldsymbol{\theta}_m,\gamma_{m,i})\nonumber\\
      =~&\frac{  N_m(\boldsymbol{\theta},\gamma_{m,i})}{  D_m(\boldsymbol{\theta},\gamma_{m,i})}
 \end{align}
That is, the single-objective (single-agent) Dinkelbach's method is equivalent to the Newton's method, and hence
the sequence $\{\gamma_{m,i}\}$ converges quadratically to $\gamma_m^*$. 

\revb{However, generalizing to  the multi-agent scenario introduces significant theoretical and computational challenges, as it is difficult to compute $\frac{\partial \boldsymbol{\theta}_k(\boldsymbol{\theta})}{\partial \gamma_n}$ in the Jacobian \eqref{eq:true_jacobian}. }
For the multi-agent scenario, we consider  the update rule \eqref{Update-Nash}   within the framework of the inexact Newton method. The inexact Newton method is characterized by the following convergence conditions. 
\begin{lemma}
    \label{convergence_IN}
    Let $\boldsymbol{L}: \mathbb{R}^n \to \mathbb{R}^n$ be a continuously differentiable function with a solution $\boldsymbol{x}^*$ where $\boldsymbol{L}'(\boldsymbol{x}^*)$ is nonsingular. The inexact Newton iteration $\boldsymbol{x}_{i+1} = \boldsymbol{x}_i + \boldsymbol{s}_i$ and $\boldsymbol{L}'(\boldsymbol{x}_i) \boldsymbol{s}_i = -\boldsymbol{L}(\boldsymbol{x}_i) + \boldsymbol{r}_i$, converges linearly to $\boldsymbol{x}^\star$ if $\|\boldsymbol{r}_i\| \leq \eta_i\|\boldsymbol{L}(\boldsymbol{x}_i)\|$ for $0 \leq \eta_i \leq \eta_{\text{max}} < 1$, provided $\boldsymbol{x}_0$ is sufficiently close to $\boldsymbol{x}^*$ \cite{dembo1982inexact}.
\end{lemma}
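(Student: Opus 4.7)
The plan is to prove linear convergence in a weighted norm adapted to $\boldsymbol{L}'(\boldsymbol{x}^*)$, combining a first-order Taylor expansion of $\boldsymbol{L}$ at $\boldsymbol{x}^*$ with the residual tolerance $\|\boldsymbol{r}_i\|\le\eta_i\|\boldsymbol{L}(\boldsymbol{x}_i)\|$. First I would invoke the nonsingularity of $\boldsymbol{L}'(\boldsymbol{x}^*)$ together with the continuity of $\boldsymbol{L}'$ to select a radius $\delta_0>0$ such that, for every $\boldsymbol{x}\in B(\boldsymbol{x}^*,\delta_0)$, $\boldsymbol{L}'(\boldsymbol{x})$ is invertible with $\|\boldsymbol{L}'(\boldsymbol{x})^{-1}\|$ uniformly bounded. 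This ensures the step $\boldsymbol{s}_i$ is well defined as long as $\boldsymbol{x}_i$ stays in the neighborhood. I would then introduce the equivalent norm $\|\boldsymbol{y}\|_*:=\|\boldsymbol{L}'(\boldsymbol{x}^*)\boldsymbol{y}\|$, in which $\|\boldsymbol{L}(\boldsymbol{x})-\boldsymbol{L}'(\boldsymbol{x}^*)(\boldsymbol{x}-\boldsymbol{x}^*)\|=o(\|\boldsymbol{x}-\boldsymbol{x}^*\|_*)$ is the natural remainder bound.

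The core step is to derive an error recursion. From $\boldsymbol{x}_{i+1}-\boldsymbol{x}^*=(\boldsymbol{x}_i-\boldsymbol{x}^*)+\boldsymbol{s}_i$ and $\boldsymbol{L}'(\boldsymbol{x}_i)\boldsymbol{s}_i=-\boldsymbol{L}(\boldsymbol{x}_i)+\boldsymbol{r}_i$, I would multiply through by $\boldsymbol{L}'(\boldsymbol{x}^*)$ and regroup to obtain
\begin{equation*}
\boldsymbol{L}'(\boldsymbol{x}^*)(\boldsymbol{x}_{i+1}-\boldsymbol{x}^*)=\boldsymbol{r}_i+\bigl[\boldsymbol{L}'(\boldsymbol{x}^*)(\boldsymbol{x}_i-\boldsymbol{x}^*)-\boldsymbol{L}(\boldsymbol{x}_i)\bigr]+\bigl[\boldsymbol{L}'(\boldsymbol{x}^*)-\boldsymbol{L}'(\boldsymbol{x}_i)\bigr]\boldsymbol{s}_i.
\end{equation*}
The second bracket is the Taylor remainder of size $o(\|\boldsymbol{x}_i-\boldsymbol{x}^*\|)$, while the third is bounded by the modulus of continuity of $\boldsymbol{L}'$ at $\boldsymbol{x}^*$ times $\|\boldsymbol{s}_i\|$. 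The uniform bound on $\|\boldsymbol{L}'(\boldsymbol{x}_i)^{-1}\|$ gives $\|\boldsymbol{s}_i\|=O(\|\boldsymbol{x}_i-\boldsymbol{x}^*\|)$, so together they contribute at most $\omega(\|\boldsymbol{x}_i-\boldsymbol{x}^*\|)\,\|\boldsymbol{x}_i-\boldsymbol{x}^*\|_*$ with $\omega(t)\to 0$ as $t\to 0$.

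Next I would translate the residual bound into $\|\cdot\|_*$. Since $\|\boldsymbol{L}(\boldsymbol{x}_i)\|\le\|\boldsymbol{x}_i-\boldsymbol{x}^*\|_*+o(\|\boldsymbol{x}_i-\boldsymbol{x}^*\|_*)$ near $\boldsymbol{x}^*$, the hypothesis $\|\boldsymbol{r}_i\|\le\eta_i\|\boldsymbol{L}(\boldsymbol{x}_i)\|$ with $\eta_i\le\eta_{\max}<1$ yields, after absorbing norm-equivalence constants, $\|\boldsymbol{r}_i\|_*\le\eta_{\max}\|\boldsymbol{x}_i-\boldsymbol{x}^*\|_*+o(\|\boldsymbol{x}_i-\boldsymbol{x}^*\|_*)$. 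Substituting into the error recursion yields the contraction
\begin{equation*}
\|\boldsymbol{x}_{i+1}-\boldsymbol{x}^*\|_*\le\bigl(\eta_{\max}+\omega(\|\boldsymbol{x}_i-\boldsymbol{x}^*\|)\bigr)\,\|\boldsymbol{x}_i-\boldsymbol{x}^*\|_*.
\end{equation*}
Choosing $\delta\le\delta_0$ small enough that $\omega(\delta)<1-\eta_{\max}$ and requiring $\boldsymbol{x}_0\in B(\boldsymbol{x}^*,\delta)$, an induction argument keeps every iterate inside the neighborhood and certifies linear convergence with asymptotic rate at most $\eta_{\max}$.

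The main obstacle is the coupling between the two norms: the residual tolerance is stated in the ambient norm while the contraction naturally lives in $\|\cdot\|_*$. Careful bookkeeping of the absorption of norm-equivalence constants into the $o(\cdot)$ error, and crucially the verification that the entire forward orbit remains inside the neighborhood on which the Taylor and Lipschitz estimates are valid, is where the argument can easily fail. This is exactly why the hypothesis that $\boldsymbol{x}_0$ is \emph{sufficiently close} to $\boldsymbol{x}^*$ is essential rather than a mere convenience.
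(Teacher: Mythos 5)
The paper offers no proof of this lemma---it is quoted directly from Dembo--Eisenstat--Steihaug \cite{dembo1982inexact}---and your argument is a correct reconstruction of the proof given in that reference: the same weighted norm $\|\boldsymbol{y}\|_*=\|\boldsymbol{L}'(\boldsymbol{x}^*)\boldsymbol{y}\|$, the same three-term error identity, and the same induction keeping the orbit inside the neighborhood. One small caution: in your recursion the residual enters through its \emph{ambient} norm $\|\boldsymbol{r}_i\|$ (the left-hand side is already $\boldsymbol{L}'(\boldsymbol{x}^*)(\boldsymbol{x}_{i+1}-\boldsymbol{x}^*)$), so the bound on $\|\boldsymbol{r}_i\|_*$ you state is never needed---and is lucky not to be, since the factor $\|\boldsymbol{L}'(\boldsymbol{x}^*)\|$ it would introduce multiplies the leading term, cannot be absorbed into the $o(\cdot)$ remainder, and would spoil the $\eta_{\max}<1$ contraction.
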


In our context, the update step $s_i=\bgamma_{i+1}-\bgamma_i$  for the game $\bG$ can be viewed as the \textit{inexact} Newton equation:
\begin{equation} \label{eq:inexact_newton_eq}
    J_{\bF}(\boldsymbol{\theta},\bgamma_i) \bs_i = -F(\boldsymbol{\theta},\bgamma_i) + \boldsymbol{r}_i,
\end{equation}
where $\boldsymbol{r}_i$ is the residual term. This  allows us to adapt the convergence analysis of the inexact Newton method \cite{dembo1982inexact} to our multi-agent framework.

To establish convergence when employing an \textit{inexact} Newton method, we require several assumptions regarding the structure of the problem near the equilibrium. \revbb{ These assumptions are introduced to facilitate the inexact-Newton analysis of the outer-loop update near the equilibrium. } We define an open neighborhood $\mathcal{N}_{\bgamma^*}$ around $\bgamma^*$ and the subsequent assumptions hold uniformly for all $\bgamma \in \mathcal{N}_{\bgamma^*}$.


The first assumption guarantees desirable curvature properties for each agent's local objective component, related to the block-diagonal part of the Hessian of $\bF$ with respect to $\boldsymbol{\theta}$. We denote this Hessian as $H:= \nabla^2_{\boldsymbol{\theta}} \bF(\boldsymbol{\theta},\bgamma)$.

\begin{assumption}[Bounded Inverse of Local Hessian Block]\label{as:local_strong_convexity}
    For each $m\in\M$, we have $\|(H_{\boldsymbol{\theta}_m\boldsymbol{\theta}_m})^{-1}\|_\infty  \le 1/\mu$.
\end{assumption}
This assumption is a standard condition in optimization that prevents the Hessian blocks $H_{\boldsymbol{\theta}_m\boldsymbol{\theta}_m}$, which are related to each agent's local optimization problem, from being singular or near-singular.

We then impose conditions on the denominator term appearing in the average AoI metric, ensuring it is well-behaved and its gradient is bounded.
\begin{assumption}[Bounded Denominator and Gradient]
    \label{as:bounded_denominator}
    For all $m\in\M$, $D_m(\boldsymbol{\theta},\gamma_m)$ is uniformly bounded below by $D_{\text{min}} > 0$, i.e., $D_m(\boldsymbol{\theta},\gamma_m) \ge D_{\text{min}} > 0$. Additionally, the norm of the gradient of $D_m$ with respect to $\boldsymbol{\theta}_m$ is bounded: $\|\nabla_{\boldsymbol{\theta}_m} D_m(\boldsymbol{\theta},\gamma_m)\|_\infty  \le C_D$.
\end{assumption}
This assumption prevents division by zero or numerically unstable small values in the objective function $F_m$, ensuring it remains well-defined and finite. The bound $C_D$ on the gradient limits the sensitivity of the denominator to changes in $\boldsymbol{\theta}_m$.

We also need to control the strength of interactions between different agents.
\begin{assumption}[Bounded Interaction]
    \label{as:bounded_interaction} 
    For all $k\neq m$, the norm of the first- and second-order cross-gradients of $F_m$ is  bounded as $\|\nabla_{\boldsymbol{\theta}_k} F_m(\boldsymbol{\theta})\|_\infty  \le C_{\text{int}}$ and   $\|H_{\boldsymbol{\theta}_k \boldsymbol{\theta}_m} \| = \|\nabla^2_{\boldsymbol{\theta}_k \boldsymbol{\theta}_m} F_m(\boldsymbol{\theta}, \gamma_m)\|_\infty  \le H_{\text{int}}$ for all $m, k$.
     For each agent $m$, the number of other agents $k\neq m$ that directly influence agent $m$ through the gradient term is bounded as $|\mathcal{N}_m^{\text{grad}}| := |\{ k \neq m \mid \nabla_{\boldsymbol{\theta}_k} F_m(\boldsymbol{\theta},\bgamma) \neq 0 \}| \le K_{\text{grad}}$. \revbb{Similarly, the number of agents $k\neq m$ that influence agent $m$ through the cross-Hessian term is bounded by} $|\{ k \neq m \mid H_{\boldsymbol{\theta}_k \boldsymbol{\theta}_m}  \neq 0 \}|\leq K_{\text{hess}}$. We define $K = \max(K_{\text{grad}}, K_{\text{hess}})$ and assume $K$ is a constant independent of the system size $M$.
\end{assumption}
\revbb{This assumption bounds cross-agent coupling in the local convergence analysis by requiring the direct marginal effect of agent $k$'s parameters $\boldsymbol{\theta}_k$ on agent $m$'s objective $F_m$ to be limited. It implies that each agent is directly affected by at most $K$ other agents, independent of the total number of agents $M$. This is reasonable in large-scale MEC systems \cite{hu2015mobile}, where interactions are typically induced by shared queues and communication resources and are often more localized than fully dense. Nevertheless, the assumption may be restrictive in strongly coupled or highly heterogeneous systems.}

Based on Lemma \ref{convergence_IN} and  Assumptions \ref{as:local_strong_convexity}-\ref{as:bounded_interaction}, we are ready to present the convergence result for Algorithm \ref{FNQL} applied to game $\bG$:

\begin{theorem}[Linear Convergence of the Outer Loop] \label{T2}
        \revbb{Assuming Assumptions~\ref{as:local_strong_convexity}-\ref{as:bounded_interaction} hold,  the initial $\bgamma_0$ is sufficiently close to $\bgamma^*$, and the conditions $\frac{KH_{\text{int}}}{\mu} <1$ and $\frac{ C_D C_{\text{int}}}{ D_{\text{min}} (1/K- H_{\text{int}})} < 1$ hold, then the outer-loop update in  Algorithm \ref{FNQL} , characterized by the inexact Newton step in  \eqref{eq:inexact_newton_eq} converges locally linearly to  $\bgamma^*$. The corresponding equilibrium strategy is a Nash equilibrium of game  $\bG$.}
\end{theorem}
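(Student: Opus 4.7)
The plan is to cast Algorithm~\ref{FNQL}'s outer loop as an inexact Newton iteration for the map $\bgamma\mapsto \bF(\boldsymbol{\theta}(\bgamma),\bgamma)$ and then check the three hypotheses of Lemma~\ref{convergence_IN}: nonsingularity of the true Jacobian at the root, a forcing bound $\|\boldsymbol{r}_i\|\le \eta_{\max}\|\bF\|$ with $\eta_{\max}<1$, and sufficient closeness of $\bgamma_0$ (assumed). Lemma~\ref{l_eqv} already supplies the root, $\bF(\boldsymbol{\theta}^*,\bgamma^*)=\boldsymbol{0}$, and identifies it with a Nash equilibrium of $\bG$, so everything reduces to structural estimates on $J_{\bF}$ and on the residual induced by replacing it with its diagonal $J_{\bF}'$.

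First I would rewrite \eqref{Update-Nash} in incremental form. Since $\gamma_{m,i+1}-\gamma_{m,i}=F_m(\boldsymbol{\theta},\gamma_{m,i})/D_m(\boldsymbol{\theta},\gamma_{m,i})$ and $[J_{\bF}']_{mm}=-D_m$, the Dinkelbach step is exactly $J_{\bF}'(\boldsymbol{\theta},\bgamma_i)\bs_i=-\bF(\boldsymbol{\theta},\bgamma_i)$. Rearranging gives the inexact Newton equation \eqref{eq:inexact_newton_eq} with explicit residual
\[
\boldsymbol{r}_i = \bigl(J_{\bF}-J_{\bF}'\bigr)(\boldsymbol{\theta},\bgamma_i)\,\bs_i,
\]
so the whole analysis is driven by bounding the off-diagonal blocks of $J_{\bF}$, whose entries involve the sensitivities $\partial\boldsymbol{\theta}_k/\partial\gamma_n$ via \eqref{eq:true_jacobian}.

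Next I would control these sensitivities by differentiating the NE first-order conditions $\nabla_{\boldsymbol{\theta}_k}F_k=0$ with respect to $\gamma_n$, obtaining the coupled system $\sum_l H_{\boldsymbol{\theta}_k\boldsymbol{\theta}_l}\,\partial\boldsymbol{\theta}_l/\partial\gamma_n=-\nabla^2_{\boldsymbol{\theta}_k\gamma_n}F_k$. Assumption~\ref{as:local_strong_convexity} bounds $\|H_{\boldsymbol{\theta}_k\boldsymbol{\theta}_k}^{-1}\|_\infty\le 1/\mu$; Assumption~\ref{as:bounded_interaction} bounds each off-diagonal Hessian block by $H_{\text{int}}$ with at most $K$ neighbors per row; Assumption~\ref{as:bounded_denominator} bounds the mixed partial on the right-hand side (which equals $-\nabla_{\boldsymbol{\theta}_k}D_k$ when $k=n$) by $C_D$. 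Under the contraction condition $KH_{\text{int}}/\mu<1$, a Neumann-series / fixed-point argument on the sparse coupling matrix yields a uniform sensitivity bound on $\mathcal{N}_{\bgamma^*}$ and simultaneously proves that $J_{\bF}(\boldsymbol{\theta}^*,\bgamma^*)$ is nonsingular, supplying the first hypothesis of Lemma~\ref{convergence_IN}. Plugging this sensitivity bound into the off-diagonal entries of $J_{\bF}$, together with $C_{\text{int}}$ from Assumption~\ref{as:bounded_interaction}, gives an estimate of the form $\|J_{\bF}-J_{\bF}'\|_\infty\le K C_{\text{int}} C_D/(1/K-H_{\text{int}})$ (up to rescaling), and since $\|(J_{\bF}')^{-1}\|_\infty\le 1/D_{\text{min}}$ by Assumption~\ref{as:bounded_denominator},
\[
\|\boldsymbol{r}_i\|_\infty \le \frac{C_D C_{\text{int}}}{D_{\text{min}}(1/K-H_{\text{int}})}\,\|\bF(\boldsymbol{\theta},\bgamma_i)\|_\infty.
\]
The second stated hypothesis of the theorem then provides $\eta_{\max}<1$, at which point Lemma~\ref{convergence_IN} fires and delivers linear convergence of $\{\bgamma_i\}$ to $\bgamma^*$; Lemma~\ref{l_eqv} finishes by identifying $\bgamma^*$ with a Nash equilibrium of $\bG$.

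The main obstacle is the sensitivity step. Naively inverting the full coupled Hessian would give bounds that degrade with the number of agents $M$, which is unacceptable in an MEC setting with many devices. The crucial idea is to use the sparsity encoded in $K$ (bounded neighborhoods per agent) so that the Neumann expansion terminates in terms of $KH_{\text{int}}/\mu$ rather than $MH_{\text{int}}/\mu$; this is what makes both quantitative conditions of Theorem~\ref{T2} independent of the system size and gives a size-robust residual estimate. The rest of the argument is a relatively clean composition of the implicit function theorem, standard perturbation bounds, and Lemma~\ref{convergence_IN}.
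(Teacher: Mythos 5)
Your proposal is correct and follows essentially the same route as the paper's proof: you identify the Dinkelbach update with the diagonal-Jacobian inexact Newton step, bound the sensitivities $\partial\boldsymbol{\theta}_k/\partial\gamma_n$ via the implicit function theorem applied to the first-order NE conditions, use the sparse-coupling Neumann-series argument under $KH_{\text{int}}/\mu<1$ to bound the inverse coupling matrix, and arrive at the same forcing-term bound $\eta_{\max}=\frac{C_D C_{\text{int}}}{D_{\text{min}}(1/K-H_{\text{int}})}$ before invoking Lemma~\ref{convergence_IN} and Lemma~\ref{l_eqv}. No substantive differences from the paper's argument.
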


Please refer to \cite[Appendix E]{jin2024asynchronousfractionalmultiagentdeep} for a detailed proof. 
\revbb{The outer-loop convergence analysis, which accounts for multi-agent interactions and the inexact Newton method, constitutes a key contribution of this work in Theorem \ref{T2}.} This analysis is achieved under reasonable and justifiable assumptions about the problem structure. \revbb{Theorem \ref{T2} shows convergence to a Nash equilibrium of the decentralized game, rather than global optimality under a centralized network-wide AoI criterion.}

\section{Asynchronous Fractional MADRL Algorithm}\label{sec:algorithm}
\revbb{In this section, we introduce A. F. MADRL, an asynchronous fractional MADRL algorithm for approximating a Nash-stable policy profile of the SMG in Problem~\eqref{problem} with hybrid action spaces, as shown in Fig.~\ref{fig:AMADRL}. It serves as a deep asynchronous instantiation of the FNQL framework in Algorithm~\ref{FNQL}, where the Nash Q-functions in~\eqref{NashQ} are approximated by GRU-conditioned hybrid networks under CTDE.}

Building on the fractional MARL framework proposed in Section \ref{sec:ma_solution}, we first introduce a fractional cost module for updating $\bgamma$. This module approximates an NE when optimizing the fractional AoI objective in a multi-agent scenario.
However, when considering the MEC scheduling problem formulated as an SMG, synchronized decision-making and training become inefficient due to asynchronous task completions.  To address the challenge of asynchronous decision-making, we introduce an asynchronous trajectory collection mechanism. Additionally, we propose the corresponding hybrid strategy and value networks to apply the mechanism to hybrid action spaces of updating and offloading actions.

\begin{figure*}[t]
	\centering
 	\includegraphics[width=18cm]{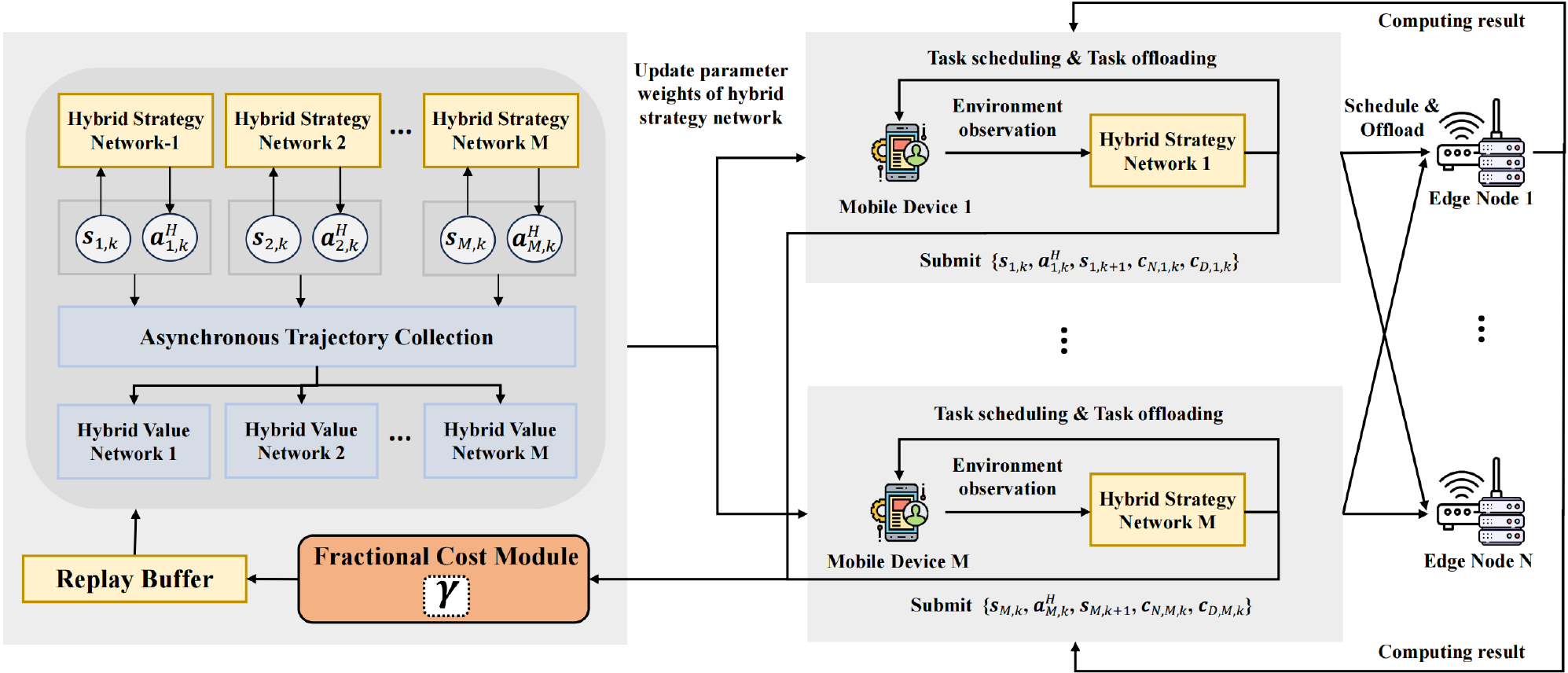}
	\caption{\revbb{Illustration of the proposed asynchronous fractional MADRL framework. The fractional cost module, asynchronous trajectory collection, and GRU-based history aggregation are used for centralized training, while the learned hybrid strategy networks are executed independently by individual mobile devices.}
	}
 \label{fig:AMADRL}
\end{figure*}

\subsection{Fractional Cost Module}
As in the proposed fractional MARL framework in Section \ref{sec:ma_solution}, we consider a set of episodes $i\in[E]$ and introduce a quotient coefficient $\gamma_i$ for episode $i$.  We define 
$\ba_{i}^{H}=[\ba_{m,i}^{H}]_{m=1}^{M}$, where $\ba_{m,i}^{H}\triangleq [(\ba_{m}^{\textsc{U}}, \ba_{m}^{\textsc{O}})]_{k=1}^{K}$ as the set of  updating and offloading strategies of mobile device $m\in\M$ in episode $i$, where superscript `H' refers to `history'. 


At each task  $k$, we determine a fractional cost and send it for training.  This process corresponds to the sub-game of the proposed fractional RL framework. Based on \eqref{sub_cost_problem}, we define the cost for all  $ i\in[E], m\in\mathcal{M}$, and $ k\in\mathcal{K}$ as:
\begin{align}
    c_{m,k}^{i}(\ba^{H}_{m},\ba^{H}_{-m})=& A(Y_{m,k}^i,Z_{m,k+1}^i,Y_{m,k+1}^i)\nonumber\\
    &- \gamma_{m}  \cdot (Y_{m,k}^i+Z_{m,k+1}^i), 
    \label{eq:cost-DRL}
\end{align}
where for episode $i$, $Y_{m,k}^i$ denotes the duration of task $k$, $Z_{m,k+1}^i$ is the waiting time before generating task $k+1$, and $A(Y_{m,k}^i,Z_{m,k+1}^i,Y_{m,k+1}^i)$ represents the trapezoid area in \eqref{trapezoid}. Note that $Y_{m,k}$ is a function of $\ba_{m}^{\textsc{O}}$, and $Z_{m,k+1} = a_{m,k+1}^{\textsc{U}}$. The fractional cost module tracks $(\ba^{H}_{m},\ba^{H}_{-m})$, or equivalently $Y_{m,k}$ and $Z_{m,k}$ for all $k=0,1,...,K$  throughout   the training process.



Finally, at the end of each episode $i$, the fractional cost module updates $\gamma_{m,i+1}$ by:
 \begin{align}
    \gamma_{m}^{i+1}=\frac{ \sum\limits_{k=0}^{K}\delta^{k}A_m(Y_{m,k}^{i},Z_{m,k+1}^{i},Y_{m,k+1}^{i})}{\sum\limits_{k=0}^{K}\delta^{k}(  \revr{Y_{m,k}^{i}}+Z_{m,k+1}^{i})}. \label{eq:gammaupdate}
\end{align}

\subsection{Asynchronous Trajectory Collection Mechanism}


\revbb{Our framework adopts CTDE to support asynchronous actions while preserving decentralized execution. Specifically, centralized training aligns asynchronous events via a global event index $T$ and constructs globally consistent samples for policy and value learning. At execution time, each mobile device acts independently using only local information.}

We compare our asynchronous mechanism with traditional MARL and VarlenMARL \cite{chen2021VarlenMARL}, an asynchronous MADRL algorithm, in the following example:
\begin{itemize}
    \item \edit{Traditional MADRL algorithms such as MADDPG \cite{lowe2017multi} and MAPPO require synchronous joint information from all agents at each time step, making them unsuitable for asynchronous decision-making. Traditional MARL algorithms gather  joint data from all agents at each time step.} The collected joint trajectory is
\begin{align}
    [(s_{1}, \ba_{1,1},\ba_{2,1}), (s_{2}, \ba_{1,2},\ba_{2,2}), (s_{3}, \ba_{1,3},\ba_{2,3}), \cdots].
\end{align}
\item VarlenMARL \cite{chen2021VarlenMARL}, an asynchronous MADRL algorithm,  addresses this by allowing variable-length trajectories. However, it still struggles to fully utilize agent interactions in asynchronous environments. It  allows agents to collect trajectories asynchronously. It merges  an agent's data with the latest padded data from other agents.  Let $\bs_m^{0}, \ba_m^{0}$ be the initial trajectory  of agent $m$. Fig.  \ref{fig:trajectory}(a) shows the trajectories of Agents $1$ and $2$ as:
\begin{align}
    &[(\s_{1,1}, \ba_{1,1},(\s_{2,0}, \ba_{2,0})),(\s_{1,2}, \ba_{1,2},(\s_{2,1}, \ba_{2,1})),\nonumber \\
    &(\s_{1,3}, \ba_{1,3},(\s_{2,1}, \ba_{2,1})) \cdots], [(\s_{2,1}, \ba_{2,1},(\s_{1,1},\ba_{1,1})),\nonumber \\
    & (\s_{2,2}, \ba_{2,2},(\s_{1,3}, \ba_{1,3})), (\s_{2,3}, \ba_{2,3},(\s_{1,3}, \ba_{1,3})), \cdots].\nonumber 
\end{align}

\item \revbb{As shown in Fig. \ref{fig:trajectory}(b), during centralized training, a global event sequence is maintained to align asynchronous experience tuples collected from different agents.  Let $(\bs_{T}, \ba_{T})$ denote the state-action pair associated with the $T$-th global event, generated by agent $m_T$.} To capture the history of interactions among agents, we introduce the time-aggregated history state, $H_T \in \mathbb{R}^d$. $H_T$ is a fixed-dimensional vector  that serves as a summary representation derived from the sequence of all state-action pairs corresponding to global events from $1$ to $T-1$. This history state is updated sequentially using a history aggregation transition function, denoted as $TA$. Upon receiving the $T$-th state-action pair $(\bs_{T}, \ba_{T})$, the controller computes the subsequent history state $H_{T+1}$ via the update rule:
\begin{align} \label{eq:history_update_revised}
    H_{T+1} = TA\left( (\bs_{T}, \ba_{T}), H_T \right)
\end{align}
where $TA: (\mathcal{S} \times \mathcal{A}) \times \mathbb{R}^d \to \mathbb{R}^d$ represents the transition function that maps the current event's state-action information and the previous history state to the new history state. In our implementation, $TA$ is implemented by a GRU network \cite{chung2014empirical}, processing an embedding of the input pair $(\bs_{T}, \ba_{T})$ and the previous hidden state $H_T$ to produce $H_{T+1}$. Implementation details of the RNN networks are provided in Appendix F \cite{jin2024asynchronousfractionalmultiagentdeep}. The resulting  collected trajectory is a sequence ordered by the global event index $T$, as shown in Fig.~\ref{fig:trajectory}(b):
\begin{align}
    &[(\s_{1,1},\ba_{1,1},H_1),(\s_{2,1},\ba_{2,1},H_2),(\s_{1,2},\ba_{1,2},H_3),\nonumber \\
    &(\s_{1,3},\ba_{1,3},H_4),(\s_{2,2},\ba_{2,2},H_5),(\s_{2,3},\ba_{2,3},H_6), \cdots].\nonumber
\end{align}
This mechanism facilitates more accurate evaluations of states and $Q$ values by leveraging global states, actions, and the aggregated information derived from all agents' prior actions.
\end{itemize}
\begin{figure}
    \centering
    \includegraphics[height=2.3cm]{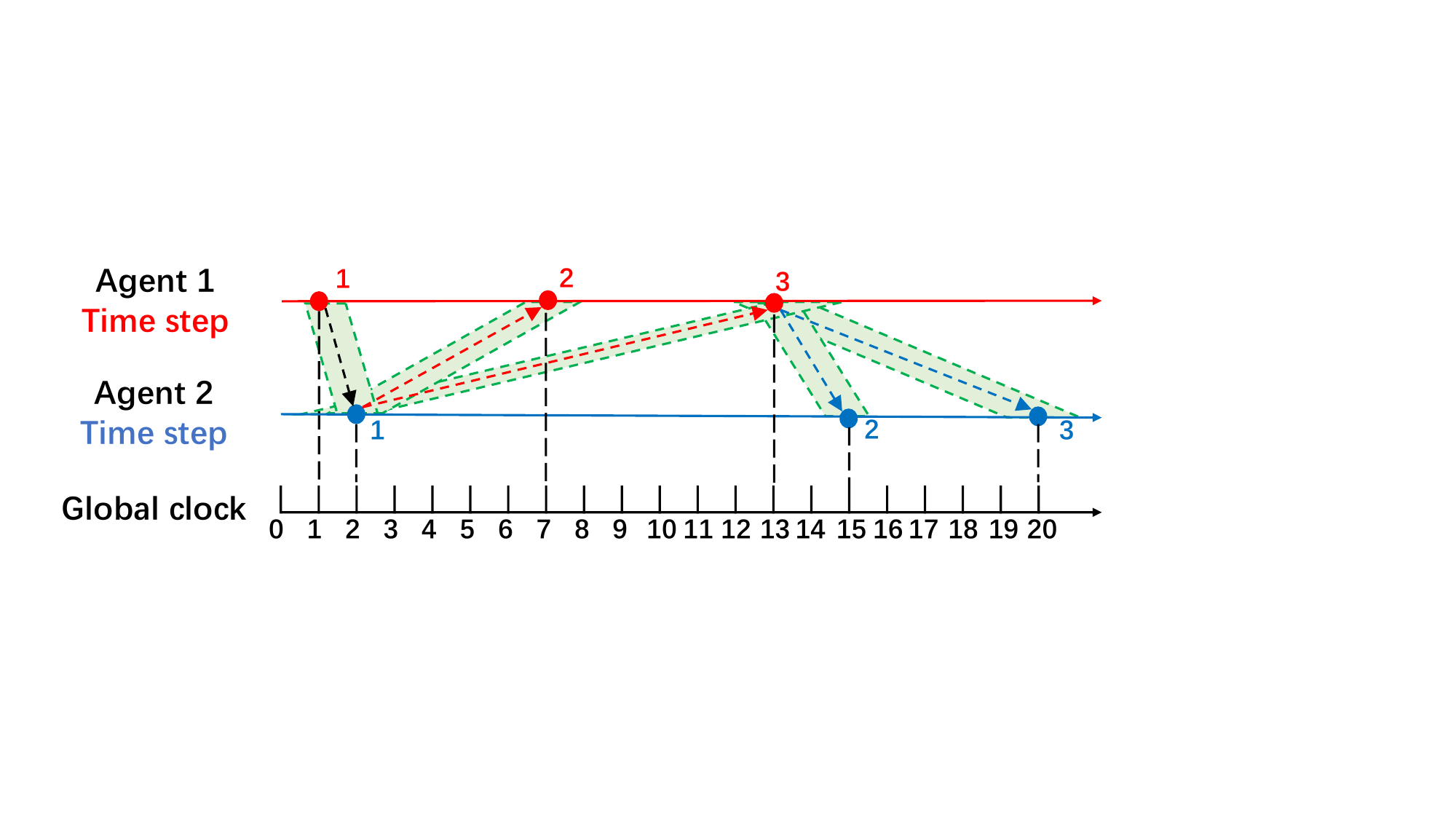} \\
    (a)
    \includegraphics[height=2.3cm]{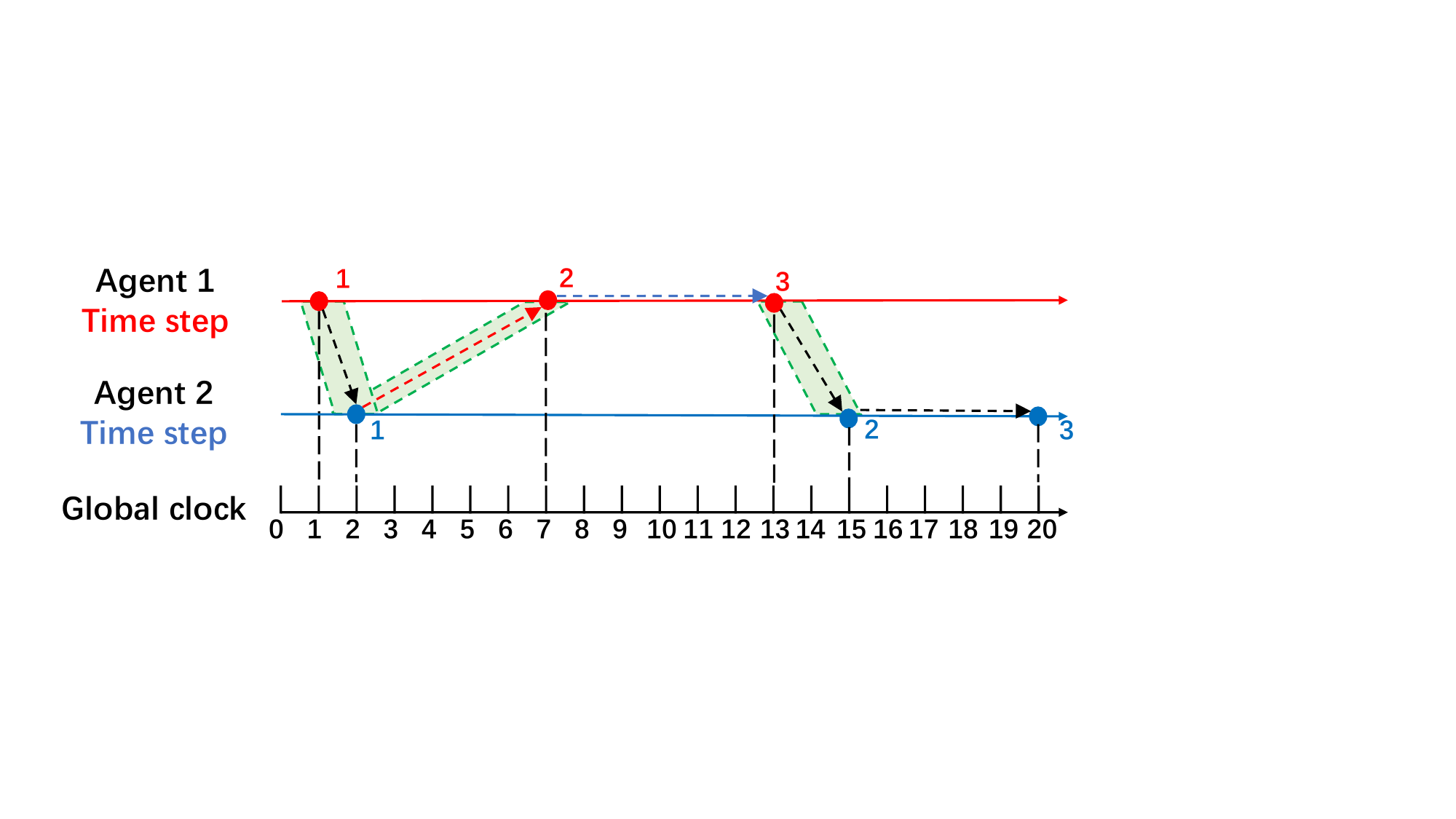} \\
    (b)
    \caption{Comparison of trajectory collection mechanisms under VarlenMARL, and our proposed RNN-PPO. (a) In VarlenMARL, an agent collects its trajectory as well as the latest information from other agents. 
    (b) In RNN-PPO, an agent collects its trajectory with the time-aggregated information from historical decision events.  
    }
    \label{fig:trajectory}
\end{figure}

\subsection{Hybrid Strategy Network}

Our algorithm employs independent learners to train hybrid strategy and value networks for each device asynchronously. \revbb{Within the CTDE framework, the GRU-based shared history is used only during centralized training; at execution time, each agent acts using only its locally available information.}\footnote{\revbb{Training is performed centrally on a trusted server; only the learned policy parameters are deployed to devices.}} 

The asynchronous trajectory collection mechanism synthesizes time-aggregating data. It utilizes state-action pairs from other agents and historical events to improve agent interactions. \revb{We implement this mechanism with a GRU, forming the RNN-D3QN and RNN-PPO architectures for discrete and continuous actions, respectively. Please refer to Appendix F for details.} \revb{On a central server, the collected asynchronous trajectories are sampled to perform synchronous parameter updates for all agents' networks.} \revb{The resulting off-policy learning challenge is stabilized by the inherent design of our DRL algorithms (e.g., importance sampling in PPO) and by conditioning updates on the aggregated historical context from the GRU.} These hybrid networks address both the task offloading and updating processes.



\section{Performance Evaluation}
\revbb{Unless otherwise specified, the default setting uses 20 mobile devices. We further vary the number of agents to evaluate scalability.} \revr{Our experimental setup  follows the settings outlined in \cite[Table I]{Tang2020TMC}, with key experiment settings and parameters provided in \cite[Appendix G]{jin2024asynchronousfractionalmultiagentdeep}.}
We compare our proposed  algorithms and baseline algorithms as follows.
\begin{itemize}
    \item \textit{Non-Fractional MADRL (denoted by Non-F. MADRL)}: \revb{This benchmark lets each mobile device learn hybrid DRL algorithms individually, without a fractional scheme.} \edit{The algorithm leverages D3QN  to generate discrete  offloading decisions, while employing PPO to produce continuous updating decisions. In comparison to our proposed fractional framework, this benchmark is non-fractional and lacks the asynchronous trajectory collection mechanism.} It approximates the  ratio-of-expectation average AoI by an expectation-of-ratio expression: $\mathop{\text{minimize}}_{\pii_{m}}~~ \mathbb{E}\left[\left.\frac{ A(Y_{m,k},Z_{m,k+1},Y_{m,k+1})}{ Y_{m,k}+Z_{m,k+1}}\right|\pii_m\right].$ This approximation to tackle the fractional challenge tends to cause a large loss in accuracy.
    \item \textit{Asynchronous Non-Fractional MADRL (denoted by A. Non-F. MADRL)}: This benchmark incorporates the asynchronous trajectory collection mechanism into the Non-F. MADRL algorithm to demonstrate its effectiveness. While it employs the same asynchronous trajectory collection mechanism as our proposed A. F. MADRL algorithm, it lacks the fractional framework.
    \item  \textit{A. F. MADRL}: The proposed A. F. MADRL algorithm incorporates two key elements. First, it uses a fractional scheme to approximate the NE. Second, it employs a trajectory collection mechanism, which aggregates information from both historical records and other agents.
    \item \textit{Random}: This method randomly schedules updating and offloading decisions within the action spaces. 
    \item \textit{H-MAPPO}: Multi-agent proximal policy optimization \cite{yu2022surprising} with hybrid action spaces. 
    \item \textit{VarlenMADRL}: Adapted VarlenMARL (denoted by VarlenMADRL): We adapt the variable-length trajectory collection mechanism of VarlenMARL \cite{chen2021VarlenMARL} to our deep multi-agent reinforcement learning setting with hybrid action spaces.
    \item \textit{F. DRL}: Fractional DRL (F. DRL) \cite{jin2024fractional} adapts a fractional framework based on Dinkelbach's method to DRL algorithms without considering the multi-agent scenario.
    \revb{\item  \textit{HAPPO}:  Heterogeneous-Agent Proximal Policy Optimisation (HAPPO) algorithms \cite{kuba2021trust} with hybrid action spaces.}
    \revb{\item \textit{PGOW}: Potential-game-inspired offloading \& waiting baseline (PGOW) that combines $\epsilon$-greedy best-response offloading with log-linear learning for waiting. The offloading strategy considers estimated service time and queue-dependent congestion. These dynamics are standard in potential game settings \cite{monderer1996potential,blume1993statistical}.}
    \item \textit{ASM}: Asynchronous and Scalable Multi-agent Proximal Policy Optimization (ASM-PPO) \cite{liang2022asm} allows asynchronous learning and decision-making based on PPO.

\end{itemize}


\begin{figure}[t]
\centering
\includegraphics[width=4.35cm]{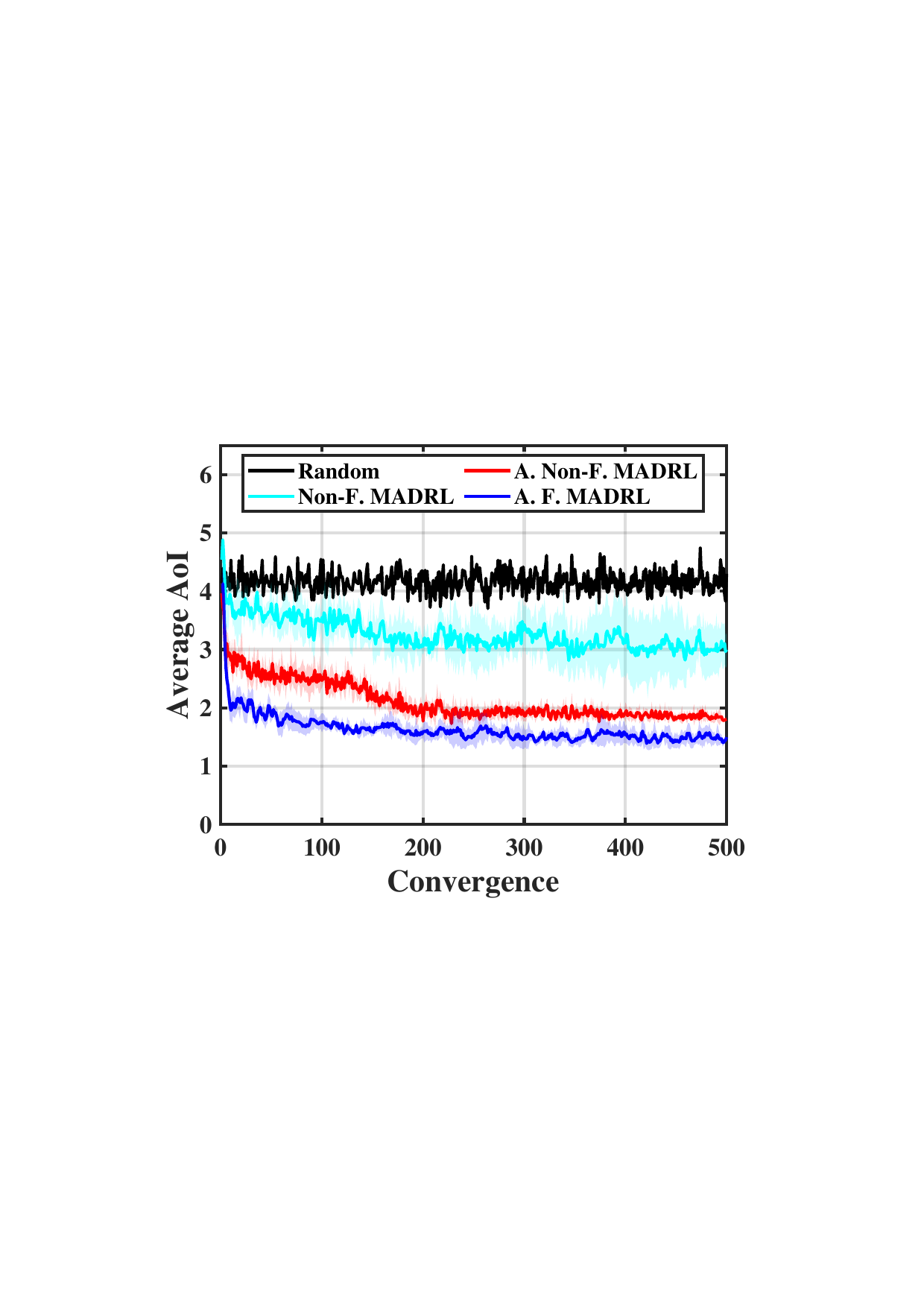}\label{fig:aoi}
\includegraphics[width=4.35cm]{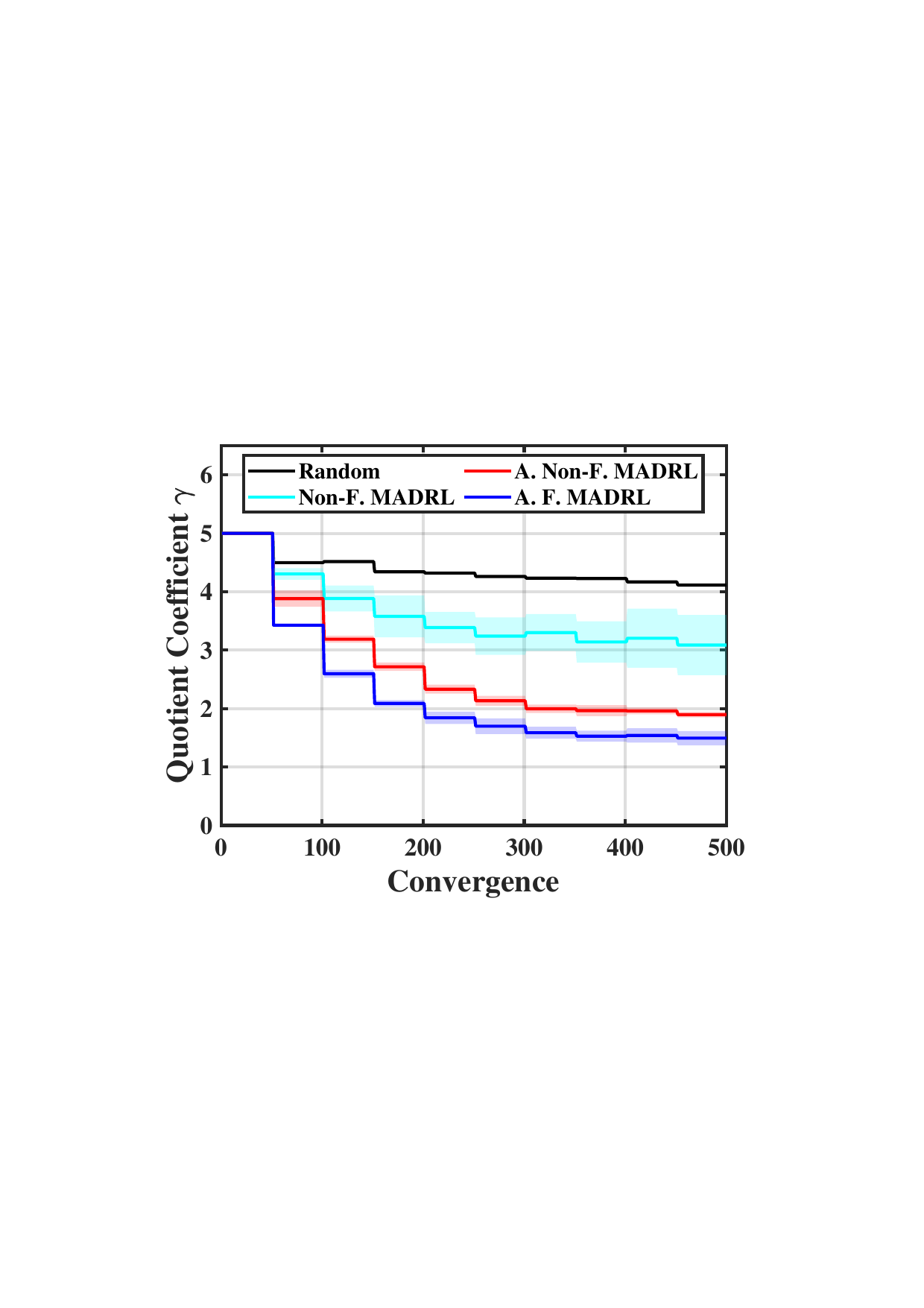}\label{fig:gamma}\\
\quad(a)\qquad\qquad\qquad\qquad\qquad\qquad(b)
\caption{
\edit{Convergence of (a)  average AoI and (b) the exponential moving average values of (Dinkelbach) quotient coefficients $\bgamma$ across devices, where $\bgamma$ is updated every $50$ episodes. The unit of AoI is measured in seconds.}}
\label{fig:convergence}
\end{figure}

Our experiments first evaluate the convergence of our proposed algorithms. We then compare their performance against  baselines. \edit{
The average AoI during evaluation is computed using the following equation:
\begin{align}
     \Delta_m=\frac{ \sum\limits_{k=0}^{K}A_m(Y_{m,k},Z_{m,k+1},Y_{m,k+1})}{\sum\limits_{k=0}^{K}( {Y_{m,k}}+Z_{m,k+1})}.
\end{align}
}
Our experiments systematically evaluate the impacts of several factors:  edge capacity, drop coefficient, task density, mobile capacity, processing variance, number of agents, and channel bandwidth.

\textbf{Convergence}: \edit{Fig. \ref{fig:convergence} illustrates the convergence result of the Non-F. MADRL algorithm, as well as the proposed A. Non-F. MADRL and A. F. MADRL algorithms. We examine two key aspects: the convergence of the average AoI and that of the average fractional coefficient, $\gamma$, which should converge concurrently.}

\edit{
In Fig. \ref{fig:convergence}(a), both A. Non-F. MADRL and A. F. MADRL algorithms are shown to converge after approximately 300 episodes. \revb{Notably, A. F. MADRL achieves a consistent reduction in converged average AoI compared to A. Non-F. MADRL, demonstrating the efficacy of the proposed fractional framework.} Furthermore, the significant performance gap between A. Non-F. MADRL and Non-F. MADRL shows the  impact of the asynchronous trajectory collection mechanism. \revbb{On the other hand, the random scheme does not learn; its quotient coefficient  in Fig.~\ref{fig:convergence}(b) gradually approaches the stationary mean in Fig.~\ref{fig:convergence}(a), starting from its initial value.}
}

\begin{figure*}[t]
    \centering
    \includegraphics[height=4.52cm]{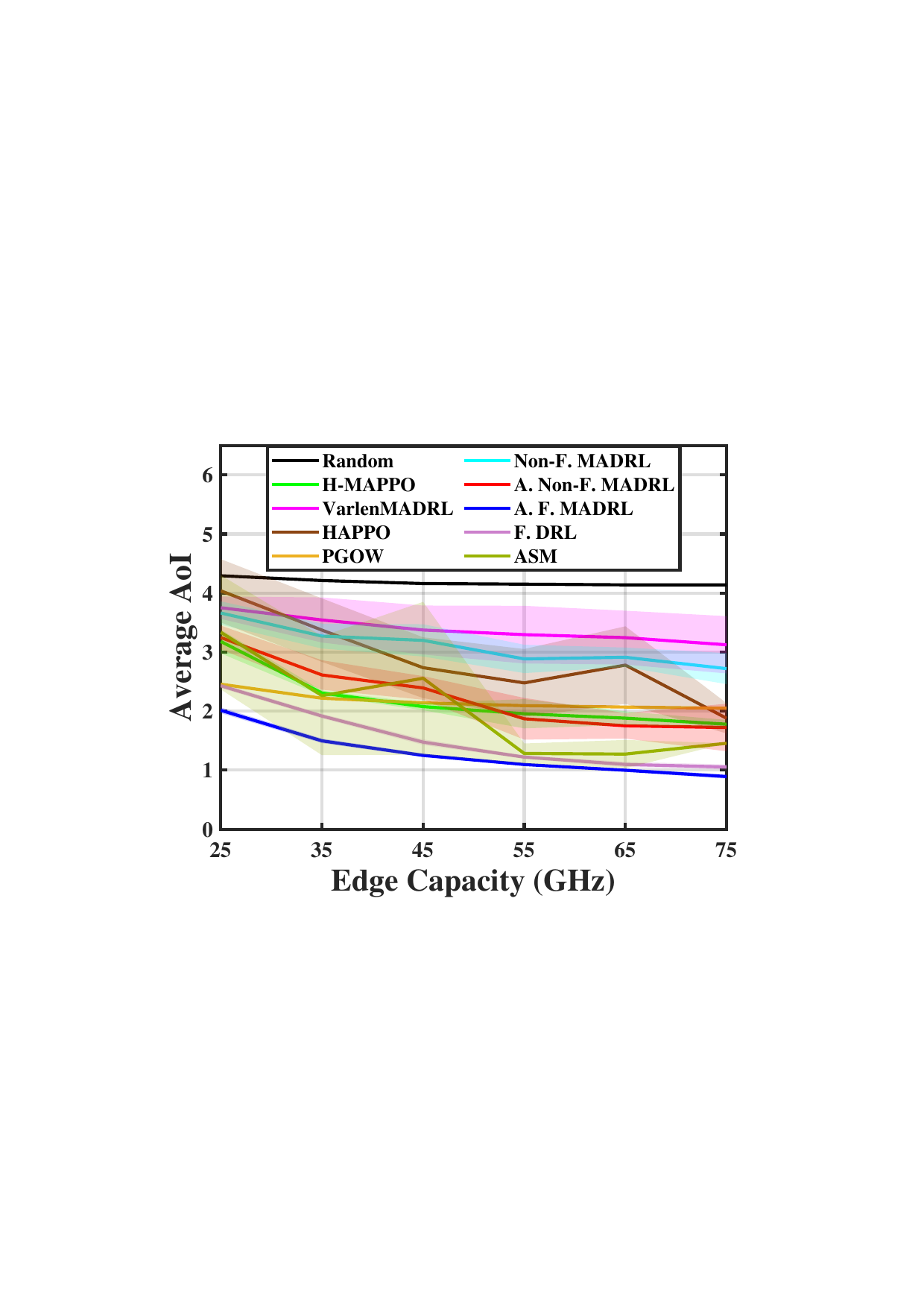}\label{fig:edge}
    \includegraphics[height=4.52cm]{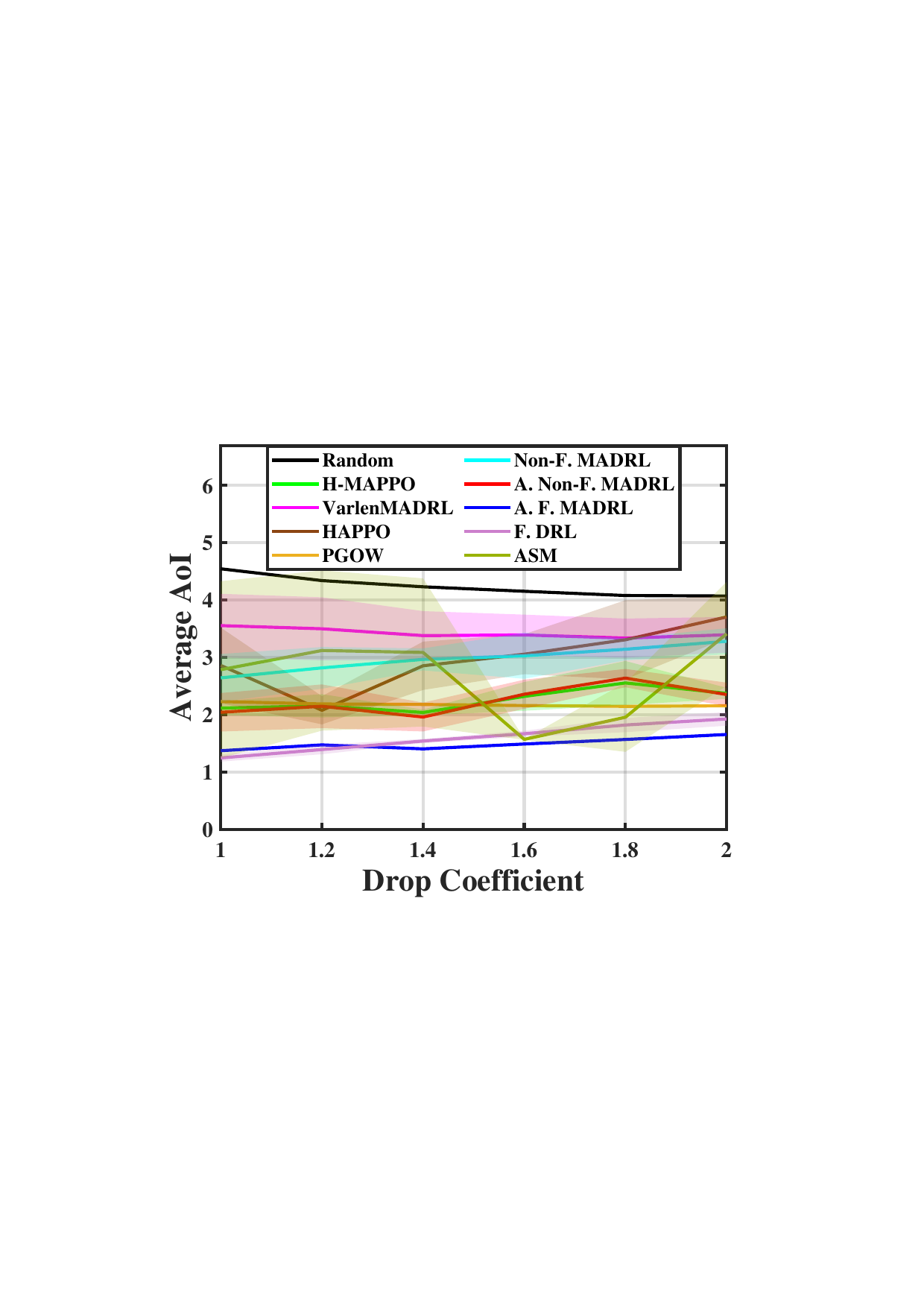}\label{fig:drop}
    \includegraphics[height=4.52cm]{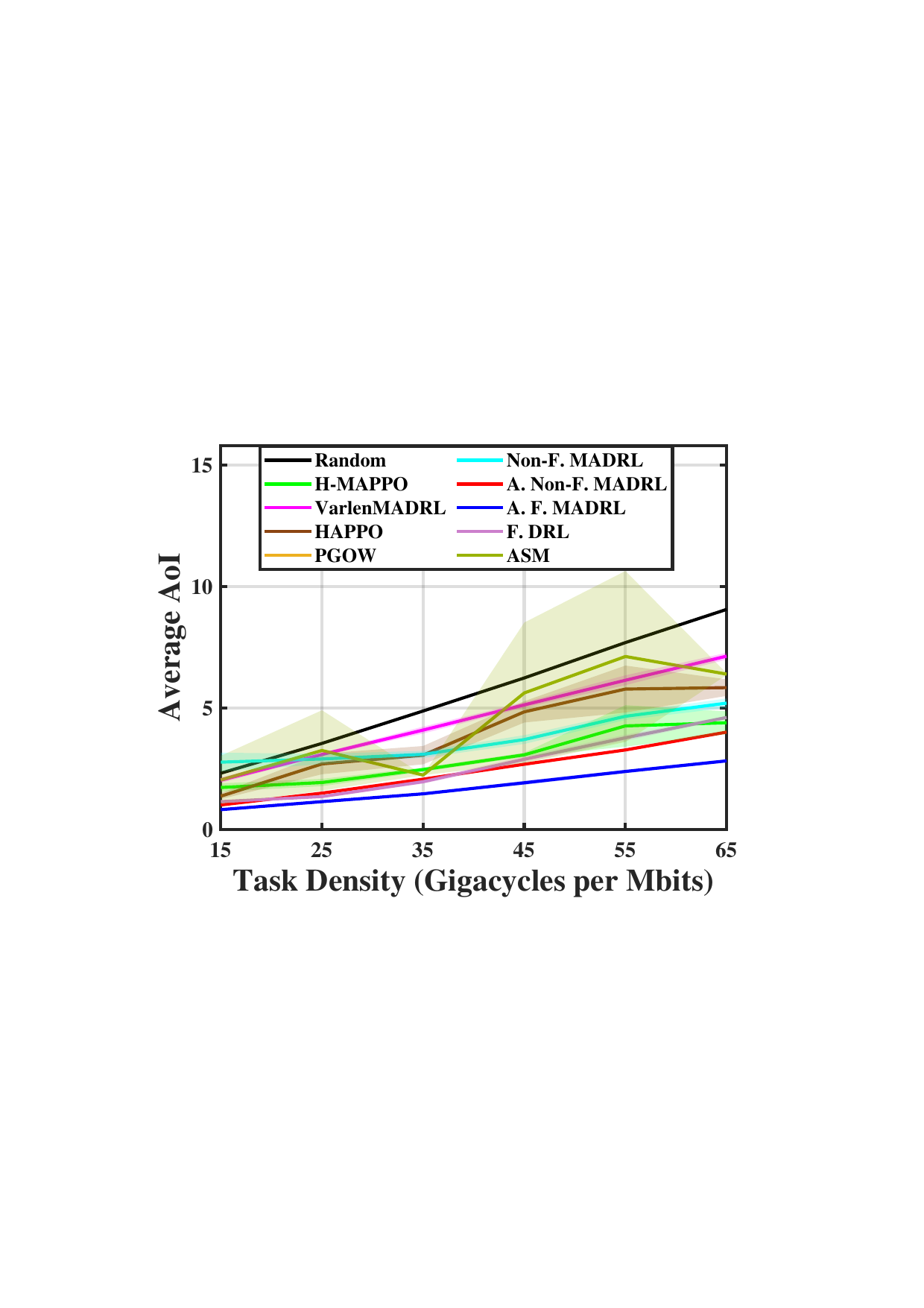}\label{fig:density}\\
    (a)\qquad\qquad\qquad\qquad\qquad\qquad\qquad\qquad(b)\qquad\qquad\qquad\qquad\qquad\qquad\qquad\qquad(c)\\
    \includegraphics[height=4.5cm]{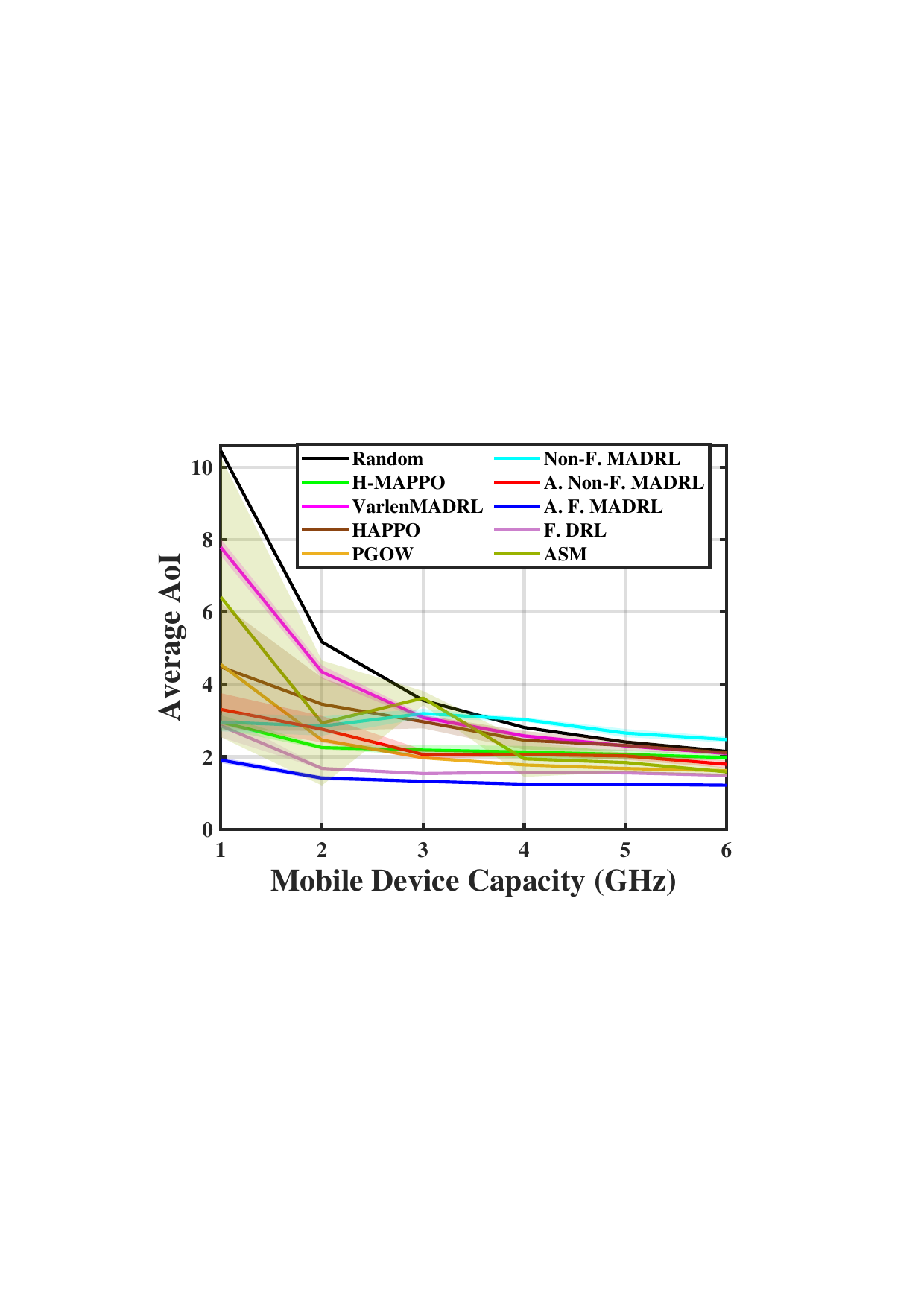}\label{fig:mobile}
    \includegraphics[height=4.45cm]{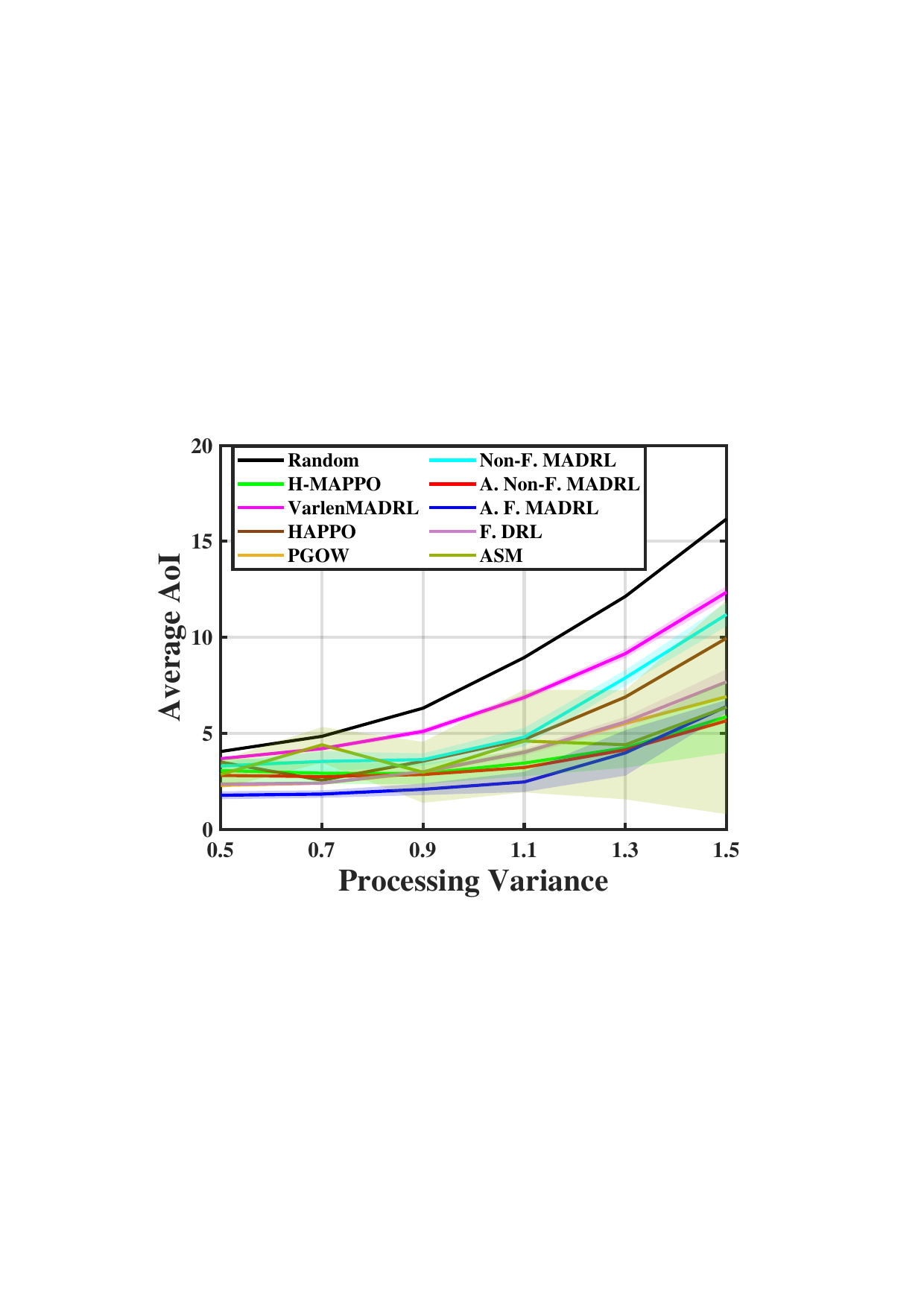}\label{fig:agent}
    \includegraphics[height=4.5cm]{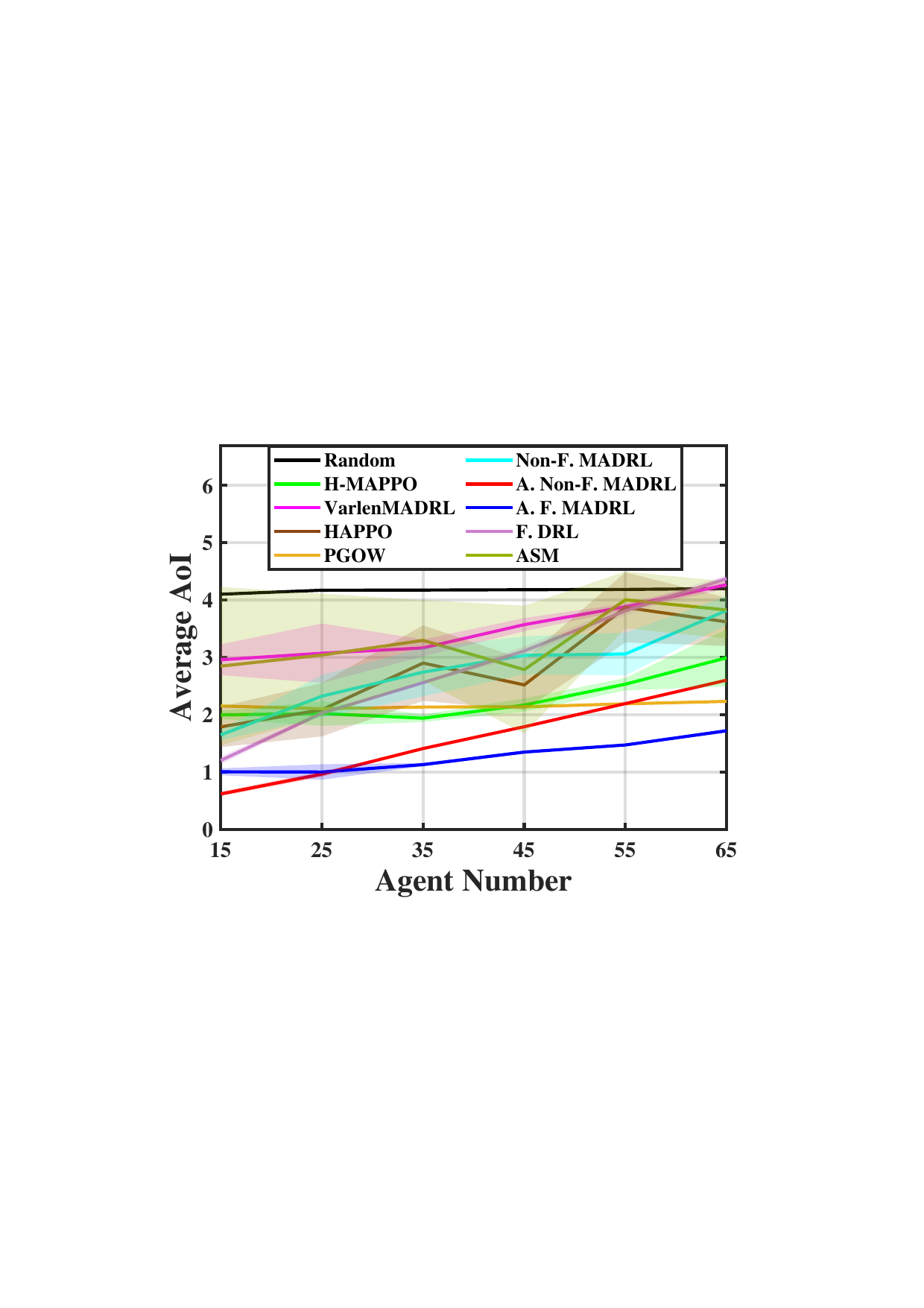}\label{fig:var}\\
    (d)\qquad\qquad\qquad\qquad\qquad\qquad\qquad\qquad(e)\qquad\qquad\qquad\qquad\qquad\qquad\qquad\qquad(f)
    
    \caption{
    \edit{Performance  under different  (a) processing capacities of edges,
    (b) drop coefficients, (c) task densities,  (d) processing capacities of mobile devices, \\(e) processing variances, and (f) numbers of agents. }
    }
    \label{fig:compare} 
\end{figure*}

\textbf{Edge Capacity}: In Fig. \ref{fig:compare}(a), we evaluate different processing capacities of edge servers. \revb{The use of the fractional cost module in A. F. MADRL results in an average AoI reduction of 31.1\% compared to the A. Non-F. MADRL approach.} The proposed A. F. MADRL algorithm consistently achieves smaller average AoI  compared to other benchmarks that do not consider fractional AoI and asynchronous settings. \revb{A. F. MADRL reduces the average AoI by up to 21.9\% compared with F. DRL at 35 GHz. We observe that the performance gap between the proposed A. F. MADRL and other baseline schemes becomes more pronounced when the edge capacity is limited. These results demonstrate that an optimized design tailored for asynchronous settings can achieve superior performance, especially in resource-constrained environments. }

\textbf{Drop Coefficient}: Fig. \ref{fig:compare}(b) illustrates  the results of different drop coefficients, which are the ratios of drop time $\bar{Y}$ to the task processing duration. While the average AoI of random scheduling decreases as the drop coefficient increases, other RL techniques show no clear trend. This suggests that the updating policy may serve as a dropping mechanism.   \revb{With different drop coefficients, our proposed A. F. MADRL algorithm achieves approximately 33.8\% lower average AoI than H-MAPPO, on average.}

\textbf{Task Density}: Fig. \ref{fig:compare}(c) shows the AoI performance under varying task densities.  These densities affect overall task loads and expected processing duration at mobile devices and edge servers.  \revbb{A. F. MADRL outperforms Non-F. MADRL by 26.2\% on average.} Additionally, our A. F. MADRL algorithm outperforms the best existing baselines by 42.0\% on average. At high task density (65 Gigacycles per Mbit), A. F. MADRL reduces the average AoI by up to 38.8\% compared to F. DRL.  These results demonstrate the effectiveness  of our asynchronous trajectory collection mechanism, especially when computational resources are highly restrained.

\textbf{Mobile Capacity}:  Fig. \ref{fig:compare}(d) presents results for  different mobile device capacities. \revb{Our A. F. MADRL algorithm reduces average AoI by 20.4\% on average compared to F. DRL algorithm.} \edit{The performance differences diminish as mobile device computational capacities increase, primarily because devices with higher processing capabilities can handle most tasks locally.} Consequently, the effectiveness of task scheduling diminishes. 

\textbf{Processing Variance}: Fig. \ref{fig:compare}(e) demonstrates the performance of  algorithms under different variances of processing time. In this particular experiment, processing times are modeled using a lognormal distribution.  \revb{Our A. F. MADRL algorithm outperforms F. DRL algorithm by up to 30.7\% when the processing variance is 1.1. These results demonstrate the robustness of our asynchronous trajectory collection mechanism under lognormal processing time distributions.  }

\textbf{Number of Mobile Devices}: Fig. \ref{fig:compare}(f) shows the performance with varying numbers of mobile devices and fixed edge servers. \revbb{Specifically, under the large-scale setting with 65 agents, where competition for edge-server queues is severe, A. F. MADRL improves performance by up to 34.0\% over A. Non-F MADRL and by 41.0\% on average over the best existing baselines.} These results demonstrate that our proposed framework scales robustly as competition for resources intensifies.

\textbf{Bandwidth of Time-Varying Channels:} We compare the average AoI  between proposed A. F. MADRL algorithm and other baselines in Section VII under different bandwidths. We set the channel parameters as in Table III, Appendix G.
\begin{figure}
    \centering
    \includegraphics[width=6cm]{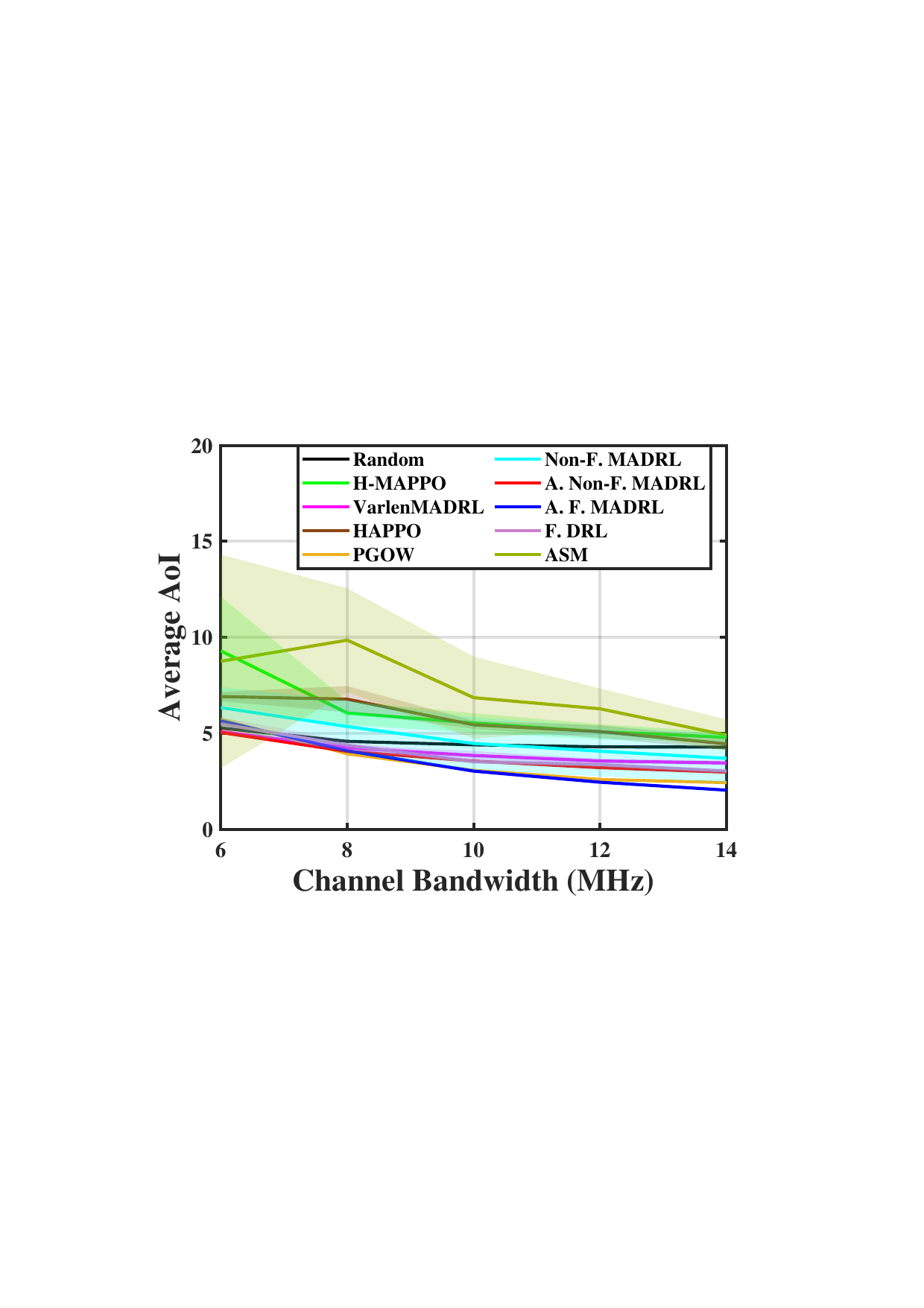} 
    \caption{\revbb{Performance under varying bandwidths with time-varying channels.}}\label{fig:channel}
\end{figure}
\revb{As shown in Fig. \ref{fig:channel}, traditional multi-agent DRL algorithms that train synchronously like MAPPO  cannot converge well because the non-stationary time-varying channel causes additional asynchrony between agents. \revb{Our algorithm outperforms the best baseline, F.\ DRL, by about 15.6\% on average.} The performance gap between most algorithms narrows at lower channel bandwidth because the transmission time increases, the strategy of offloading to edges cannot work well compared to processing locally.}

\edit{Additionally, we compare the computational requirements of our proposed A. F. MADRL algorithm with those of the other baseline algorithms: Non-F. MADRL, H-MAPPO, and VarlenMADRL. These experiments were performed under identical system and network settings, including 20 agents, a batch size of 16, and the use of hybrid action spaces. The results are summarized in Table \ref{tab:complexity}.}
\begin{table}[t]
    \centering
    \edit{\caption{Computational Requirements of Different Schemes}
    \label{tab:complexity}
\setlength{\tabcolsep}{3pt} 
\begin{tabular}{lcccc}  
    \toprule
    & A. F. MADRL & Non-F. MADRL & H-MAPPO  &VarlenMADRL\\ 
    \midrule
    MFLOPs& 17.86& 16.39& 10.90 &15.65\\
    \bottomrule
\end{tabular}}
\end{table}

\edit{As shown in Table \ref{tab:complexity}, the computational cost of A. F. MADRL, measured in MFLOPs, is slightly higher than that of the baseline algorithms, yet only marginally above VarlenMADRL ($1.14\times$). This increased cost is accompanied by significant AoI reduction gains, as demonstrated in Fig.~\ref{fig:compare}. This trade-off is acceptable for real-time applications. }

\section{Conclusion}
In this paper, we proposed a unified framework for joint task updating and offloading to minimize AoI in MEC systems modeled as an SMG. \revb{First, we introduced a fractional RL framework for the single-agent scenario with fractional objectives.} Second, we developed a fractional MARL framework for multi-agent problems with fractional objectives. We demonstrated that this framework theoretically converges to Nash equilibrium by showing that it is equivalent to an inexact Newton's method.  Third, we presented an asynchronous trajectory collection mechanism to address asynchronous multi-agent problems in the SMG.  Evaluation results show significant performance improvements in average AoI for both the fractional framework and asynchronous trajectory collection mechanism.  

\revbb{Future work will explore broader AoI-related problems, such as timely multi-task remote inference \cite{shisher2025computation}, and other fractional MEC objectives, including energy consumption and security satisfaction in MEC systems. We will also extend the fractional MARL framework to strongly coupled or highly heterogeneous MEC deployments and study price-of-anarchy-type efficiency bounds for the resulting Nash equilibria.}

\bibliographystyle{IEEEtran}
\bibliography{IEEE}
\appendices
\setcounter{section}{0} 
\renewcommand{\thesection}{\Alph{section}}

\section{Time Complexity of Inner loop in FQL Algorithm }
\begin{proposition}\label{P1}
The proposed FQL Algorithm satisfies the stopping condition $\epsilon_i<-\alpha Q_i(\bs_0,\ba_i)$ for some $\alpha\in(0,1)$, then
after
\begin{equation} \begin{aligned}
    T_i=\left\lceil\frac{11.66 \log(2|\mathcal{Z}|/(E\zeta))}{\alpha^2}\right\rceil
\end{aligned} \end{equation}
steps of SQL in \cite{SQL}, the uniform approximation error $\norm{Q^*_{\gamma_i}-Q_i}\leq \epsilon_i$ holds for all $i\in[E]$, with a probability of $1-\zeta$ for any $\zeta\in(0,1)$.
\end{proposition}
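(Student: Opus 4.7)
The plan is to combine the non-asymptotic high-probability convergence guarantee of Speedy Q-Learning (SQL) from the cited work with the self-referential structure of the stopping condition $\epsilon_i < -\alpha Q_i(\bs_0, \ba_i)$. First, I would invoke the SQL bound, which states that for a fixed MDP with discount factor $\delta$ and normalized per-step cost, $T$ SQL iterations yield
\begin{equation*}
\|Q^*_{\gamma_i} - Q_i\|_\infty \leq \frac{C \sqrt{\log(2|\mathcal{Z}|/\zeta_i)}}{\sqrt{T}}
\end{equation*}
with probability at least $1 - \zeta_i$, where $C$ is an absolute constant that absorbs the $(1-\delta)$ factors and the effective range of $c_N - \gamma_i c_D$. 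This inequality provides the absolute error control we need and is the only probabilistic ingredient.

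Next, I would translate this absolute bound into the \emph{relative} stopping criterion. Since $Q_i$ is close to $Q^*_{\gamma_i}$, a reverse triangle inequality gives $|Q_i(\bs_0,\ba_i)| \geq |Q^*_{\gamma_i}(\bs_0,\ba^*_i)| - \|Q^*_{\gamma_i}-Q_i\|_\infty$, so that once the SQL error falls below a fraction of the equilibrium value, the stopping condition is verified at the learned estimate $Q_i$. The key scale-invariance observation is that both the SQL constant $C$ and the magnitude $|Q_i(\bs_0, \ba_i)|$ scale with the common cost bound $\|c_N - \gamma_i c_D\|_\infty$ across episodes; consequently, the ratio needed to certify the stopping condition is a \emph{universal} constant, independent of $\gamma_i$ and thus of the episode index. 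Inverting the SQL inequality for $\epsilon_i = -\alpha Q_i(\bs_0,\ba_i)$ yields a sufficient iteration count of the form $T \geq C'\log(2|\mathcal{Z}|/\zeta_i)/\alpha^2$, with the numerical constant $11.66$ emerging from $C'$ after this scale normalization.

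Finally, I would handle the probabilistic aggregation across episodes by a union bound: choosing the per-episode failure budget $\zeta_i = \zeta/E$ ensures that with probability at least $1 - \zeta$ the approximation guarantee $\|Q^*_{\gamma_i} - Q_i\| \leq \epsilon_i$ holds simultaneously for every $i \in [E]$, yielding the $\log(2|\mathcal{Z}|/(E\zeta))$ (modulo sign convention on $\zeta/E$) factor inside $T_i$.

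The main obstacle I anticipate is step two, namely showing that $T_i$ does \emph{not} blow up across episodes even though $-Q_i(\bs_0,\ba_i) \to 0$ as $\gamma_i \to \gamma^*$ by Theorem~\ref{T1}. Resolving this requires exploiting the fact that the instantaneous reward $c_N - \gamma_i c_D$ shrinks at exactly the same rate as $Q_i(\bs_0,\ba_i)$ along the Dinkelbach iteration, so both the numerator and denominator of the ratio $C^2/Q_i(\bs_0,\ba_i)^2$ in the inverted SQL bound contract together, leaving a finite, episode-independent constant. Handling this commensurate shrinkage carefully — likely via the recursion relating $N_{\gamma_i}$ and $D_{\gamma_i}$ established in the Bellman-decomposition earlier — is the technical heart of the argument; the union bound and the direct application of the SQL rate are otherwise routine.
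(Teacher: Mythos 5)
Your proposal follows essentially the same route as the paper's (very terse) proof sketch: invoke the Speedy Q-Learning sample-complexity corollary (the source of both the constant $11.66$ and the $\log(2|\mathcal{Z}|/\cdot)$ factor), invert it at accuracy $\epsilon_i$, observe that the required step count is proportional to $Q_i^2/\epsilon_i^2$ and hence to the episode-independent constant $1/\alpha^2$ under the stopping rule, and union-bound over the $E$ episodes. You have also correctly isolated the one non-routine step — arguing that the effective cost range under $\gamma_i$ shrinks commensurately with $|Q_i(\bs_0,\ba_i)|$ so that $T_i$ stays bounded as $\gamma_i\to\gamma^*$ — which is precisely the claim the paper asserts (without further detail) in its sketch, along with the $\zeta/E$ versus $E\zeta$ bookkeeping inside the logarithm.
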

\noindent{\textit{Proof Sketch}:} The main proof of Proposition \ref{P1} is based on \cite{SQL}. Specifically, we can prove that the required $T_i$ is proportional to ${Q_i^2}/{\epsilon_i^2}$, and hence is bounded above by a constant. 

Proposition \ref{P1} shows that\deleting{, surprisingly,} the required number of steps $T_i$ does not increase in $i$, even though the stopping condition $\epsilon_i<- \alpha Q_i(\bs_0,\ba_i)$ is getting more restrictive as $i$ increases.

\renewcommand{\thelemma}{C.\arabic{lemma}}
\renewcommand{\thetheorem}{C.\arabic{theorem}}

\section{Proof of Theorem \ref{T1}}
\label{Proof-T1}

Define 
\begin{subequations}
\begin{equation} \begin{aligned}
   \ba^*(\gamma) &\triangleq \arg\min_{a\in\mathcal{A}}\left(N_{\gamma}(\bs_0,\ba)-\gamma D_{\gamma}(\bs_0,\ba)\right), \\
    Q({\gamma})& \triangleq \min_{\ba\in\mathcal{A}}\left(N_{\gamma}(\bs_0,\ba)-\gamma D_{\gamma}(\bs_0,\ba)\right),\\
    N(\gamma)&=N_\gamma(s_0,\ba^*(\gamma))~~{\rm and}~~D(\gamma)=D_\gamma(s_0,\ba^*(\gamma)),\\
    P(\gamma)&=\frac{N(\gamma)}{D(\gamma)},\\
       \ba_i &\triangleq \arg\min_{\ba\in\mathcal{A}}\left(N_{i}(\bs_0,\ba)-\gamma D_{i}(\bs_0,\ba)\right),
\end{aligned} \end{equation}
\end{subequations}
for all $\gamma\geq 0$.
In the remaining part of this proof, we use $Q_i=Q_i(\bs_0,\ba_i)$, $N_i=N_i(\bs_0,\ba_i)$, and $D_i=D_i(\bs_0,\ba_i)$ for simplicity of presentation.
Note that
\begin{equation} \begin{aligned}
    &\frac{N(\gamma')}{D(\gamma')}-\frac{N_i}{D_i}\label{Proof-Eq1}\\
    \overset{(a)}{\geq}~&\frac{N(\gamma')}{D(\gamma')}-\frac{N(\gamma')}{D_i}-\gamma\left[\frac{D(\gamma')}{D_i}-\frac{D(\gamma')}{D(\gamma)}\right]-\frac{\epsilon_i}{D_i}\nonumber\\
    =~&[-Q(\gamma')+(\gamma_i-\gamma')D({\gamma'})]
    \left(\frac{1}{D_i}-\frac{1}{D(\gamma')}\right)-\frac{\epsilon_i}{D_i},\nonumber
\end{aligned} \end{equation}   
where (a) is from the suboptimality of $N_i$.
Sequences $\{\gamma_i\}$, $\{Q_i\}$, and $\{D_i\}$ generated by FQL Algorithm satisfy
\begin{equation} \begin{aligned}\label{Qea}
    \gamma_i-\gamma^*\geq \gamma_i-{N_i}/{D_{i}}=-{Q_i}/{D_i}>  {\epsilon_i}/{\alpha D_i},
\end{aligned} \end{equation}
for all $i$ such that $Q_i<0$. In addition, from the fact that
    $N_i-\gamma_i D_i\leq  N(\gamma^*)-\gamma_i D(\gamma^*)$ and 
    $N(\gamma^*)-\gamma^* D(\gamma^*)\leq  N_i-\gamma^* D_i$
for all $i\in[E]$, it follows that $
   (\gamma_i-\gamma^*)(D(\gamma^*)-D_i)\leq 0,~\forall i\in[E],$
and hence $ \gamma_i\geq \gamma^*~~{\rm if~and~only~if}~D_i\geq D(\gamma^*).$
It follows from \eqref{Proof-Eq1} that, for all $i$,
\begin{equation} \begin{aligned}
    \gamma_{i+1}-\gamma^*=P_i-\gamma^*\leq~~& (\gamma_i-\gamma^*)\left(1-\frac{D(\gamma^*)}{D_i}\right)+\frac{\epsilon_i}{D_i}\nonumber\\
    \overset{(b)}{<} ~~& (\gamma_i-\gamma^*)\left(1+\alpha-\frac{D(\gamma^*)}{D_i}\right).\nonumber
\end{aligned} \end{equation}
Note that $(b)$ follows by induction from \eqref{Qea}. Specifically, if $\gamma_{i}< \gamma^*$, then $\gamma_{i+1}< \gamma^*$. Therefore, we have that, if $Q_{i}<0$ then $Q_{i+1}<0$, and hence that $\epsilon_i\leq -\alpha Q_i$ for all $i\in[E]$.
Let $D_{max} =  \max_{\mathbf{s}, \mathbf{a}} \{\frac{c_{N}(\mathbf{s}, \mathbf{a})} {c_{D}(\mathbf{s}, \mathbf{a})}\}, D_{min} =  \min_{\mathbf{s}, \mathbf{a}} \{\frac{c_{N}(\mathbf{s}, \mathbf{a})} {c_{D}(\mathbf{s}, \mathbf{a})}\}$. We have $\frac{D_{min}}{D_{max}}\in(0,1)$. Therefore, we can find an  $\alpha\in(0,\frac{D_{min}}{D_{max}})$, such that $(\gamma_{i+1}-\gamma^*)/(\gamma_{i}-\gamma^*)\in(0,1)$ for all sufficiently large $i$, which implies that $\{\gamma_i\}$ converges linearly to $\gamma^*$.

\section{Approximating the Nash Operator} \label{approximate_nash}

One can use the iterative best response algorithm to refine a joint action from an initial guess $\ba^{(0)}$ towards an approximate Nash operator.
In each Q-learning iteration $k$, upon reaching a new state $\bs$, each agent $m$ sequentially updates its action to its best response, maximizing its individual Q-function given the other agents' actions from the previous iteration $\ba_{-m}^{(k-1)}$. This best response $\ba_m^{(k)}$ for agent $m$ is computed as:

\begin{align}
    \ba_m^{(k)} = \arg\min_{\ba_m} Q_{\gamma_m}^k(\bs', \ba_m, \ba_{-m}^{(k-1)}),
    \label{eq:best_response}
\end{align}
where $Q_{\gamma_m}^k$ denotes the Q-function of agent $m$ at iteration $k$. The joint action is then updated to $\ba^{(k)} = (\ba_1^{(k)},..., \ba_m^{(k)},..., \ba_M^{(k)})$, incorporating the newly computed best response. After a few steps of this iterative refinement, the resulting joint action $\ba^{(k)}$ serves as our approximation of the Nash equilibrium in state $\bs'$. Consequently, we can approximate $\mathop{\mathbb{N}}\limits_{\ba\in\mathcal{A}}\bQ^k_{\bgamma}(\bs', \ba)$ with $\bQ^k_{\bgamma}(\bs', \ba^{(k)})$, which is then used to update the Q-value in Equation \eqref{NQ-function}. This iterative procedure allows agents to converge towards a Nash equilibrium by sequentially optimizing their actions in response to the actions of others, effectively approximating the action of the Nash operator.

\section{Proof of Lemma \ref{l_eqv}} \label{lemma2_p}

The Bellman equation form of $V_{\gamma_m}$ of mobile device $m$ is given by:
\begin{equation} \begin{aligned}
    V_{\gamma_m}(\bs,\pii) =  &\mathbb{E}_{\pii}\left\{c_{N_{m}}(\bs, \ba)- \gamma_{m}c_{D_{m}}(\bs, \ba) \nonumber  \right. \\
    & \left.  +\mathbb{E}_{Pr}[\delta V_{\gamma_m}(\bs',\pii)]]\right\}.\label{f_cost}
\end{aligned} \end{equation}
Let $\mathcal{B}(\mathcal{S})$ denote the Banach space of bounded real-valued functions on $\mathcal{S}$ with the supremum norm. Let $\mathbb{W}_{\pii}: \mathcal{B}(\mathcal{S}) \rightarrow \mathcal{B}(\mathcal{S})$ be the mapping:
\begin{equation} \begin{aligned}
    \mathbb{W}_{\pii} V_{\gamma_m}(\bs, \pii)=&\mathop{\min}\limits_{\pii_m} \;     \mathbb{E}_{\pii}\left\{[c_{N_{m}}(\bs, \ba)- \gamma_{m}c_{D_{m}}(\bs, \ba) \nonumber\right. \\
    & \left.+\delta\mathbb{E}_{Pr}[ V_{\gamma_m}(\bs', \pii)]]\right\}.
\end{aligned} \end{equation}
From \cite{fink1964equilibrium}, we have the following conclusion:
\begin{theorem} (Contraction Mapping of $\mathbb{W}_{\pii}$)
    For any  fixed $\gamma_m$ and  $\bs\in\mathcal{S}$,  $\mathbb{W}_{\pii}$ is a contraction mapping of $\mathbb{R}$. \label{cm}
\end{theorem}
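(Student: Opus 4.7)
The plan is to show that $\mathbb{W}_{\pii}$ is a $\delta$-contraction on the Banach space $(\mathcal{B}(\mathcal{S}), \|\cdot\|_\infty)$, which, since $\delta \in (0,1)$ by the definition of the Markov game, immediately yields the claim. My strategy is the standard Bellman-operator contraction argument, with care taken to handle the partial minimization over only $\pi_m$ (rather than the joint policy) and the fractional Dinkelbach-style cost $c_{N_m} - \gamma_m c_{D_m}$.

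First, I would pick arbitrary $V_1, V_2 \in \mathcal{B}(\mathcal{S})$ and any state $\bs \in \mathcal{S}$, and bound the pointwise difference $|\mathbb{W}_{\pii} V_1(\bs) - \mathbb{W}_{\pii} V_2(\bs)|$. The key observation is that the instantaneous terms $c_{N_m}(\bs,\ba) - \gamma_m c_{D_m}(\bs,\ba)$ are functionals of $(\bs,\ba)$ alone and do not depend on $V_1$ or $V_2$, so they will cancel once we invoke the elementary inequality $|\min_x f(x) - \min_x g(x)| \leq \max_x |f(x) - g(x)|$ applied to the inner minimization over $\pii_m$.

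Next, I would push the absolute value inside the expectations $\mathbb{E}_{\pii}$ and $\mathbb{E}_{Pr}$ via monotonicity of expectation, obtaining
\begin{equation*}
|\mathbb{W}_{\pii} V_1(\bs) - \mathbb{W}_{\pii} V_2(\bs)| \leq \delta \sup_{\pii_m} \mathbb{E}_{\pii}\mathbb{E}_{Pr}\bigl[|V_1(\bs') - V_2(\bs')|\bigr] \leq \delta \|V_1 - V_2\|_\infty,
\end{equation*}
where the last inequality follows because $|V_1(\bs') - V_2(\bs')| \leq \|V_1 - V_2\|_\infty$ uniformly in $\bs'$, and the expectation of a constant equals that constant. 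Taking the supremum over $\bs \in \mathcal{S}$ then yields $\|\mathbb{W}_{\pii} V_1 - \mathbb{W}_{\pii} V_2\|_\infty \leq \delta \|V_1 - V_2\|_\infty$, which is exactly the $\delta$-contraction property on $\mathcal{B}(\mathcal{S})$.

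There is no substantial obstacle here; the argument is essentially a verification exercise. The only points requiring care are \emph{(i)} confirming that $\mathbb{W}_{\pii}$ maps $\mathcal{B}(\mathcal{S})$ into itself, which follows from boundedness of the cost-per-stage $c_{N_m} - \gamma_m c_{D_m}$ (bounded task durations and finite action space) together with $\delta < 1$, and \emph{(ii)} ensuring that the $|\min - \min|$ inequality applies cleanly even though only $\pi_m$ is being optimized while $\pii_{-m}$ is held fixed inside $\mathbb{E}_{\pii}$. Both are routine, so the contraction conclusion follows directly.
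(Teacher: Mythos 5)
Your argument is correct, but it is worth noting that the paper itself does not prove this statement at all: it simply attributes the contraction property to Fink (1964) and moves on. Your self-contained verification is the standard Bellman-operator argument, and it is essentially the mechanism underlying the cited classical result, so the two routes are compatible rather than conflicting. What your version buys is explicitness at the two points you flag: the Dinkelbach-shifted stage cost $c_{N_m} - \gamma_m c_{D_m}$ cancels because it does not depend on the value function, and the inequality $|\min_x f(x) - \min_x g(x)| \le \sup_x |f(x) - g(x)|$ applies verbatim even though only $\pii_m$ is optimized while $\pii_{-m}$ stays fixed inside $\mathbb{E}_{\pii}$ (both minimands range over the same variable with the same fixed environment). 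The one point I would press you on is your parenthetical justification that $\mathbb{W}_{\pii}$ maps $\mathcal{B}(\mathcal{S})$ into itself: you invoke ``bounded task durations and finite action space,'' but the updating action space in this paper is $\mathbb{R}_+$, so the waiting time $Z_{m,k+1}$ — and hence $c_{D_m}$ and the trapezoid area $c_{N_m}$ — is not automatically bounded; one must either restrict to a compact waiting interval or argue that unbounded waiting is never optimal. This is a latent modeling assumption in the paper as well (its single-agent fractional MDP explicitly assumes finite action sets), so it does not invalidate your proof, but it deserves an explicit hypothesis rather than a parenthetical. With that caveat recorded, the contraction estimate $\|\mathbb{W}_{\pii} V_1 - \mathbb{W}_{\pii} V_2\|_\infty \le \delta \|V_1 - V_2\|_\infty$ goes through exactly as you describe.
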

By Theorem \ref{cm},   the sequence $V^0_{\gamma_m}=0$, $V^{k+1}_{\gamma_m}=\mathbb{W}_{\pii}V^{k}_{\gamma_m}$ converges to the optimal cost $V^{\star}_{\gamma_m}(\bs)$:
\begin{equation} \begin{aligned}
    V_{\gamma_m}^{\star}(\bs)=&\mathop{\min}\limits_{\pii_m}\; \mathbb{E}_{\pii}\left\{c_{N_{m}}(\bs, \ba) - \gamma_mc_{D_{m}}(\bs, \ba) \nonumber \right.\\
    &\left.+\delta\mathbb{E}_{Pr}[ V_{\gamma_m}^{\star}(\bs')]\right\}. \label{optimal}
\end{aligned} \end{equation}
\begin{lemma}
$V^{\star}_{\gamma_m}$ is continuous with respect to $\gamma_m$. \label{ctn}
\end{lemma}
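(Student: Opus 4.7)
\textbf{Proof plan for Lemma (Continuity of $V^{\star}_{\gamma_m}$).}
The plan is to establish something stronger than continuity, namely Lipschitz continuity in $\gamma_m$, by exploiting the contraction property of $\mathbb{W}_{\pii}$ (Theorem C.1) together with the fact that the Bellman operator depends linearly, and thus Lipschitz-continuously, on the parameter $\gamma_m$. This follows a standard template: when a parameterized contraction depends continuously on the parameter, so does its unique fixed point.

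Concretely, I would fix two values $\gamma_m$ and $\gamma_m'$, with optimal value functions $V^{\star}_{\gamma_m}$ and $V^{\star}_{\gamma_m'}$ defined by the Bellman equation \eqref{optimal}. Let $\pii_{\gamma_m}$ denote a stationary policy attaining the minimum for $V^{\star}_{\gamma_m}(\bs)$, and let $\pii_{\gamma_m'}$ do the same for $\gamma_m'$. Using optimality of $V^{\star}_{\gamma_m'}$ and suboptimality of $\pii_{\gamma_m}$ at $\gamma_m'$, I would derive
\begin{align}
V^{\star}_{\gamma_m'}(\bs) - V^{\star}_{\gamma_m}(\bs) &\leq \mathbb{E}_{\pii_{\gamma_m}}\!\left[(\gamma_m - \gamma_m')c_{D_m}(\bs,\ba) \right. \nonumber\\
&\quad\left. + \delta\mathbb{E}_{Pr}[V^{\star}_{\gamma_m'}(\bs') - V^{\star}_{\gamma_m}(\bs')]\right]. \nonumber
\end{align}
Swapping the roles of $\gamma_m$ and $\gamma_m'$ yields a matching lower bound, and together with boundedness of $c_{D_m}$ by some constant $C_D$ (which holds in our MEC setting since delays are capped by the deadline $\bar{Y}$, so $c_{D_m} = Y + Z \leq 2\bar{Y}$), we obtain
\begin{equation}
|V^{\star}_{\gamma_m'}(\bs) - V^{\star}_{\gamma_m}(\bs)| \leq C_D|\gamma_m - \gamma_m'| + \delta\|V^{\star}_{\gamma_m'} - V^{\star}_{\gamma_m}\|_{\infty}. \nonumber
\end{equation}

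Taking the supremum over $\bs \in \mathcal{S}$ on the left and rearranging using $\delta \in (0,1)$ produces the Lipschitz bound
\begin{equation}
\|V^{\star}_{\gamma_m'} - V^{\star}_{\gamma_m}\|_{\infty} \leq \frac{C_D}{1-\delta}|\gamma_m - \gamma_m'|, \nonumber
\end{equation}
from which continuity in $\gamma_m$ is immediate. The main obstacle I anticipate is the uniform boundedness of $c_{D_m}$: without it, the rearrangement step fails and one would only recover continuity in a weaker topology. Fortunately, this is a structural feature of the model (finite deadline $\bar{Y}$ and bounded waiting action), so the argument goes through cleanly. A minor secondary point is that attainment of the minimum in the Bellman operator requires a compact/finite action space or standard measurable-selection arguments; given the paper models $\mathcal{A}^{\textsc{O}}$ as finite and the waiting-time component enters affinely, a measurable best-response policy exists, so this technicality can be handled by picking $\varepsilon$-optimal policies and letting $\varepsilon \to 0$ if needed.
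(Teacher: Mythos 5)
Your proof is correct, but it takes a genuinely different route from the paper's. The paper defines $f(\gamma_m,\pii,\bs)$ as the right-hand side of the Bellman equation, observes $V^{\star}_{\gamma_m}(\bs)=\min_{\pii_m}f(\gamma_m,\pii,\bs)$, and invokes Berge's maximum theorem together with compactness of the policy set to conclude that the minimum value is continuous in $\gamma_m$. You instead perturb the fixed point directly: by playing the $\gamma_m$-optimal policy at $\gamma_m'$ and vice versa, you obtain the two-sided bound $\lvert V^{\star}_{\gamma_m'}(\bs)-V^{\star}_{\gamma_m}(\bs)\rvert\le C_D\lvert\gamma_m-\gamma_m'\rvert+\delta\lVert V^{\star}_{\gamma_m'}-V^{\star}_{\gamma_m}\rVert_{\infty}$ and rearrange. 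Your approach buys a strictly stronger, quantitative conclusion (Lipschitz continuity with explicit constant $C_D/(1-\delta)$) and is more elementary, requiring no topological machinery on the policy space. It also sidesteps a delicate point in the paper's argument: the function $f$ to which Berge's theorem is applied already contains $V^{\star}_{\gamma_m}(\bs')$ inside the expectation, so establishing joint continuity of $f$ in $\gamma_m$ implicitly presupposes the continuity being proven (the paper would need to reorganize this, e.g., via uniform convergence of value iteration, to make the Berge route airtight). The one hypothesis your argument genuinely needs, and which the Berge route also tacitly uses, is uniform boundedness of $c_{D_m}$ and hence of the value functions; your appeal to the deadline $\bar{Y}$ covers $Y_{m,k}$, but note the updating action space is formally $\mathbb{R}_+$, so you should state explicitly that waiting times are restricted to a bounded set (as any reasonable policy class would be) for the supremum-norm rearrangement to be valid.
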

\begin{proof}
Define function $f$ as
\begin{equation} \begin{aligned}
    f(\gamma_m, \pii, \bs) =&\mathbb{E}_{\pii}\left\{c_{N_{m}}(\bs, \ba) - \gamma_mc_{D_{m}}(\bs, \ba) \nonumber \right.\\
    &\left.+\delta\mathbb{E}_{Pr}[ V_{\gamma_m}^*(\bs')]\right\}.
\end{aligned} \end{equation}
From \eqref{optimal}, we have
\begin{equation} \begin{aligned}
      V_{\gamma_m}^{\star}(\bs)=\mathop{\min}\limits_{\pii_m}    f(\gamma_m, \pii, \bs).
\end{aligned} \end{equation}
Since $c_{N_{m}}(\bs, \ba)$ and $c_{D_{m}}(\bs, \ba)$ are continuous with respect to $\pii$, $f(\gamma_m, \pii, \bs)$ is therefore continuous with respect to $\pii$ and $\gamma_m$.
Thus, as the set of $\pii$ is a compact set, by Berge's maximum theorem, $V_{\gamma_m}^{\star}(\bs)$ is therefore continuous with respect to $\gamma_m$.
\end{proof}

 Since $c_{N_{m}}(\bs, \ba)$, $c_{D_{m}}(\bs, \ba)$ are continuous with respect to $\gamma_m$, we have the Nash operator $\mathop{\mathbb{N}}\limits_{\ba\in\mathcal{A}}$ is therefore continuous with respect to $\gamma_m$. From \eqref{Nash_ND},  we have $N_{\gamma_m}(\bs)$ and $D_{\gamma_m}(\bs)$ are therefore continuous with respect to  $\gamma_m$.

Define a compact and convex set $\Gamma=[0, \overline{\gamma}]^{M}$, where  $\overline{\gamma} = \max\limits_{\mathbf{s}, \mathbf{a}}\left\{\frac{ c_{N,m}(\mathbf{s}, \mathbf{a})}{ c_{D,m}(\mathbf{s}, \mathbf{a})}\right\}$. We can thus define the mapping   $\mathbb{T}:\Gamma \rightarrow \Gamma$ of $\bgamma$ in Algorithm \ref{FNQL} as:
\begin{equation} \begin{aligned}
\mathbb{T}\bgamma = 
\left[\mathop{\mathbb{E}}\limits_{\bs_0\sim\mu_0}\left[\frac{\overline{N}^\star_{\gamma_1}(\bs_0)}{\overline{D}^\star_{\gamma_1}(\bs_0)}\right], \ldots, \mathop{\mathbb{E}}\limits_{\bs_0\sim\mu_0}\left[\frac{\overline{N}^\star_{\gamma_M}(\bs_0)}{\overline{D}^\star_{\gamma_M}(\bs_0)}\right]\right]^T.
\end{aligned} \end{equation}

\begin{theorem} (Existence of Fixed Point) 
    There exists $\bgamma^* \in \Gamma$ such that $\mathbb{T} \bgamma^* = \bgamma^*$.
\end{theorem}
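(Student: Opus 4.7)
The plan is to invoke Brouwer's fixed point theorem, in the same spirit as the proof of Lemma~\ref{l_eqv} already referenced in the paper. The set $\Gamma=[0,\overline{\gamma}]^{M}$ is compact and convex by construction, so the only non-trivial tasks are (i) to verify that $\mathbb{T}$ maps $\Gamma$ into itself, and (ii) to verify that $\mathbb{T}$ is continuous on $\Gamma$. Brouwer's theorem will then immediately yield a fixed point $\bgamma^{*}\in\Gamma$ with $\mathbb{T}\bgamma^{*}=\bgamma^{*}$.

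For step (i), I would control each coordinate of $\mathbb{T}\bgamma$ using the pointwise inequality $c_{N,m}(\bs,\ba)\le\overline{\gamma}\,c_{D,m}(\bs,\ba)$, which is immediate from the definition $\overline{\gamma}=\max_{\bs,\ba}\{c_{N,m}(\bs,\ba)/c_{D,m}(\bs,\ba)\}$. Expanding the discounted sums defining $\overline{N}^{\star}_{\gamma_m}(\bs_0)$ and $\overline{D}^{\star}_{\gamma_m}(\bs_0)$ along the Nash trajectory of $G_{\bgamma}$ gives $0\le\overline{N}^{\star}_{\gamma_m}(\bs_0)\le\overline{\gamma}\,\overline{D}^{\star}_{\gamma_m}(\bs_0)$, so the ratio and hence its $\mu_{0}$-expectation lie in $[0,\overline{\gamma}]$. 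To keep the ratio well defined I would invoke the strict positivity of $c_{D,m}$, which in the MEC model follows from the fact that every task generation/processing step consumes strictly positive time, giving a uniform bound $\overline{D}^{\star}_{\gamma_m}\ge D_{\min}>0$ for all $\bgamma\in\Gamma$.

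For step (ii), although the paper writes $\overline{N}^{\star}_{\gamma_m}$ with a single-index subscript, this quantity is evaluated along the Nash-equilibrium policy $\pii^{\star}_{\bgamma}$ of the sub-game $G_{\bgamma}$ and therefore depends on the full vector $\bgamma$. Lemma~\ref{ctn} already establishes continuity of the optimal fractional value function in $\gamma_m$; a matching Berge-maximum argument applied to the Nash operator (whose continuity in the per-stage cost was noted right after Lemma~\ref{ctn}) gives joint continuity of both $\overline{N}^{\star}_{\gamma_m}$ and $\overline{D}^{\star}_{\gamma_m}$ in $\bgamma$. Combined with the uniform denominator lower bound from step (i), the integrand in each coordinate of $\mathbb{T}\bgamma$ is continuous in $\bgamma$ and uniformly bounded, so dominated convergence lifts continuity through the $\mu_{0}$-expectation, establishing continuity of $\mathbb{T}$ on $\Gamma$.

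The main obstacle I anticipate lies in step (ii): one must propagate continuity from the per-stage Nash operator up to the infinite-horizon value functions $\overline{N}^{\star}_{\gamma_m}$ and $\overline{D}^{\star}_{\gamma_m}$, and in particular rule out that an infinitesimal perturbation of $\bgamma$ triggers a discontinuous jump in the Nash-equilibrium selection inside $G_{\bgamma}$. Once these ingredients are assembled, Brouwer's fixed point theorem applied to the continuous self-map $\mathbb{T}:\Gamma\to\Gamma$ on the compact convex set $\Gamma$ closes the argument.
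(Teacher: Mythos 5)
Your proposal follows essentially the same route as the paper's proof: Brouwer's fixed point theorem applied to the continuous self-map $\mathbb{T}$ on the compact convex set $\Gamma$, with continuity obtained from the positivity of the denominator and the Berge-maximum-theorem continuity of the optimal value functions in $\bgamma$. You are somewhat more careful than the paper in explicitly verifying the self-map property via $c_{N,m}\le\overline{\gamma}\,c_{D,m}$ and in flagging the subtlety of continuous Nash-equilibrium selection, both of which the paper leaves implicit, but the core argument is identical.
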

\begin{proof}
    Since $\Gamma$ is compact and convex, by the Brouwer fixed point theorem, there exists a fixed point $\bgamma^{\star}\in\Gamma$ for the continuous mapping $\mathbb{T}$.
\end{proof}
Therefore, there exists   $\bgamma^{*}$, such that $\mathbb{T}\bgamma^{*}=\bgamma^{*}$. Then we have $\mathop{\mathbb{E}}\limits_{\bs_0\sim\mu_0}\left[\frac{\overline{\bN}_{\bgamma^{*}}(\bs_0) }{\overline{\bD}_{\bgamma^{*}}(\bs_0)}\right] = \bgamma^{*}$. Thus, we have $\bF(\boldsymbol{\theta}^*,\bgamma^*)=\mathop{\mathbb{E}}\limits_{\bs_0\sim\mu_0}\left[\overline{\bN}^{\star}_{\bgamma^{*}}(\bs_0)\right] - \bgamma^{*} \mathop{\mathbb{E}}\limits_{\bs_0\sim\mu_0}\left[\overline{\bD}^{\star}_{\bgamma^{*}}(\bs_0)\right]=0$.

\section{Proof of Theorem \ref{T2}} \label{convergence_p}

By Lemma \ref{G-NE}, for each $\bgamma$,  there exists mapping  $\boldsymbol{\theta}(\bgamma)$ that satisfies the NE conditions. For simplicity, we denote $\boldsymbol{\theta}$ as $\boldsymbol{\theta}(\bgamma)$ and  $\boldsymbol{\theta}_i$ as $\boldsymbol{\theta}(\bgamma_i)$.  Assuming differentiability, these are the first-order necessary conditions where each agent optimizes its objective given the others' parameters:
\begin{equation}
    \begin{aligned}
W_m(\boldsymbol{\theta}, \gamma_m)& := \nabla_{\boldsymbol{\theta}_m} F_m(\boldsymbol{\theta}, \gamma_m)\\
&= \nabla_{\boldsymbol{\theta}_m} N_m(\boldsymbol{\theta}) - \gamma_m \nabla_{\boldsymbol{\theta}_m} D_m(\boldsymbol{\theta}) = \mathbf{0}         
    \end{aligned}
\end{equation} 
for all $m\in\M$. Let $\boldsymbol{W}(\boldsymbol{\theta}, \boldsymbol{\gamma}) = (W_1(\boldsymbol{\theta}, \gamma_1), \dots, W_M(\boldsymbol{\theta}, \gamma_M))$ be the stacked vector of these gradient conditions. The equilibrium condition is thus $W(\boldsymbol{\theta}, \boldsymbol{\gamma}) = \mathbf{0}$. We focus on a specific equilibrium point $\boldsymbol{\theta}$ corresponding to  $\boldsymbol{\gamma}$. From Assumption \ref{as:local_strong_convexity}
, we have $\boldsymbol{W}$ is invertible and $\boldsymbol{\theta}$ is locally a differentiable function of $\boldsymbol{\gamma}$ around $\boldsymbol{\gamma}$. By the Implicit Function Theorem (IFT), its Jacobian is given by:
\begin{align}
     \nabla_{\boldsymbol{\gamma}} \boldsymbol{\theta}(\boldsymbol{\gamma}) = - [\nabla_{\boldsymbol{\theta}} \boldsymbol{W}(\boldsymbol{\theta}, \boldsymbol{\gamma})]^{-1} [\nabla_{\boldsymbol{\gamma}} \boldsymbol{W}(\boldsymbol{\theta}, \boldsymbol{\gamma})]. 
\end{align}

We denote  the Jacobian of $W$ with respect to  $\boldsymbol{\theta}$ as $J_{\bW}(\boldsymbol{\theta}, \boldsymbol{\gamma})$. Its $(m, n)$-th block is given by:
\begin{equation}
    \begin{aligned}
         &(J_{\bW}(\boldsymbol{\theta}, \boldsymbol{\gamma}))_{mn}\\
         =& \nabla_{\boldsymbol{\theta}_n} W_m(\boldsymbol{\theta}, \gamma_m) = \nabla^2_{\boldsymbol{\theta}_n \boldsymbol{\theta}_m} F_m(\boldsymbol{\theta}, \gamma_m) \\
         =&\nabla^2_{\boldsymbol{\theta}_n \boldsymbol{\theta}_m} N_m(\boldsymbol{\theta}) - \gamma_m \nabla^2_{\boldsymbol{\theta}_n \boldsymbol{\theta}_m} D_m(\boldsymbol{\theta}).
    \end{aligned}
\end{equation}
We denote the Jacobian of $W$ with respect to $\boldsymbol{\gamma}$ as  $\nabla_{\boldsymbol{\gamma}} W$. Its $(m, n)$-th block is given by
\begin{align}
     &\frac{\partial W_m}{\partial \gamma_n}(\boldsymbol{\theta}, \boldsymbol{\gamma})\nonumber\\\ =& \frac{\partial}{\partial \gamma_n} \left[ \nabla_{\boldsymbol{\theta}_m} N_m(\boldsymbol{\theta}) - \gamma_m\nabla_{\boldsymbol{\theta}_m} D_m(\boldsymbol{\theta}) \right]\nonumber\\
     =& - \delta_{mn} \nabla_{\boldsymbol{\theta}_m} D_m(\boldsymbol{\theta})
\end{align}
The $n$-th column of $\nabla_{\boldsymbol{\gamma}} W$, denoted $\frac{\partial W}{\partial \gamma_n}(\boldsymbol{\theta}, \boldsymbol{\gamma})$, is given by:
\begin{align}
     \frac{\partial W}{\partial \gamma_n}(\boldsymbol{\theta}, \boldsymbol{\gamma}) = \begin{pmatrix} \mathbf{0} \\ \vdots \\ \mathbf{0} \\ -\nabla_{\boldsymbol{\theta}_n} D_n(\boldsymbol{\theta}) \\ \mathbf{0} \\ \vdots \\ \mathbf{0} \end{pmatrix}. 
\end{align}
Let $((J_{\bW} )^{-1})_{mk}$ denote the $(m, k)$-th block of the inverse matrix $(J_{\bW})^{-1}$. By the IFT formula, the $j$-th block of the sensitivity vector is given by:
\begin{equation}
\begin{aligned}
     \frac{\partial \boldsymbol{\theta}_m}{\partial \gamma_n}& = - \sum_{k=1}^M ((J_{\bW}(\boldsymbol{\theta}, \boldsymbol{\gamma}) )^{-1})_{mk} \left( \frac{\partial W_k}{\partial \gamma_n}(\boldsymbol{\theta}, \boldsymbol{\gamma}) \right)\\
     &=  ((J_{\bW}(\boldsymbol{\theta}, \boldsymbol{\gamma}) )^{-1})_{mn} \nabla_{\boldsymbol{\theta}_n} D_n(\boldsymbol{\theta}).
\end{aligned}    
\end{equation}
We then have the bound as follows.
\begin{lemma}\label{lm:bound_inverse}
    By Assumptions \ref{as:local_strong_convexity} and \ref{as:bounded_interaction}, there exists $\kappa\in(0,1)$, $J_{\bW}$ is invertible, and the block $L_\infty$ norm of its inverse is bounded by:
$$ \|(J_{\bW})^{-1}\|_\infty  \le \frac{\max_k \|((J_{\bW})_{kk})^{-1}\|}{1-\kappa}. $$

\end{lemma}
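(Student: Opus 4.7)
My plan is to prove Lemma~\ref{lm:bound_inverse} via a block Jacobi splitting combined with a Neumann series argument. I will write $J_{\bW} = D + R$, where $D$ is the block-diagonal part with diagonal blocks $(J_{\bW})_{mm} = H_{\boldsymbol{\theta}_m \boldsymbol{\theta}_m}$ and $R$ collects the off-diagonal blocks $(J_{\bW})_{mn} = \nabla^2_{\boldsymbol{\theta}_n \boldsymbol{\theta}_m} F_m$ for $m \ne n$. By Assumption~\ref{as:local_strong_convexity}, each $H_{\boldsymbol{\theta}_m \boldsymbol{\theta}_m}$ is invertible with $\|H_{\boldsymbol{\theta}_m \boldsymbol{\theta}_m}^{-1}\|_\infty \le 1/\mu$, so $D$ is invertible and $\|D^{-1}\|_\infty \le \max_k \|((J_{\bW})_{kk})^{-1}\| \le 1/\mu$. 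The factorization $J_{\bW} = D(I + D^{-1}R)$ then reduces invertibility of $J_{\bW}$ to that of $I + D^{-1}R$.

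Next, I will bound $\|D^{-1}R\|_\infty$ in the block $L_\infty$ norm. The $(m,n)$ block of $D^{-1}R$ is $((J_{\bW})_{mm})^{-1} (J_{\bW})_{mn}$ for $n \ne m$, and zero otherwise. Applying Assumption~\ref{as:bounded_interaction}, each row $m$ has at most $K$ nonzero off-diagonal blocks, and each such block satisfies $\|(J_{\bW})_{mn}\|_\infty \le H_{\text{int}}$. Taking the block row sum gives
\begin{equation}
\|D^{-1}R\|_\infty \;\le\; \max_m \sum_{n \ne m} \|((J_{\bW})_{mm})^{-1}\|_\infty \, \|(J_{\bW})_{mn}\|_\infty \;\le\; \frac{K H_{\text{int}}}{\mu}.
\end{equation}
Under the hypothesis $K H_{\text{int}}/\mu < 1$ from Theorem~\ref{T2}, I set $\kappa := K H_{\text{int}}/\mu \in (0,1)$, which yields $\|D^{-1}R\|_\infty \le \kappa < 1$.

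The Neumann series $\sum_{j=0}^\infty (-D^{-1}R)^j$ then converges in the block $L_\infty$ norm to $(I + D^{-1}R)^{-1}$, giving $\|(I + D^{-1}R)^{-1}\|_\infty \le 1/(1-\kappa)$. Combining the two factors via submultiplicativity yields
\begin{equation}
\|(J_{\bW})^{-1}\|_\infty \;=\; \|(I + D^{-1}R)^{-1} D^{-1}\|_\infty \;\le\; \frac{\|D^{-1}\|_\infty}{1-\kappa} \;\le\; \frac{\max_k \|((J_{\bW})_{kk})^{-1}\|}{1-\kappa},
\end{equation}
which is the claimed bound.

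I expect the main obstacle to be bookkeeping in the block $L_\infty$ norm: one must verify that submultiplicativity and the row-sum characterization hold for the particular block norm the paper uses, and that the partial sums of the Neumann series converge in that norm. A secondary subtlety is justifying that the invertibility is genuinely local to $\boldsymbol{\gamma}$ lying in the neighborhood $\mathcal{N}_{\bgamma^*}$; since Assumptions~\ref{as:local_strong_convexity} and \ref{as:bounded_interaction} are stated uniformly on that neighborhood, the bound on $\kappa$ is uniform and the conclusion persists throughout $\mathcal{N}_{\bgamma^*}$, which is what subsequent inexact-Newton steps in Theorem~\ref{T2} will require.
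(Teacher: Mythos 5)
Your proof is correct and follows essentially the same route as the paper's: both isolate the block-diagonal part of $J_{\bW}$, bound the coupling quantity $\max_{m}\sum_{n\neq m}\|((J_{\bW})_{mm})^{-1}(J_{\bW})_{mn}\|$ by $\kappa = K H_{\text{int}}/\mu < 1$ using the local-Hessian and bounded-interaction assumptions, and then extract the stated bound --- the paper does the last step by writing the identity $((J_{\bW})^{-1})_{kn} = \delta_{kn}((J_{\bW})_{kk})^{-1} - \sum_{l\neq k}((J_{\bW})_{kk})^{-1}(J_{\bW})_{kl}((J_{\bW})^{-1})_{ln}$ and solving the resulting self-referential inequality, which is exactly the fixed-point form of your Neumann series. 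If anything, your explicit Neumann-series formulation is slightly more complete, since it actually establishes the invertibility of $J_{\bW}$ claimed in the lemma, whereas the paper's block-inverse identity presupposes that $(J_{\bW})^{-1}$ exists and only derives the norm bound.
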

\begin{proof}
    From Assumption \ref{as:bounded_interaction} and \ref{as:local_strong_convexity},   we have that 
    \begin{equation}
        \begin{aligned}
&\sum_{l \neq k} \|((J_{\bW})^{-1})_{kk})^{-1} (J_{\bW})^{-1})_{kl}\| \\
\leq&\|((J_{\bW})^{-1})_{kk})^{-1}\| \|(J_{\bW})^{-1})_{kl}\| \\
\leq&\frac{KH_{\text{int}}}{\mu} <1.
        \end{aligned}
    \end{equation}
    Thus, there exists $\kappa\in(0,1)$, such that 
    $$\kappa=\frac{KH_{\text{int}}}{\mu}.$$
The block inverse formula gives
\begin{equation}
    \begin{aligned}
      &  ((J_{\bW})^{-1})_{kn} \\
      = &\delta_{kn} ((J_{\bW})_{kk})^{-1} - \sum_{l \neq k} ((J_{\bW})_{kk})^{-1} (J_{\bW})_{kl} ((J_{\bW})^{-1})_{ln}.
    \end{aligned}
\end{equation}
Taking the norm and summing over $n$,
\begin{equation}
    \begin{aligned}
        &\sum_n \|((J_{\bW})^{-1})_{kn}\| \\
        \le& \|((J_{\bW})_{kk})^{-1}\| + \sum_{l \neq k} \|((J_{\bW})_{kk})^{-1} (J_{\bW})_{kl}\|\\
        &\cdot\sum_n \|((J_{\bW})^{-1})_{ln}\|.
    \end{aligned}
\end{equation}
Taking $\max_k$ on both sides,
\begin{equation}
    \begin{aligned}
        &\max_k \sum_n \|((J_{\bW})^{-1})_{kn}\| \\
        \le &\max_k \|((J_{\bW})_{kk})^{-1}\| + \kappa \max_k \sum_n \|((J_{\bW})^{-1})_{kn}\|.
    \end{aligned}
\end{equation}
Thus, we have
$$(1-\kappa) \max_k \sum_n \|((J_{\bW})^{-1})_{kn}\| \le \max_k \|((J_{\bW})_{kk})^{-1}\|,$$
and 
$$\max_k \sum_n \|((J_{\bW})^{-1})_{kn}\| \le \frac{\max_k \|((J_{\bW})_{kk})^{-1}\|}{1-\kappa}.$$
\end{proof}

Using Lemma \ref{lm:bound_inverse} and Assumption \ref{as:local_strong_convexity}, there exists $\kappa\in(0,1)$, such that
\begin{align} \label{eq:inverse}
    \|(J_{\bW})^{-1}\|_\infty \leq \frac{1}{\mu(1-\kappa)}.
\end{align}
We can solve for the residual $r_i$ as:
\begin{equation} \begin{aligned}
    \boldsymbol{r}_i =&\bF(\boldsymbol{\theta_i},\bgamma_i)  +J_{\bF}(\boldsymbol{\theta_i},\bgamma_i) s_i \\
        =& \left[ I- J_{\bF}(\boldsymbol{\theta}_i,\bgamma_i )J_{\bF}'(\boldsymbol{\theta}_i,\bgamma_i)^{-1} \right] F(\boldsymbol{\theta}_i,\bgamma_i).  \label{eq:residual}
        \end{aligned}
\end{equation} 
The Inexact Newton method requires the residual to satisfy $\norm{\boldsymbol{r}_i} \le \eta_i \norm{\bF(\boldsymbol{\theta}_i,\bgamma_i)}$ for some forcing sequence $\{\eta_i\}$. From \eqref{eq:residual}, \eqref{eq:inverse} and Assumption \ref{as:bounded_denominator},  \ref{as:bounded_interaction}, we have that
\begin{equation}
\begin{aligned}\label{eq:forcing_term}
     &\eta_i\\ =& \| I- J_{\bF}(\boldsymbol{\theta}_i,\bgamma_i )J_{\bF}^{\prime}(\boldsymbol{\theta}_i,\bgamma_i)^{-1}\|_\infty \\
     \leq&\max_m\sum_{n \neq m}  \frac{1}{|D_n(\boldsymbol{\theta}_i)|} \sum_{k \in \mathcal{N}_m^{grad}} \|\nabla_{\boldsymbol{\theta}_{k,i}} F_m(\boldsymbol{\theta}_i, \gamma_{m,i})\|_\infty \\
     & \cdot \|(J_{\bW}(\boldsymbol{\theta}_i,\bgamma_i ) ^{-1})_{kn}\|_\infty  \cdot \|\nabla_{\boldsymbol{\theta}_n} D_n(\boldsymbol{\theta}_i)\|_\infty  \\
    \leq& \max_m \frac{C_D C_{\text{int}}}{D_{min}} \sum_{k \in \mathcal{N}_m^{grad}} \left( \sum_{n \neq m} \|(J_{\bW} (\boldsymbol{\theta}_i,\bgamma_i )^{-1})_{kn}\|_\infty  \right) \\
   \leq & \frac{K C_D C_{\text{int}}}{\mu D_{min} (1-\kappa)}=\frac{C_D C_{\text{int}}}{ D_{\text{min}} (1/K- H_{\text{int}})}.
\end{aligned}
\end{equation}

Thus, there exists $\eta_{max}= \frac{C_D C_{\text{int}}}{ D_{\text{min}} (1/K- H_{\text{int}})}<1$ such that $\eta_i\leq\eta_{max}$.

Therefore, under the stated assumptions, there exists a neighborhood $\mathcal{U}' \subseteq \mathcal{U}(\bgamma^*)$ such that if $\bgamma^0 \in \mathcal{U}'$, the sequence $\{\bgamma_i\}$  converges to $\bgamma^*$. The outer-loop iteration in Algorithm \ref{FNQL} converges to NE of game $\bG$ linearly.

\section{ASYNCHRONOUS FRACTIONAL MULTI-AGENT DRL NETWORKS} \label{a_networks}
As shown in Fig. \ref{fig:MADRL}, for each mobile device, our algorithm contains RNN-D3QN and RNN-PPO networks for discrete and continuous action spaces as well as interactions between agents and history information.
\subsection{RNN-D3QN Module}
D3QN is a modified version of deep Q-learning \cite{mnih2015human} equipped with double DQN technique with target Q to overcome overestimation \cite{van2016deep} and dueling DQN technique which separately estimates value function and advantage function \cite{wang2016dueling}. Our RNN-D3QN architecture is inspired by DRQN \cite{hausknecht2015deep} which allows networks to retain context and memory from history events and environment states. The key idea is to learn a mapping from states in state space $S^O$ to the Q-value of actions in action space $A^O$. When the policy is learned and updated to the mobile device, it can select offloading actions with the maximum Q-value to maximize the expected long-term reward. Apart from traditional deep Q-learning, it has a target Q-function network to compute the expected long-term reward based on the optimal action chosen with the traditional Q-function network to solve the overestimation problem. Moreover, both the Q-function networks contain the advantage layer and value layer, which are used to estimate the value of the state and the relative advantage of the action. In addition, we take advantage of a GRU module to provide history information of other agents' decisions and environment states.

\subsection{RNN-PPO Module}
RNN-PPO mainly includes two networks: a policy network and a value network. The policy network is responsible for producing actions. It outputs mean and standard deviation vectors for continuous actions and a probability distribution over discrete actions. The value network computes the advantage values and estimates the expected return of the current state. Similar to the RNN-D3QN network, it has an additional GRU module to memorize the state-action pairs from history and other agents to enhance the information contained by the state based on which the value network estimates the state value.

\begin{figure}[t]
	\centering 
 	\includegraphics[width=8.5cm]{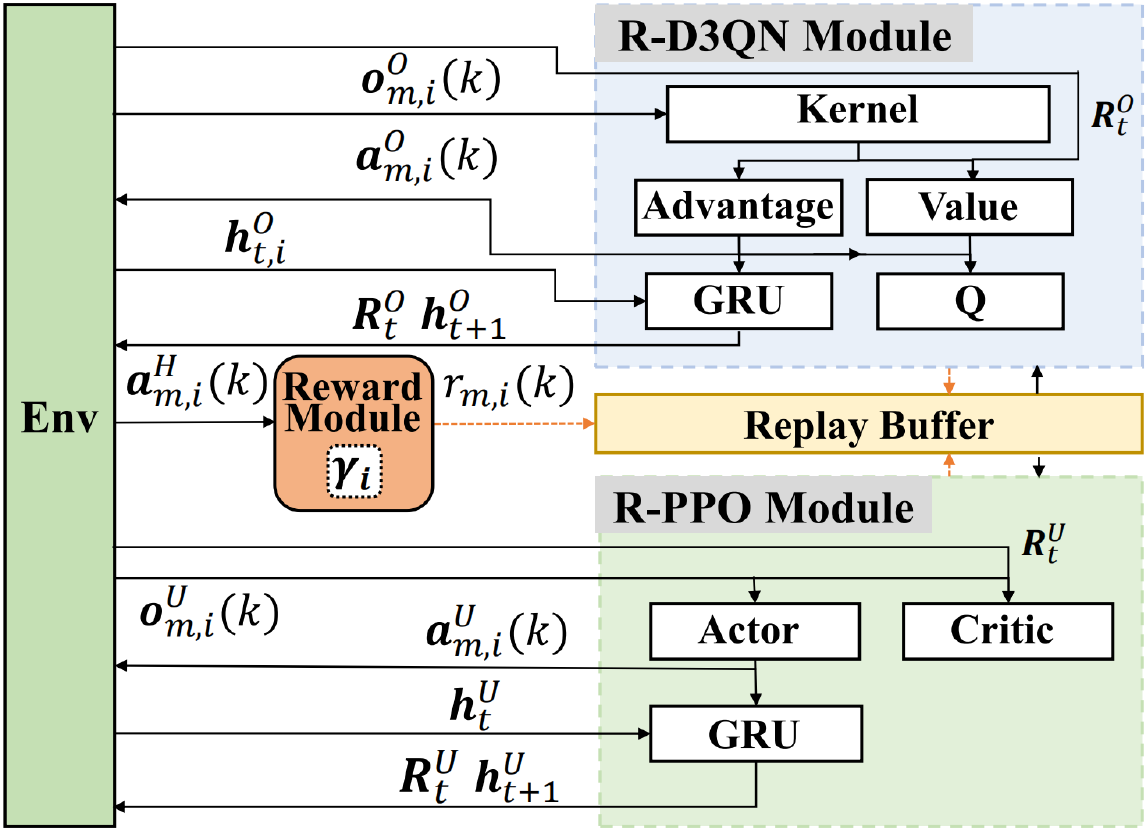}
	\caption{\revbb{Illustration of the proposed RNN-D3QN and RNN-PPO architecture. The GRU-based history encoder is used during centralized training to model asynchronous interaction history, and does not imply an online centralized input during decentralized execution. }}
 \label{fig:MADRL}
\end{figure}


\section{\textbf{Experimental Setup and Hyperparameters} }

\revb{To ensure the reproducibility of our reported results, this appendix provides a comprehensive list of the environment settings, communication model parameters, algorithmic hyperparameters, and training details used in our simulations.}

\subsection{Environment and Task Parameters}
\revb{The default parameters for the Mobile Edge Computing (MEC) environment and the computational tasks are detailed in Table~\ref{tab:env_params}. These settings were used for all experiments unless explicitly varied, as noted in the main text.}

\revb{%
\begin{table}[ht]
\centering
\caption{Default Environment and Task Parameter Settings}
\label{tab:env_params}
\begin{tabular}{@{}ll@{}}
\toprule
\textbf{Parameter}                   & \textbf{Value}                     \\ \midrule
Number of mobile devices             & 20                                 \\
Number of edge nodes                 & 2                                  \\
Capacity of mobile devices           & 2.5 GHz                            \\
Capacity of edge nodes               & 41.8 GHz                           \\
Task size                            & 30 Mbit                            \\
Task density                         & 0.297 gigacycles per Mbit         \\
Drop coefficient ($\bar{Y}$)         & 1.5                                \\ \bottomrule
\end{tabular}
\end{table}
}

\subsection{Communication Model Parameters}
\revb{The general parameters for the wireless communication model, as described in Equation (1), are listed below.}
\revb{%
\begin{itemize}
    \item Transmit power ($p_m$): 20 dBm
    \item Background noise power ($\eta_0$): -114 dBm
    \item Path loss exponent ($\alpha$): 3
\end{itemize}
}
\revb{For the specific experiment involving time-varying channels (Fig.~6), the parameters were set according to Table~\ref{tab:channel_params}.}

\revb{%
\begin{table}[ht]
\centering
\caption{Channel Parameter Settings for Time-Varying Experiments (Fig. 6)}
\label{tab:channel_params}
\begin{tabular}{@{}ll@{}}
\toprule
\textbf{Parameter}              & \textbf{Value}    \\ \midrule
Number of mobile devices        & 10                \\
Transmit power                  & 20 dBm            \\
Background noise power          & -114 dBm          \\
Path loss exponent              & 3                 \\ \bottomrule
\end{tabular}
\end{table}
}

\subsection{Algorithmic Hyperparameters for A.F. MADRL}
\revb{The performance of our proposed Asynchronous Fractional Multi-Agent DRL (A.F. MADRL) algorithm depends on a set of key hyperparameters. Table~\ref{tab:hyperparams} provides the specific values used for our implementation, which are essential for replicating our learning-based results.}

\revb{%
\begin{table}[ht]
\centering
\caption{Hyperparameters for the A.F. MADRL Algorithm}
\label{tab:hyperparams}
\resizebox{\columnwidth}{!}{%
\begin{tabular}{@{}lll@{}}
\toprule
\textbf{Category} & \textbf{Hyperparameter} & \textbf{Value} \\ \midrule
\textbf{General RL} & & \\
 & Discount Factor ($\delta$) & 0.99 \\
 & Replay Buffer Size & 100,000 \\
 & Batch Size & 64 \\
 & Target Network Update Rate ($\tau$) & 0.005 (soft update) \\ \midrule
\textbf{RNN-D3QN (Offloading)} & & \\
 & Learning Rate & 1e-4 (Adam Optimizer) \\
 & $\epsilon$-Greedy Schedule & Linear decay from 1.0 to 0.05 over 500 episodes \\ \midrule
\textbf{RNN-PPO (Updating)} & & \\
 & Actor Learning Rate & 3e-4 (Adam Optimizer) \\
 & Critic Learning Rate & 1e-3 (Adam Optimizer) \\
 & PPO Clipping Parameter ($\epsilon$) & 0.2 \\
 & Entropy Coefficient & 0.01 \\ \midrule
\textbf{Network Architecture} & & \\
 & GRU Hidden Units & 128 \\
 & Fully Connected Layers (Post-GRU) & Two layers with [256, 128] units and ReLU activation \\ \midrule
\textbf{Fractional Module} & & \\
 & $\gamma$ Update Frequency & Every 50 episodes \\ \bottomrule
\end{tabular}%
}
\end{table}
}

\subsection{Training Details}
\revb{All algorithms were implemented using PyTorch. The training was conducted on a server equipped with an AMD EPYC 7763 CPU and an NVIDIA RTX 4090 GPU. To ensure the stability and reliability of our results, we used fixed random seeds for network initialization and environment generation. Each experiment was run for 1000 episodes (or 1500 if convergence was slower), and the final reported results are the average of five independent runs, with the shaded areas in plots representing the standard deviation across these runs.}

\end{document}